\newcolumntype{K}[1]{>{\centering\arraybackslash}p{#1}}
\theoremstyle{plain}
\newtheorem{theorem}{Theorem}[section]
\newtheorem{corollary}[theorem]{Corollary}
\newtheorem{assumption}[theorem]{Assumption}
\newtheorem{lemma}[theorem]{Lemma}
\newtheorem{proposition}[theorem]{Proposition}
\theoremstyle{definition}
\newtheorem{definition}[theorem]{Definition}
\newcommand{\vect}[1]{\ensuremath{\mathbf{#1}}}
\newcommand{\mat}[1]{\ensuremath{\mathbf{#1}}}
\newcommand{\grad}{\nabla}
\newcommand{\pgrad}[1]{ \frac{\partial}{\partial{#1} }}
\newcommand{\Unif}{\textrm{Unif}}
\newcommand{\abs}[1]{\left|{#1}\right|}
\newcommand{\norm}[1]{\left\|{#1}\right\|}
\newcommand{\fnorm}[1]{\|{#1}\|_{\text{F}}}
\newcommand{\infnorm}[1]{\|{#1}\|_\infty}
\newcommand{\sigmin}{\sigma_{\min}}
\newcommand{\tr}{\text{tr}}
\newcommand{\defeq}{\stackrel{\textrm{def}}{=}}
\newcommand{\rank}{\text{rank}}
\newcommand{\trans}{^{\top}}
\newcommand{\poly}{\text{poly}}
\newcommand{\proj}{\mathcal{P}}
\newcommand{\R}{\mathbb{R}}
\newcommand{\Rdd}{\mathbb{R}^{d\times d}}
\newcommand{\E}{\mathbb{E}}
\newcommand{\Var}{\text{Var}}
\newcommand{\A}{\mat{A}}
\newcommand{\B}{\mat{B}}
\newcommand{\C}{\mat{C}}
\renewcommand{\S}{\mat{S}}
\newcommand{\I}{\mat{I}}
\newcommand{\M}{\mat{M}}
\newcommand{\D}{\mat{D}}
\newcommand{\mP}{\mat{P}}
\newcommand{\mQ}{\mat{Q}}
\newcommand{\mR}{\mat{R}}
\newcommand{\U}{\mat{U}}
\newcommand{\V}{\mat{V}}
\newcommand{\W}{\mat{W}}
\newcommand{\X}{\mat{X}}
\newcommand{\Y}{\mat{Y}}
\newcommand{\mSigma}{\mat{\Sigma}}
\newcommand{\e}{\vect{e}}
\renewcommand{\u}{\vect{u}}
\renewcommand{\v}{\vect{v}}
\newcommand{\w}{\vect{w}}
\newcommand{\x}{\vect{x}}
\newcommand{\fE}{\mathfrak{E}}
\newcommand{\fF}{\mathfrak{F}}
\newcommand{\tU}{\tilde{\U}_t}
\newcommand{\tV}{\tilde{\V}_t}
\newcommand{\tX}{\tilde{\X}}
\newcommand{\tY}{\tilde{\Y}}
\newcommand{\tx}{\tilde{\x}}
\newcommand{\cn}{\kappa}
\newcommand{\nn}{\nonumber}
\newcommand{\order}[1]{O(#1)}
\definecolor{WildStrawberry}{RGB}{255,67,164}
\title{Provable Efficient Online Matrix Completion via Non-convex Stochastic Gradient Descent}
\author{Chi Jin\footnote{UC Berkeley. Email: chijin@cs.berkeley.edu} \and Sham M. Kakade\footnote{University of Washington. Email: sham@cs.washington.edu} \and Praneeth Netrapalli\footnote{Microsoft Research New England. Email: praneeth@microsoft.com}}
\begin{document}

\maketitle

\begin{abstract}
Matrix completion, where we wish to recover a low rank matrix by observing a few entries from it, is a widely studied problem in both theory and practice with wide applications. Most of the provable algorithms so far on this problem have been restricted to the offline setting where they provide an estimate of the unknown matrix using all observations simultaneously. However, in many applications, the online version, where we observe one entry at a time and dynamically update our estimate, is more appealing.
While existing algorithms are efficient for the offline setting, they could be highly inefficient for the online setting.

In this paper, we propose the first provable, efficient online algorithm for matrix completion. Our algorithm starts from an initial estimate of the matrix and then performs non-convex stochastic gradient descent (SGD). After every observation, it performs a fast update involving only one row of two tall matrices, giving near linear total runtime. Our algorithm can be naturally used in the offline setting as well, where it gives competitive sample complexity and runtime to state of the art algorithms. Our proofs introduce a general framework to show that SGD updates tend to stay away from saddle surfaces and could be of broader interests for other non-convex problems to prove tight rates.


\end{abstract}


\section{Introduction}
Low rank matrix completion refers to the problem of recovering a low rank matrix by observing the values of only a tiny fraction of its entries. This problem arises in several applications such as video denoising~\cite{ji2010robust}, phase retrieval~\cite{candes2015phase} and most famously in movie recommendation engines~\cite{Koren09}. In the context of recommendation engines for instance, the matrix we wish to recover would be user-item rating matrix where each row corresponds to a user and each column corresponds to an item. Each entry of the matrix is the rating given by a user to an item. Low rank assumption on the matrix is inspired by the intuition that rating of an item by a user depends on only a few hidden factors, which are much fewer than the number of users or items. The goal is to estimate the ratings of all items by users given only partial ratings of items by users, which would then be helpful in recommending new items to users.

The seminal works of Cand{\`e}s and Recht~\cite{CandesR2007} first identified regularity conditions under which low rank matrix completion can be solved in polynomial time using convex relaxation -- low rank matrix completion could be ill-posed and NP-hard in general without such regularity assumptions~\cite{HardtMRW14}. Since then, a number of works have studied various algorithms under different settings for matrix completion: weighted and noisy matrix completion, fast convex solvers, fast iterative non-convex solvers, parallel and distributed algorithms and so on.

Most of this work however deals only with the offline setting where all the observed entries are revealed at once and the recovery procedure does computation using all these observations simultaneously. However in several applications~\cite{davidson2010youtube,1167344}, we encounter the online setting where observations are only revealed sequentially and at each step the recovery algorithm is required to maintain an estimate of the low rank matrix based on the observations so far. Consider for instance recommendation engines, where the low rank matrix we are interested in is the user-item rating matrix. While we make an observation only when a user rates an item, at any point of time, we should have an estimate of the user-item rating matrix based on all prior observations so as to be able to continuously recommend items to users. Moreover, this estimate should get better as we observe more ratings.

Algorithms for offline matrix completion can be used to solve the online version by rerunning the algorithm after every additional observation. However, performing so much computation for every observation seems wasteful and is also impractical. For instance, using alternating minimization, which is among the fastest known algorithms for the offline problem, would mean that we take several passes of the entire data for every additional observation. This is simply not feasible in most settings. Another natural approach is to group observations into batches and do an update only once for each batch. This however induces a lag between observations and estimates which is undesirable. To the best of our knowledge, there is no known \emph{provable, efficient, online algorithm} for matrix completion.

On the other hand, in order to deal with the online matrix completion scenario in practical applications, several heuristics (with no convergence guarantees) have been proposed in literature~\cite{brand2003fast,luo2012incremental}. Most of these approaches are based on starting with an estimate of the matrix and doing fast updates of this estimate whenever a new observation is presented. One of the update procedures used in this context is that of stochastic gradient descent (SGD) applied to the following non-convex optimization problem
\begin{align}
	\min_{\U,\V} \fnorm{\M - \U \V\trans}^2 \quad \mbox{s.t.} \quad \U  \in \R^{d_1\times k},\V \in \R^{d_2\times k},\label{eqn:prob}
\end{align}
where $\M$ is the unknown matrix of size $d_1\times d_2$, $k$ is the rank of $\M$ and $\U\V\trans$ is a low rank factorization of $\M$ we wish to obtain. The algorithm starts with some $\U_0$ and $\V_0$, and given a new observation $(\M)_{ij}$, SGD updates the $i^{\textrm{th}}$-row and the $j^{\textrm{th}}$-row of the current iterates $\U_t$ and $\V_t$ respectively by
\begin{align}
	\U_{t+1}^{(i)} &= \U_{t}^{(i)} - 2 \eta d_1 d_2 \left(\U_t\V_t\trans - \M\right)_{ij} \V_{t}^{(j)}, \mbox{ and}, \nonumber \\
	\V_{t+1}^{(j)} &= \V_{t}^{(j)} - 2 \eta d_1 d_2 \left(\U_t\V_t\trans - \M\right)_{ij} \U_{t}^{(i)}, \label{eqn:SGD}
\end{align}
where $\eta$ is an appropriately chosen stepsize, and $\U^{(i)}$ denote the $i^{\textrm{th}}$ row of matrix $\U$. Note that each update modifies only one row of the factor matrices $\U$ and $\V$, and the computation only involves one row of $\U, \V$ and the new observed entry $(\M)_{ij}$ and hence are extremely fast. These fast updates make SGD extremely appealing in practice. Moreover, SGD, in the context of matrix completion, is also useful for parallelization and distributed implementation~\cite{RechtR2013}.


\subsection{Our Contributions}  In this work we present the first provable efficient algorithm for online matrix completion by showing that SGD~\eqref{eqn:SGD} with a good initialization 
converges to a true factorization of $\M$ at a geometric rate. Our main contributions are as follows. 
\begin{itemize}
	\item	We provide the \emph{first provable, efficient, online} algorithm for matrix completion. Starting with a good initialization, after each observation, the algorithm makes quick updates each taking time $\order{k^3}$ and requires $\order{\mu d k \cn^4 (k+\log \frac{\fnorm{\M}}{\epsilon})\log d}$ observations to reach $\epsilon$ accuracy, where $\mu$ is the incoherence parameter, $d = \max(d_1,d_2)$, $k$ is the rank and $\cn$ is the condition number of $\M$.
	\item	Moreover, our result features both sample complexity and total runtime \emph{linear in $d$}, and is \emph{competitive} to even the best existing offline results for matrix completion.
	(either improve over or is incomparable, i.e., better in some parameters and worse in others, to these results).
	See Table~\ref{tab:comp} for the comparison.
	\item	To obtain our results, we introduce \emph{a general framework to show SGD updates tend to stay away from saddle surfaces}. In order to do so, we consider distances from saddle surfaces, show that they behave like sub-martingales under SGD updates and use martingale convergence techniques to conclude that the iterates stay away from saddle surfaces. While~\cite{Sun2015guaranteed} shows that SGD updates stay away from saddle surfaces, the stepsizes they can handle are quite small 
	(scaling as $1/\poly(d_1,d_2)$), leading to suboptimal computational complexity. Our framework makes it possible to establish the same statement for much larger step sizes, giving us near-optimal runtime. We believe these techniques may be applicable in other non-convex settings as well.
\end{itemize}


\subsection{Related Work}\label{sec:related}
In this section we will mention some more related work.

\textbf{Offline matrix completion}: There has been a lot of work on designing offline algorithms for matrix completion, we provide the detailed comparison with our algorithm in Table \ref{tab:comp}. The nuclear norm relaxation algorithm 
\cite{Recht2009} 
has near-optimal sample complexity for this problem but is computationally expensive. Motivated by the empirical success of non-convex heuristics, a long line of works, 
\cite{keshavan2012efficient,Hardt2014understanding,jain2014fast,Sun2015guaranteed} 
and so on, has obtained convergence guarantees for alternating minimization, gradient descent, projected gradient descent etc. Even the best of these are suboptimal in sample complexity by $\poly(k,\cn)$ factors. Our sample complexity is better than that of~\cite{keshavan2012efficient} and is incomparable to those of~\cite{Hardt2014understanding,jain2014fast}.
To the best of our knowledge, the only provable online algorithm for this problem is that of Sun and Luo \cite{Sun2015guaranteed}.  However the stepsizes they suggest are quite small, leading to suboptimal computational complexity by factors of $\poly(d_1,d_2)$.
The runtime of our algorithm is linear in $d$, which makes $\poly(d)$ improvements over it.



\textbf{Other models for online matrix completion}:
Another variant of online matrix completion studied in the literature is where observations are made on a column by column basis e.g.,~\cite{krishnamurthy2013low,yun2015streaming}. These models can give improved offline performance in terms of space and could potentially work under relaxed regularity conditions. However, they do not tackle the version where only entries (as opposed to columns) are observed.

\textbf{Non-convex optimization}: 
Over the last few years, there has also been a significant amount of work in designing other efficient algorithms for solving non-convex problems. Examples include eigenvector computation~\cite{de2014global,jain2016matching}, sparse coding~\cite{mairal2010online,arora2015simple} etc.
For general non-convex optimization, an interesting line of recent work is that of
\cite{ge2015escaping}, which proves gradient descent with noise can also escape saddle point, but they only provide polynomial rate without explicit dependence. Later
\cite{lee2016gradient,panageas2016gradient} show that without noise, the space of points from where gradient descent converges to a saddle point is a measure zero set. However, they do not provide a rate of convergence. 
Another related piece of work to ours is~\cite{jain2015computing}, proves global convergence along with rates of convergence, for the special case of computing matrix squareroot.
During the preparation of this draft, the recent work~\cite{Ge2016matrix} was announced which proves the global convergence of SGD for matrix completion and can also be applied to the online setting. However, their result only deals with the case where $\M$ is positive semidefinite (PSD) and their rate is still suboptimal by factors of $\poly(d_1,d_2)$.

\begin{table}[t]
	\begin{center}
		{\renewcommand{\arraystretch}{1.3}
		\begin{tabular}{  >{\centering\arraybackslash}m{1.4in} >{\centering\arraybackslash}m{1.4in} >{\centering\arraybackslash}m{1.4in} >{\centering\arraybackslash}m{0.6in} }
			\hline
			\textbf{Algorithm} & \textbf{Sample complexity} & \textbf{Total runtime} & \textbf{Online?} \\ 
			\hline
			Nuclear Norm \cite{Recht2009} & $\widetilde{O}(\mu d k)$ & $\widetilde{O}(d^3/\sqrt{\epsilon})$ & No \\ 
			Alternating minimization \cite{keshavan2012efficient} & $\widetilde{O}(\mu d k \cn^8 \log \frac{1}{\epsilon})$ & $\widetilde{O}(\mu d k^2 \cn^8 \log \frac{1}{\epsilon})$
			& No \\ 
			Alternating minimization \cite{Hardt2014understanding} & $\widetilde{O}\left(\mu d k^2 \cn^2 \left(k + \log \frac{1}{\epsilon}\right)\right)$ & $\widetilde{O}\left(\mu d k^3 \cn^2 \left(k + \log \frac{1}{\epsilon}\right)\right)$
			& No \\ 
			Projected gradient descent\cite{jain2014fast} & $\widetilde{O}(\mu d k^5)$ & $\widetilde{O}(\mu d k^7 \log \frac{1}{\epsilon})$
			& No \\ 
			SGD \cite{Sun2015guaranteed} & $\widetilde{O}(\mu^2d k^7 \cn^6)$ & $\poly(\mu, d, k, \cn)\log\frac{1}{\epsilon}$ & Yes \\
			SGD \cite{Ge2016matrix}\footnotemark  & $d\cdot\poly(\mu, k, \cn)$ & $\poly(\mu, d, k, \cn, \frac{1}{\epsilon})$ & Yes \\
			\textbf{Our result} & $\widetilde{O}\left(\mu d k \cn^4 \left(k+\log \frac{1}{\epsilon}\right)\right)$ & $\widetilde{O}\left(\mu d k^4 \cn^4 \log \frac{1}{\epsilon}\right)$ & Yes \\
		\hline
		\end{tabular}
		}
		\caption{Comparison of sample complexity and runtime of our algorithm with existing  algorithms in order to obtain Frobenius norm error $\epsilon$. $\widetilde{O}(\cdot)$ hides $\log d$ factors. 
		See Section~\ref{sec:related} for more discussion.
		}
		\label{tab:comp}
	\end{center}
\end{table}
\footnotetext{This result only applies to the case where $\M$ is symmetric PSD}


\subsection{Outline}
The rest of the paper is organized as follows. In Section~\ref{sec:prelims} we formally describe the problem and all relevant parameters. In Section~\ref{sec:result}, we present our algorithms, results and some of the key intuition behind our results. In Section~\ref{sec:sketch} we give proof outline for our main results. We conclude in Section~\ref{sec:conclu}.
All formal proofs are deferred to the Appendix.

\section{Preliminaries}\label{sec:prelims}

In this section, we introduce our notation, formally define the matrix completion problem and regularity assumptions that make the problem tractable.

\subsection{Notation}
We use $[d]$ to denote $\{1, 2, \cdots, d\}$. We use bold capital letters $\A, \B$ to denote matrices and bold lowercase letters $\u, \v$ to denote vectors. $\A_{ij}$ means the $(i,j)^{\textrm{th}}$ entry of matrix $\A$. $\norm{\w}$ denotes the $\ell_2$-norm of vector $\w$ and $\norm{\A}$/$\fnorm{\A}$/$\infnorm{\A}$ denotes the spectral/Frobenius/infinity norm of matrix $\A$. $\sigma_i(\A)$ denotes the $i^{\textrm{th}}$ largest singular value of $\A$ and $\sigmin(\A)$ denotes the smallest singular value of $\A$. We also let $\cn(\A) = \norm{\A}/\sigmin(\A)$ denote the condition number of $\A$ (i.e., the ratio of largest to smallest singular value). Finally, for orthonormal bases of a subspace $\W$, we also use $\proj_\W = \W \W\trans$ to denote the projection to the subspace spanned by $\W$.

\subsection{Problem statement and assumptions}
Consider a general rank $k$ matrix $\M \in \R^{d_1 \times d_2}$. Let $\Omega \subset [d_1]\times[d_2]$ be a subset of coordinates, which are sampled uniformly and independently from $[d_1]\times[d_2]$. We denote
$\proj_\Omega(\M)$ to be the projection of $\M$ on set $\Omega$ so that:
\begin{equation*}
[\proj_\Omega(\M)]_{ij}
= \left\{\begin{array}{l l}
\M_{ij}, &\text{if~} (i,j) \in \Omega \\
0, & \text{if~} (i,j) \not \in \Omega
\end{array}\right.
\end{equation*}
Low rank matrix completion is the task of recovering $\M$ by only observing $\proj_\Omega(\M)$. This task is ill-posed and NP-hard in general~\cite{HardtMRW14}. In order to make this tractable, we make by now standard assumptions about the structure of $\M$.
\begin{definition}
Let $\W \in \R^{d\times k}$ be an orthonormal basis of a subspace of $\R^d$ of dimension $k$. The coherence of $\W$ is defined to be
\begin{equation*}
\mu(\W) \defeq \frac{d}{k} \max_{1\le i \le d} \norm{\proj_\W \e_i}^2
= \frac{d}{k} \max_{1\le i \le d}\norm{ \e_i\trans \W}^2
\end{equation*}
\end{definition}
\begin{assumption}[$\mu$-incoherence\cite{CandesR2007,Recht2009}]
We assume $\M$ is $\mu$-incoherent, i.e., 
$\max\{\mu(\X), \mu(\Y)\} \le \mu$, where $\X \in \R^{d_1\times k}, \Y \in \R^{d_2\times k}$ are the left and right singular vectors of $\M$.
\end{assumption}

%
%


\section{Main Results}\label{sec:result}
In this section, we present our main result. We will first state result for a special case where $\M$ is a symmetric positive semi-definite (PSD) matrix, where the algorithm and analysis are much simpler. We will then discuss the general case.

\subsection{Symmetric PSD Case}
Consider the special case where $\M$ is symmetric PSD. We let $d \defeq d_1 = d_2$, and we can parametrize a rank $k$ symmetric PSD matrix by $\U\U\trans$ where $\U \in \R^{d \times k}$. Our algorithm for this case is given in Algorithm \ref{algo:MC_Sym}. The following theorem provides guarantees on the performance of Algorithm~\ref{algo:MC_Sym}. The algorithm starts by using an initial set of samples $\Omega_{\textrm{init}}$ to construct a crude approximation to the low rank of factorization of $\M$. It then observes samples from $\M$ one at a time and updates its factorization after every observation. Note that each update step modifies two rows of $\U_t$ and hence takes time $\order{k}$.
\begin{algorithm}[!ht]
\caption{Online Algorithm for PSD Matrix Completion.}\label{algo:MC_Sym}
\begin{algorithmic}
\renewcommand{\algorithmicrequire}{\textbf{Input: }}
\renewcommand{\algorithmicensure}{\textbf{Output: }}
\REQUIRE Initial set of uniformly random samples $\Omega_{\textrm{init}}$ of a symmetric PSD matrix $\M \in \Rdd$,
learning rate $\eta$, iterations $T$
\ENSURE  $\U$ such that $\U\U\trans \approx \M$
\STATE $\U_0\U_0\trans \leftarrow $ top $k$ SVD of $\frac{d^2}{\abs{\Omega_{\textrm{init}}}}\proj_{\Omega_{\textrm{init}}}(\M)$
\FOR{$t = 0,\cdots,T-1$}
\STATE Observe $\M_{ij}$ where $(i,j) \sim \Unif\left([d]\times [d]\right)$
\STATE $\U_{t+1} \leftarrow \U_t - 2\eta d^2(\U_t\U_t\trans - \M)_{ij} (\e_i\e_j\trans + \e_j\e_i\trans)\U_t$
\ENDFOR
\STATE \textbf{Return} $\U_T$
\end{algorithmic}
\end{algorithm}
%

\begin{theorem} \label{thm:main_Sym}
	Let $\M \in \R^{d\times d}$ be a rank $k$, symmetric PSD matrix with $\mu$-incoherence. There exist some absolute constants $c_0$ and $c$ such that if $\abs{\Omega_{\textrm{init}}} \geq c_0\mu dk^2 \kappa^2(\M) \log d $, learning rate $\eta \leq \frac{c}{\mu dk \cn^3(\M)\norm{\M} \log d}$, then for any fixed $T \geq 1$, with probability at least $1-\frac{T}{d^{10}}$, we will have for all $t\le T$ that:
	\begin{equation*}
	\fnorm{\U_t\U_t\trans - \M }^2 \le \left(1-\frac{1}{2}\eta\cdot\sigmin(\M)\right)^t\left(\frac{1}{10}\sigmin(\M)\right)^2.
	\end{equation*}
\end{theorem}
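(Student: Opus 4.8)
The plan is an induction along the SGD trajectory carrying two invariants for all $t\le T$: (i) the error $f_t\defeq\fnorm{\U_t\U_t\trans-\M}^2$ stays below $R_t\defeq(1-\tfrac12\eta\sigmin(\M))^t(\tfrac1{10}\sigmin(\M))^2$, and (ii) $\U_t$ stays incoherent, say $\max_i\norm{\e_i\trans\U_t}^2\le C\mu k\norm{\M}/d$ for an absolute constant $C>1$ (equivalently, the rows of $\U_t\U_t\trans$ and of $\U_t\U_t\trans-\M$ are uniformly small). For the base case, matrix Bernstein applied to $\widehat\M\defeq\frac{d^2}{|\Omega_{\mathrm{init}}|}\proj_{\Omega_{\mathrm{init}}}(\M)$ — using $\mu$-incoherence, which gives $\infnorm{\M}\le\mu k\norm{\M}/d$ and $\max_i\norm{\e_i\trans\M}^2\le\mu k\norm{\M}^2/d$ — yields $\|\widehat\M-\M\|\le\sigmin(\M)/(30\sqrt{k})$ with probability $\ge1-d^{-10}$ once $|\Omega_{\mathrm{init}}|\ge c_0\mu dk^2\cn^2(\M)\log d$; since $\U_0\U_0\trans-\M$ has rank at most $2k$ this gives $f_0\le2k\|\widehat\M-\M\|^2\le\tfrac14R_0$, and a Davis--Kahan ($\sin\Theta$) argument transfers the incoherence of $\M$ to $\U_0$.

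For the one-step analysis, fix $t$, condition on the invariants through time $t$, and write $\Delta_t=\U_t\U_t\trans-\M$ and $Z_t=d^2(\Delta_t)_{ij}(\e_i\e_j\trans+\e_j\e_i\trans)$, so that $\U_{t+1}=(\I-2\eta Z_t)\U_t$, $\E[Z_t\mid\mathcal{F}_t]=2\Delta_t$, and $\E[(\Delta_t)_{ij}^2\mid\mathcal{F}_t]=f_t/d^2$. Expanding $f_{t+1}=\fnorm{(\I-2\eta Z_t)\U_t\U_t\trans(\I-2\eta Z_t)-\M}^2$: the first-order term has conditional expectation $-16\eta\fnorm{\Delta_t\U_t}^2$, the second-order (variance) terms are $O(\eta^2\mu dk\norm{\M}^2f_t)$ by invariant (ii) (the key estimate being $\E\fnorm{Z_t\U_t\U_t\trans}^2\le2d^2\max_l\norm{(\U_t\U_t\trans)_{l,:}}^2 f_t$ together with $\norm{(\U_t\U_t\trans)_{l,:}}^2\le\norm{\e_l\trans\U_t}^2\norm{\U_t}^2$), and higher-order terms are smaller still. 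A restricted-strong-convexity-type local geometry lemma — valid in the neighborhood $f_t\le(\tfrac1{10}\sigmin(\M))^2$ of the solution manifold $\{\X\mSigma^{1/2}\mR:\mR\mR\trans=\I\}$, proved via the parametrization $\U_t=\X\mSigma^{1/2}\mR_t+\mat{E}_t$ with $\mR_t$ the optimal rotation and $\mat{E}_t$ small — shows $\fnorm{\Delta_t\U_t}^2\ge c'\sigmin(\M)f_t$. Since the step size $\eta\le c/(\mu dk\cn^3(\M)\norm{\M}\log d)$ makes the variance term $\ll\eta\sigmin(\M)f_t$, this yields $\E[f_{t+1}\mid\mathcal{F}_t]\le(1-\eta\sigmin(\M))f_t$; hence, on the good event, $f_t/R_t$ is a supermartingale contracting geometrically, with room to spare against the target rate $1-\tfrac12\eta\sigmin(\M)$.

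Two martingale concentration arguments close the induction. For the error, the spare contraction factor together with the fact that $f_{t+1}$ is a function of the single random index $(i,j)$ with conditional variance bounded as above lets a Freedman/Bernstein-type inequality keep $f_t\le R_t$ for all $t\le T$ except with probability $O(T/d^{11})$. The delicate part is invariant (ii): write the update as its conditional mean $-4\eta\Delta_t\U_t$ — a deterministic gradient step — plus a mean-zero martingale difference supported on rows $i,j$. The gradient step preserves the incoherent neighborhood — in the parametrization above, the transverse component $(\I-\proj_\X)\U_t$ contracts by a factor $1-\Theta(\eta\sigmin(\M))$ each step, while the on-manifold part lies in the column space of $\X$ and hence inherits the incoherence of $\M$ — and the accumulated noise is a vector-valued martingale whose effect on each row is controlled by Azuma, where crucially the same contraction of the transverse direction damps the noise so that it does not accumulate as a free random walk over all $T$ steps. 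A union bound over $i\in[d]$ and $t\le T$ then gives overall failure probability $O(T/d^{10})$, and on the complementary event the invariants hold throughout, so $f_t\le R_t$ for all $t$, as claimed. I expect the coupled control of (i) and (ii) to be the main obstacle, together with the fact that it must be carried out with a step size $\eta$ as large as $1/(d\cdot\poly(\mu,k,\cn,\log d))$ rather than the much smaller $1/\poly(d_1,d_2)$ of prior work: invariant (ii) is exactly what tames the otherwise order-$d$ variance of the single-entry SGD update in the contraction step, while the contraction (hence $\U_t$ staying near the incoherent solution manifold) is what keeps the incoherence fluctuation a genuinely small martingale increment — and doing this for large $\eta$ is precisely the saddle-surface-avoidance mechanism the paper emphasizes.
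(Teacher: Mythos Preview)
Your plan matches the paper's: initialize via matrix Bernstein plus a $\sin\Theta$ bound, establish the local PL-type inequality $\fnorm{(\U\U\trans-\M)\U}^2\gtrsim\sigmin(\M)\,f(\U)$ in the region $f(\U)\le(\tfrac1{10}\sigmin(\M))^2$, then run two coupled supermartingales (one for $f_t$, one per row for incoherence) and finish with Bernstein concentration and a union bound over $t\le T$ and $i\in[d]$.

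The one place your sketch differs from, and is rougher than, the paper is the incoherence argument. You split $\U_t=\proj_\X\U_t+\proj_{\X_\perp}\U_t$, bound the first part via the incoherence of $\X$, and argue the transverse part contracts under the mean gradient step while the row-sparse SGD noise is damped by that contraction. But the SGD increment is row-sparse only in $\U_t$, not in $\proj_{\X_\perp}\U_t$ (the off-diagonals of $\proj_{\X_\perp}$ feed every sample into every transverse row), whereas the contraction you invoke lives in the transverse part, not directly in $\norm{\e_l\trans\U_t}$; so the two halves of your argument do not quite mesh as written. The paper sidesteps this by working with $g_l(\U)=\norm{\e_l\trans\U}^2$ throughout: from the identity $\e_l\trans[\I-8\eta(\U\U\trans-\M)]\U=\e_l\trans\U(\I-8\eta\U\trans\U)+8\eta\,\e_l\trans\M\U$ one reads off directly that $\E[g_l(\U_{t+1})\mid\fF_t]\le(1-4\eta/\cn)\,g_l(\U_t)+O(\eta\mu k\cn\norm{\M}/d)$, so the shifted, rescaled process $(1-4\eta/\cn)^{-t}\bigl(g_l(\U_t)-O(\mu k\cn^2\norm{\M}/d)\bigr)$ is a supermartingale with genuinely row-sparse increments and no projection in sight. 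Either route can be made to work, but the paper's is cleaner. One small correction: your incoherence invariant must carry a condition-number factor---already $\norm{\e_i\trans\U_0}^2\lesssim\mu k\,\cn(\M)\norm{\M}/d$ from $\sigmin(\U_0\trans\U_0)\approx\sigmin(\M)$, and the paper propagates the bound $20\mu k\,\cn^2(\M)\norm{\M}/d$, not an absolute-constant multiple of $\mu k\norm{\M}/d$.
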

\textbf{Remarks}:
\begin{itemize}
	\item	The algorithm uses an initial set of observations $\Omega_{\textrm{init}}$ to produce a warm start iterate $\U_0$, then enters the online stage, where it performs SGD.
	\item	The sample complexity of the warm start phase is $\order{\mu dk^2 \kappa^2(\M) \log d }$. The initialization consists of a top-$k$ SVD on a sparse matrix, whose runtime is $\order{\mu dk^3 \kappa^2(\M) \log d }$.
	\item	For the online phase (SGD), if we choose $\eta = \frac{c}{\mu dk \cn^3(\M)\norm{\M} \log d}$, the number of observations $T$ required for the error $\fnorm{\U_T \U_T \trans - \M}$ to be smaller than $\epsilon$ is $\order{\mu d k \cn(\M)^4 \log d \log \frac{\sigmin(\M)}{\epsilon}}$.
	\item	Since each SGD step modifies two rows of $\U_t$, its runtime is $\order{k}$ with a total runtime for online phase of $\order{kT}$.
\end{itemize}
Our proof approach is to essentially show that the objective function is well-behaved (i.e., is smooth and strongly convex) in a local neighborhood of the warm start region, and then use standard techniques to show that SGD obtains geometric convergence in this setting. The most challenging and novel part of our analysis comprises of showing that the iterate does not leave this local neighborhood while performing SGD updates. Refer Section~\ref{sec:sketch} for more details on the proof outline.


\subsection{General Case} \label{sec:Asym_main}
Let us now consider the general case where $\M \in \R^{d_1\times d_2}$ can be factorized as $\U\V\trans$ with $\U\in \R^{d_1\times k}$ and $\V \in \R^{d_2 \times k}$. In this scenario, we denote $d = \max \{d_1, d_2\}$. We recall our remarks from the previous section that our analysis of the performance of SGD depends on the smoothness and strong convexity properties of the objective function in a local neighborhood of the iterates. Having $\U \neq \V$ introduces additional challenges in this approach since for any nonsingular $k$-by-$k$ matrix $\C$, and $\U' \defeq \U \C\trans, \V' \defeq \V \C^{-1}$, we have $\U'\V'{}\trans = \U \V \trans$. Suppose for instance $\C$ is a very small scalar times the identity i.e., $\C = \delta \I$ for some small $\delta > 0$. In this case, $\U'$ will be large while $\V'$ will be small. This drastically deteriorates the smoothness and strong convexity properties of the objective function in a neighborhood of $(\U',\V')$.

\begin{algorithm}[!ht]
\caption{Online Algorithm for Matrix Completion (Theoretical)}\label{algo:MC_Asym}
\begin{algorithmic}
\renewcommand{\algorithmicrequire}{\textbf{Input: }}
\renewcommand{\algorithmicensure}{\textbf{Output: }}
\REQUIRE Initial set of uniformly random samples $\Omega_{\textrm{init}}$ of $\M \in \R^{d_1\times d_2}$,
learning rate $\eta$, iterations $T$
\ENSURE  $\U, \V$ such that $\U\V\trans \approx \M$
\STATE $\U_0\V_0\trans \leftarrow $ top $k$ SVD of $\frac{d_1 d_2}{\abs{\Omega_{\textrm{init}}}}\proj_{\Omega_{\textrm{init}}}(\M)$
\FOR{$t = 0,\cdots,T-1$}
\STATE $\W_U \D \W_V \trans \leftarrow \text{SVD}(\U_{t}\V_{t}\trans)$
\STATE $\tU \leftarrow \W_U \D^{\frac{1}{2}}, \quad \tV \leftarrow \W_V \D^{\frac{1}{2}}$
\STATE Observe $\M_{ij}$ where $(i,j) \sim \Unif\left([d]\times [d]\right)$
\STATE $\U_{t+1} \leftarrow \tU - 2\eta d_1 d_2(\tU\tV\trans - \M)_{ij} \vect{e}_i\vect{e}_j\trans\tV$
\STATE $\V_{t+1} \leftarrow \tV - 2\eta d_1 d_2(\tU\tV\trans - \M)_{ij} \vect{e}_j\vect{e}_i\trans\tU
$
\ENDFOR
\STATE \textbf{Return} $\U_T, \V_T$.
\end{algorithmic}
\end{algorithm}
To preclude such a scenario, we would ideally like to renormalize after each step by doing $\tU \leftarrow \W_U \D^{\frac{1}{2}}, \tV \leftarrow \W_V \D^{\frac{1}{2}}$, where $\W_U \D \W_V \trans$ is the SVD of matrix $\U_{t}\V_{t}\trans$. This algorithm is described in Algorithm~\ref{algo:MC_Asym}. However, a naive implementation of Algorithm~\ref{algo:MC_Asym}, especially the SVD step, would incur $O(\min\{d_1, d_2\})$ computation per iteration, resulting in a runtime overhead of $\order{d}$ over both the online PSD case (i.e., Algorithm~\ref{algo:MC_Sym}) as well as the near linear time offline algorithms (see Table~\ref{tab:comp}). It turns out that we can take advantage of the fact that in each iteration we only update a single row of $\U_t$ and a single row of $\V_t$, and do efficient (but more complicated) update steps instead of doing an SVD on $d_1\times d_2$ matrix. The resulting algorithm is given in Algorithm \ref{algo:MC_Asym_Prac}. The key idea is that in order to implement the updates, it suffices to do an SVD of $\U_t\trans \U_t$ and $\V_t \trans \V_t$ which are $k \times k$ matrices. So the runtime of each iteration is at most $O(k^3)$. The following lemma shows the equivalence between Algorithms~\ref{algo:MC_Asym} and~\ref{algo:MC_Asym_Prac}.
\begin{algorithm}[!ht]
\caption{Online Algorithm for Matrix Completion (Practical)}\label{algo:MC_Asym_Prac}
\begin{algorithmic}
\renewcommand{\algorithmicrequire}{\textbf{Input: }}
\renewcommand{\algorithmicensure}{\textbf{Output: }}
\REQUIRE Initial set of uniformly random samples $\Omega_{\textrm{init}}$ of $\M \in \R^{d_1\times d_2}$,
learning rate $\eta$, iterations $T$
\ENSURE  $\U, \V$ such that $\U\V\trans \approx \M$
\STATE $\U_0\V_0\trans \leftarrow $ top $k$ SVD of $\frac{d_1 d_2}{\Omega_{\textrm{init}}}\proj_{\Omega_{\textrm{init}}}(\M)$
\FOR{$t = 0,\cdots,T-1$}
\STATE $\mR_U \D_U \mR_U \trans \leftarrow \text{SVD}(\U_{t}\trans\U_{t})$
\STATE $\mR_V \D_V \mR_V \trans \leftarrow \text{SVD}(\V_{t}\trans\V_{t})$
\STATE $\mQ_U \D \mQ_V \trans \leftarrow \text{SVD}(\D_U^\frac{1}{2} \mR_U\trans \mR_V (\D_V^{\frac{1}{2}})\trans)$
\STATE Observe $\M_{ij}$ where $(i,j) \sim \Unif\left([d]\times [d]\right)$
\STATE $\U_{t+1} \leftarrow \U_t - 2\eta d_1 d_2(\U_{t}\V_{t}\trans - \M)_{ij} \vect{e}_i\vect{e}_j\trans\V_{t}\mR_V\D_V^{-\frac{1}{2}} \mQ_V \mQ_U\trans \D_U^{\frac{1}{2}} \mR_U\trans$
\STATE $\V_{t+1} \leftarrow \V_t - 2\eta d_1 d_2(\U_{t}\V_{t}\trans - \M)_{ij} \vect{e}_j\vect{e}_i\trans\U_{t}
\mR_U\D_U^{-\frac{1}{2}} \mQ_U \mQ_V\trans \D_V^{\frac{1}{2}} \mR_V\trans$
\ENDFOR
\STATE \textbf{Return} $\U_T, \V_T$.
\end{algorithmic}
\end{algorithm}

\begin{lemma} \label{lem:equiv}
	Algorithm \ref{algo:MC_Asym} and Algorithm \ref{algo:MC_Asym_Prac} are equivalent in the sense that: given same observations from $\M$ and other inputs, the outputs of Algorithm~\ref{algo:MC_Asym}, $\U, \V$ and those of Algorithm~\ref{algo:MC_Asym_Prac}, $\U', \V'$ satisfy $\U\V\trans = \U'\V'{}\trans$.
\end{lemma}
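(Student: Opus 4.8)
The plan is to prove the stronger, step-by-step statement: writing $\U_t,\V_t$ for the iterates of Algorithm~\ref{algo:MC_Asym} and $\U_t',\V_t'$ for those of Algorithm~\ref{algo:MC_Asym_Prac} when both are run on the same observation sequence, we have $\U_t\V_t\trans = \U_t'\V_t'{}\trans$ for every $t$, which I would establish by induction on $t$. The base case $t=0$ is immediate because both algorithms set their initial product to the same top-$k$ SVD of $\frac{d_1 d_2}{\abs{\Omega_{\textrm{init}}}}\proj_{\Omega_{\textrm{init}}}(\M)$. For the inductive step I would fix $t$, write $\mat{N} \defeq \U_t\V_t\trans = \U_t'\V_t'{}\trans$, and show that one step of each algorithm maps $\mat{N}$ together with the sampled coordinate $(i,j)$ to the same new product. (Throughout, and as guaranteed in the regime of the main theorems, we assume the iterates have full column rank $k$ so that every SVD and inverse square root in Algorithm~\ref{algo:MC_Asym_Prac} is well-defined; in particular $\mat{N}$ then has rank $k$.)

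For Algorithm~\ref{algo:MC_Asym}: after the renormalization we have $\tU\tV\trans = \W_U\D\W_V\trans = \mat{N}$, so the scalar multiplier in the update is $c \defeq 2\eta d_1 d_2 (\mat{N}-\M)_{ij}$, and moreover $\tU\tU\trans = \W_U\D\W_U\trans = (\mat{N}\mat{N}\trans)^{1/2}$, $\tV\tV\trans = (\mat{N}\trans\mat{N})^{1/2}$, and $\tV\tU\trans = \mat{N}\trans$. Crucially, these three identities are independent of which SVD of $\mat{N}$ is used (e.g.\ $\W_U\D\W_U\trans$ is the unique PSD square root of $\mat{N}\mat{N}\trans$). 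Expanding the two rank-one updates then gives
\[
\U_{t+1}\V_{t+1}\trans \;=\; \mat{N} \;-\; c\,(\mat{N}\mat{N}\trans)^{1/2}\e_i\e_j\trans \;-\; c\,\e_i\e_j\trans(\mat{N}\trans\mat{N})^{1/2} \;+\; c^2(\mat{N})_{ij}\,\e_i\e_j\trans ,
\]
an expression depending only on $\mat{N}$, $(i,j)$, and $c$.

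For Algorithm~\ref{algo:MC_Asym_Prac}: set $\mP_U \defeq \U_t'\mR_U\D_U^{-1/2}$ and $\mP_V \defeq \V_t'\mR_V\D_V^{-1/2}$, which have orthonormal columns by the SVDs $\U_t'{}\trans\U_t' = \mR_U\D_U\mR_U\trans$, $\V_t'{}\trans\V_t' = \mR_V\D_V\mR_V\trans$, so that $\U_t' = \mP_U\D_U^{1/2}\mR_U\trans$ and $\V_t' = \mP_V\D_V^{1/2}\mR_V\trans$. The one identity doing all the work is that, by definition of the third SVD in the algorithm, $\D_U^{1/2}\mR_U\trans\mR_V\D_V^{1/2} = \mQ_U\D\mQ_V\trans$; in particular $\mat{N} = \U_t'\V_t'{}\trans = (\mP_U\mQ_U)\,\D\,(\mP_V\mQ_V)\trans$ is itself a valid SVD of $\mat{N}$, so $(\mat{N}\mat{N}\trans)^{1/2} = \mP_U\mQ_U\D\mQ_U\trans\mP_U\trans$ and $(\mat{N}\trans\mat{N})^{1/2} = \mP_V\mQ_V\D\mQ_V\trans\mP_V\trans$. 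Denoting the $k\times k$ correction factors by $\B \defeq \mR_V\D_V^{-1/2}\mQ_V\mQ_U\trans\D_U^{1/2}\mR_U\trans$ (the one in the $\U$-update) and $\C \defeq \mR_U\D_U^{-1/2}\mQ_U\mQ_V\trans\D_V^{1/2}\mR_V\trans$ (the one in the $\V$-update), I would then verify, using only $\mR_U\trans\mR_U = \mR_V\trans\mR_V = \mQ_U\trans\mQ_U = \mQ_V\trans\mQ_V = \I$ and the displayed identity, the three cancellations
\[
\U_t'\C\trans\U_t'{}\trans = (\mat{N}\mat{N}\trans)^{1/2}, \qquad \V_t'\B\V_t'{}\trans = (\mat{N}\trans\mat{N})^{1/2}, \qquad \V_t'\B\C\trans\U_t'{}\trans = \mat{N}\trans .
\]
Since $\U_t'\V_t'{}\trans = \mat{N}$, the scalar multiplier here is again $c$, so expanding $\U_{t+1}'\V_{t+1}'{}\trans = (\U_t' - c\,\e_i\e_j\trans\V_t'\B)(\V_t' - c\,\e_j\e_i\trans\U_t'\C)\trans$ and substituting the three cancellations reproduces verbatim the displayed formula for $\U_{t+1}\V_{t+1}\trans$. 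This closes the induction; taking $t = T$ gives the lemma.

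I expect the only real work to be the verification of the three cancellations in the last paragraph --- carefully tracking the orthogonal and diagonal factors through $\B$ and $\C$ so that everything collapses to the right PSD square roots and to $\mat{N}\trans$. There is no conceptual obstacle: the correction factors of Algorithm~\ref{algo:MC_Asym_Prac} are engineered precisely so that, using only $k\times k$ SVDs, they replicate the effect of the $d_1\times d_2$ SVD-based renormalization in Algorithm~\ref{algo:MC_Asym}, with the identity $\D_U^{1/2}\mR_U\trans\mR_V\D_V^{1/2} = \mQ_U\D\mQ_V\trans$ serving as the bridge between the small and large SVDs.
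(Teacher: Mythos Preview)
Your argument is correct and is essentially the same induction as the paper's, just packaged differently. The paper introduces the $k\times k$ change-of-basis matrices $\mP'_U = \mR'_U\D'_U{}^{-1/2}\mQ'_U\D'{}^{1/2}$ and $\mP'_V = \mR'_V\D'_V{}^{-1/2}\mQ'_V\D'{}^{1/2}$, observes that $\tU = \U'_t\mP'_U$, $\tV = \V'_t\mP'_V$ give a valid renormalization, and then uses the single identity $\mP'_U\mP'_V{}\trans = \I$ (equivalently, your $\B$ and $\C$ are mutual inverses) to factor the product and match it to Algorithm~\ref{algo:MC_Asym_Prac}'s update in one line, without ever expanding into the four terms. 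Your version instead expands both products into the same explicit function of $\mat{N}$ via the three cancellations; this is slightly longer but has the advantage of making the SVD non-uniqueness issue transparent, since you phrase everything through the invariants $(\mat{N}\mat{N}\trans)^{1/2}$, $(\mat{N}\trans\mat{N})^{1/2}$, $\mat{N}\trans$ rather than through a particular choice of $\W_U,\W_V$. Both routes hinge on the same bridge identity $\D_U^{1/2}\mR_U\trans\mR_V\D_V^{1/2} = \mQ_U\D\mQ_V\trans$.
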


Since the output of both algorithms is the same, we can analyze Algorithm~\ref{algo:MC_Asym} (which is easier than that of Algorithm~\ref{algo:MC_Asym_Prac}), while implementing Algorithm~\ref{algo:MC_Asym_Prac} in practice.
The following theorem is the main result of our paper which presents guarantees on the performance of Algorithm~\ref{algo:MC_Asym}.
%
\begin{theorem} \label{thm:main_Asym}
	Let $\M \in \R^{d_1\times d_2}$ be a rank $k$ matrix with $\mu$-incoherence and let $d \defeq \max(d_1,d_2)$. There exist some absolute constants $c_0$ and $c$ such that if $\abs{\Omega_{\textrm{init}}} \geq c_0\mu dk^2 \kappa^2(\M) \log d$, learning rate $\eta \leq \frac{c}{\mu dk \cn^3(\M)\norm{\M} \log d}$, then for any fixed $T \geq 1$, with probability at least $1-\frac{T}{d^{10}}$, we will have for all $t\le T$ that:
	\begin{equation*}
	\fnorm{\U_t\V_t\trans - \M }^2 \le \left(1-\frac{1}{2}\eta\cdot\sigmin(\M)\right)^t\left(\frac{1}{10}\sigmin(\M)\right)^2.
	\end{equation*}
\end{theorem}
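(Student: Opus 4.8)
The plan is to run the standard ``warm start $+$ benign local geometry $\Rightarrow$ geometric SGD'' argument, in which essentially all of the difficulty is in certifying that the iterates never leave the benign region. By Lemma~\ref{lem:equiv} we may analyze Algorithm~\ref{algo:MC_Asym}, whose rebalancing step is the key device: at every iteration $\tilde{\U}_t\trans\tilde{\U}_t=\tilde{\V}_t\trans\tilde{\V}_t=\D_t$, which neutralizes the scaling ambiguity $(\U,\V)\mapsto(\U\C\trans,\V\C^{-1})$ of Section~\ref{sec:Asym_main} and lets us track the single scalar potential $f_t\defeq\fnorm{\U_t\V_t\trans-\M}^2=\fnorm{\tilde{\U}_t\tilde{\V}_t\trans-\M}^2$ (rebalancing leaves $\U_t\V_t\trans$ unchanged). \emph{Initialization.} A standard sampling / matrix-Bernstein bound shows that $\abs{\Omega_{\textrm{init}}}\ge c_0\mu dk^2\cn^2(\M)\log d$ uniform entries make $\frac{d_1d_2}{\abs{\Omega_{\textrm{init}}}}\proj_{\Omega_{\textrm{init}}}(\M)$ spectrally within $\order{\sigmin(\M)/\poly(k,\cn(\M))}$ of $\M$; by Weyl and Davis--Kahan, the rank-$k$ SVD $\U_0\V_0\trans$ then satisfies $\fnorm{\U_0\V_0\trans-\M}\le\frac1{20}\sigmin(\M)$ and the balanced factors $\tilde{\U}_0,\tilde{\V}_0$ are $\order{\mu}$-row-incoherent. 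This places the first iterate in the region $\mathcal{B}$ defined by (i) $f\le(\frac1{10}\sigmin(\M))^2$ and (ii) $\max_i\norm{\tilde{\U}^{(i)}}^2\le C\mu k\norm{\M}/d_1$ and $\max_j\norm{\tilde{\V}^{(j)}}^2\le C\mu k\norm{\M}/d_2$. On $\mathcal{B}$, (i) alone forces $\norm{\tilde{\U}}^2=\norm{\tilde{\V}}^2=\norm{\U\V\trans}\le\norm{\M}+\sigmin(\M)$ and $\sigma_k(\tilde{\U})^2=\sigma_k(\U\V\trans)\ge\frac9{10}\sigmin(\M)$, so the factors are automatically $\order{\cn(\M)}$-conditioned there.

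\emph{One-step contraction in $\mathcal{B}$.} The $d_1d_2$ rescaling makes the update, after rebalancing, an unbiased stochastic gradient step for $F(\U,\V)\defeq\frac12\fnorm{\M-\U\V\trans}^2$. Restricted strong convexity / smoothness of $F$ near the balanced true factorization $(\X\mSigma^{1/2},\Y\mSigma^{1/2})$ --- valid for balanced, $\order{\cn(\M)}$-conditioned, sufficiently close iterates, which is exactly what $\mathcal{B}$ guarantees --- yields $\E[f_{t+1}\mid\mathcal{F}_t]\le(1-\eta\sigmin(\M))f_t+(\text{variance})$. The variance is where (ii) enters: $\E_{(i,j)}[(d_1d_2)^2(\tilde{\U}_t\tilde{\V}_t\trans-\M)_{ij}^2\norm{\tilde{\V}_t^{(j)}}^2]=d_1d_2\sum_{i,j}(\tilde{\U}_t\tilde{\V}_t\trans-\M)_{ij}^2\norm{\tilde{\V}_t^{(j)}}^2\le d_1d_2(\max_j\norm{\tilde{\V}_t^{(j)}}^2)f_t\le Cd_1\mu k\norm{\M}f_t$, and symmetrically for the $\V$-update, so the variance contributes $\order{\eta^2 d\mu k\norm{\M}^2}f_t$, which is at most $\frac14\eta\sigmin(\M)f_t$ once $\eta\le c/(\mu dk\cn^3(\M)\norm{\M}\log d)$. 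Hence on $\mathcal{B}$ we get $\E[f_{t+1}\mid\mathcal{F}_t]\le(1-\frac34\eta\sigmin(\M))f_t$.

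\emph{Staying in $\mathcal{B}$ --- the crux.} Let $\tau\defeq\min\{t:(\U_t,\V_t)\notin\mathcal{B}\}$. If $\Pr[\tau\le T]\le T/d^{10}$, then $f_{t\wedge\tau}(1-\frac12\eta\sigmin(\M))^{-(t\wedge\tau)}$ is a nonnegative supermartingale by the previous step, and on $\{\tau>T\}$ it stays below its initial value $(\frac1{10}\sigmin(\M))^2$, which is the claim. Two failure modes must be excluded. For (i): each update perturbs only a single row of $\tilde{\U}_t$ and of $\tilde{\V}_t$, so the increments $f_{t+1}-f_t$ are bounded and have bounded conditional variance on $\mathcal{B}$, and a Freedman-type (martingale Bernstein) bound applied to the supermartingale above gives $\Pr[\max_{t\le T}f_{t\wedge\tau}>(\frac1{10}\sigmin(\M))^2]\le T/d^{11}$ with the stated $\eta$. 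For (ii): for each coordinate $i$ track $g_t^{(i)}\defeq\norm{\tilde{\U}_t^{(i)}}^2$ --- a function of $\U_t\V_t\trans$ through the rebalancing, namely the $(i,i)$-entry of $[(\U_t\V_t\trans)(\U_t\V_t\trans)\trans]^{1/2}$ --- and analogously for $\V$; using $\mu$-incoherence of $\M$, the bound on $\infnorm{\U_t\V_t\trans-\M}$ implied by $\mathcal{B}$, and a matrix-square-root perturbation estimate (so that rebalancing does not redistribute row mass), one shows that whenever $g_t^{(i)}$ exceeds $C'\mu k\norm{\M}/d_1$ with $C'<C$ its conditional drift is strictly negative --- equivalently, the gap to the forbidden level $C\mu k\norm{\M}/d_1$ behaves like a submartingale --- again with bounded increments and conditional variance, so Freedman plus a union bound over the $d_1+d_2$ coordinates and $t\le T$ keeps all of them within (ii) except with probability $T/d^{11}$. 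Combining, $\Pr[\tau\le T]\le T/d^{10}$.

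\emph{Main obstacle.} The only hard step is ``staying in $\mathcal{B}$'', and it is where the paper's framework is needed: a naive union bound over the $T$ iterations, using a worst-case per-step increase of $f_t$ or of the row norms, loses $\poly(d)$ factors and forces $\eta=1/\poly(d)$, as in \cite{Sun2015guaranteed}. The remedy is to isolate the right ``defect'' potentials (excess Frobenius error; excess row incoherence), prove each drifts favorably past its equilibrium while \emph{simultaneously} having bounded increments and bounded predictable quadratic variation throughout $\mathcal{B}$, and then invoke a Freedman/Bernstein martingale tail bound --- the $\log d$ factor in $\eta$ is precisely what makes the union bound over all rows and times affordable. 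Beyond the PSD analysis of Theorem~\ref{thm:main_Sym}, the only genuinely asymmetric ingredient is carrying these bounds through the SVD-rebalancing via that matrix-square-root perturbation estimate; once balancing is in hand, the two proofs run in parallel.
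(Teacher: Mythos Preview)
Your high-level architecture --- warm start via matrix Bernstein, local smoothness/restricted strong convexity on a benign region $\mathcal{B}$, supermartingale for the Frobenius error, and Freedman/Bernstein concentration together with per-coordinate incoherence potentials to certify that the iterates never leave $\mathcal{B}$ --- is exactly the paper's strategy (Lemma~\ref{lem:initial_Asym}, Lemmas~\ref{lem:smoothness_g}--\ref{lem:localguarantee_g}, and Theorem~\ref{thm:localcontrol_Asym}). The $\log d$ in $\eta$ paying for the union bound, and the self-bounding variance $\propto f_t$ driving linear convergence, also match.

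The one substantive divergence is your choice of incoherence potential. You track $g_t^{(i)}=\norm{\tilde{\U}_t^{(i)}}^2=\e_i\trans\bigl[(\U_t\V_t\trans)(\U_t\V_t\trans)\trans\bigr]^{1/2}\e_i$, the row norm of the \emph{rebalanced factor}, and you correctly note that carrying this through an SGD step followed by an SVD-rebalance requires a matrix-square-root perturbation estimate, which you flag but do not supply. The paper sidesteps this entirely by tracking instead $g_i(\U,\V)=\norm{\e_i\trans\U\V\trans}^2$ and $h_j(\U,\V)=\norm{\e_j\trans\V\U\trans}^2$, the row/column norms of the \emph{product}. These are invariant under rebalancing by construction (since $\U_t\V_t\trans=\tilde{\U}_t\tilde{\V}_t\trans$), so the update $g_i(\U_{t+1},\V_{t+1})=\norm{\e_i\trans(\tilde{\U}_t+\Delta_\U)(\tilde{\V}_t+\Delta_\V)\trans}^2$ is just a quartic polynomial in $(\Delta_\U,\Delta_\V)$ that can be expanded and bounded term by term (Proposition~\ref{prop:G}), with no square-root in sight. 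The paper then exploits the identity $\norm{\e_l\trans\tilde{\U}\tilde{\V}\trans}=\norm{\e_l\trans\tilde{\U}\tilde{\U}\trans}$ (from $\tilde{\U}\trans\tilde{\U}=\tilde{\V}\trans\tilde{\V}=\D$) to make the first-order drift term come out negative. So your ``only genuinely asymmetric ingredient'' is in fact avoidable; the paper's rebalancing-invariant potentials are the cleaner choice, and what you identified as the main remaining obstacle is precisely what the paper's potential eliminates.
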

\textbf{Remarks}:
\begin{itemize}
	\item	Just as in the case of PSD matrix completion (Theorem~\ref{thm:main_Sym}), Algorithm~\ref{algo:MC_Asym} needs a an initial set of observations $\Omega_{\textrm{init}}$ to provide a warm start $\U_0$ and $\V_0$ after which it performs SGD.
	\item	The sample complexity and runtime of the warm start phase are the same as in symmetric PSD case. The stepsize $\eta$ and the number of observations $T$ to achieve $\epsilon$ error in online phase (SGD) are also the same as in symmetric PSD case.
	\item	However, runtime of each update step in online phase is $\order{k^3}$ with total runtime for online phase $\order{k^3 T}$.
\end{itemize} 
The proof of this theorem again follows a similar line of reasoning as that of Theorem~\ref{thm:main_Sym} by first showing that the local neighborhood of warm start iterate has good smoothness and strong convexity properties and then use them to show geometric convergence of SGD. Proof of the fact that iterates do not move away from this local neighborhood however is significantly more challenging due to renormalization steps in the algorithm. Please see Appendix~\ref{sec:proof_Asym} for the full proof.

\section{Proof Sketch}\label{sec:sketch}

In this section we will provide the intuition and proof sketch for our main results. For simplicity and highlighting the most essential ideas, we will mostly focus on the symmetric PSD case (Theorem \ref{thm:main_Sym}). For the asymmetric case, though the high-level ideas are still valid, a lot of additional effort is required to address the renormalization step in Algorithm \ref{algo:MC_Asym}. This makes the proof more involved.

First, note that our algorithm for the PSD case consists of an initialization and then stochastic descent steps. The following lemma provides guarantees on the error achieved by the initial iterate $\U_0$.
\begin{lemma} \label{lem:initial_Sym}
Let $\M \in \R^{d\times d}$ be a rank-$k$ PSD matrix with $\mu$-incoherence. There exists a constant $c_0$ such that if $\abs{\Omega_{\textrm{init}}} \geq c_0\mu dk^2 \kappa^2(\M) \log d$, then with probability at least $1-\frac{1}{d^{10}}$, the top-$k$ SVD of $\frac{d^2}{\abs{\textrm{init}}}\proj_{\Omega_{\textrm{init}}}(\M)$ satisfies
Then there exists universal constant $c_0$, for any
$m \ge $,  we have:
\begin{equation}\label{initial_cond_Sym}
\fnorm{\M - \U_0\U_0\trans} \le \frac{1}{20}\sigmin(\M) \quad \text{~and~} \quad \max_j \norm{\e_j\trans \U_0}^2 \le \frac{ 10\mu k \cn(\M)}{d}\norm{\M}
\end{equation}
\end{lemma}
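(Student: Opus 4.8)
Set $\hat\M \defeq \frac{d^2}{\abs{\Omega_{\textrm{init}}}}\proj_{\Omega_{\textrm{init}}}(\M)$ (symmetrized if $\Omega_{\textrm{init}}$ is not sampled symmetrically), write $\M = \X\mSigma\X\trans$ for the eigendecomposition with $\X\in\R^{d\times k}$ and $\mSigma\succ 0$, and let $\W_0\mLambda\W_0\trans$ denote the top-$k$ eigendecomposition of $\hat\M$, so that $\U_0 = \W_0\mLambda^{1/2}$. Two consequences of $\mu$-incoherence will be used throughout: $\infnorm{\M}\le\frac{\mu k}{d}\norm{\M}$ and $\max_i\norm{\e_i\trans\M}^2\le\frac{\mu k}{d}\norm{\M}^2$. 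I would first establish a spectral concentration bound by matrix Bernstein: writing $\hat\M = \sum_\ell Z_\ell$ with the $Z_\ell\defeq\frac{d^2}{\abs{\Omega_{\textrm{init}}}}\M_{i_\ell j_\ell}\e_{i_\ell}\e_{j_\ell}\trans$ independent and $\E\hat\M = \M$, the two incoherence bounds control the boundedness parameter ($\norm{Z_\ell-\E Z_\ell}\lesssim\frac{\mu k}{\abs{\Omega_{\textrm{init}}}}\norm{\M}$) and the variance parameter ($\norm{\sum_\ell\E(Z_\ell-\E Z_\ell)(Z_\ell-\E Z_\ell)\trans}\lesssim\frac{\mu kd}{\abs{\Omega_{\textrm{init}}}}\norm{\M}^2$), so matrix Bernstein yields, with probability $\ge 1-d^{-11}$, $\norm{\hat\M-\M}\le\delta$ with $\delta\lesssim\sqrt{\mu kd\log d/\abs{\Omega_{\textrm{init}}}}\cdot\norm{\M}$; with $c_0$ a large enough absolute constant this gives $\delta\le\frac{1}{200\sqrt k}\sigmin(\M)$, and Weyl's inequality then forces $\mLambda\succeq\frac12\sigmin(\M)\I$ and a $k$-th eigengap of $\hat\M$ of size at least $\frac12\sigmin(\M)$. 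The Frobenius bound is then immediate: $\U_0\U_0\trans$ is a best rank-$k$ approximation of $\hat\M$ and $\rank\M\le k$, so $\norm{\U_0\U_0\trans-\M}\le 2\delta$, and since $\U_0\U_0\trans-\M$ has rank at most $2k$, $\fnorm{\U_0\U_0\trans-\M}\le\sqrt{2k}\cdot 2\delta\le\frac1{20}\sigmin(\M)$.

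For the incoherence bound I would start from the eigen-equation $\hat\M\W_0 = \W_0\mLambda$, which gives $\U_0 = \hat\M\W_0\mLambda^{-1/2}$ and hence
\[
\e_j\trans\U_0 \;=\; \e_j\trans\M\,\W_0\mLambda^{-1/2}\;+\;\e_j\trans(\hat\M-\M)\,\W_0\mLambda^{-1/2}.
\]
The first term is deterministic given the previous step: $\norm{\e_j\trans\M\,\W_0\mLambda^{-1/2}}^2\le\norm{\e_j\trans\M}^2\norm{\mLambda^{-1}}\le\frac{\mu k}{d}\norm{\M}^2\cdot\frac{2}{\sigmin(\M)} = \frac{2\mu k\,\cn(\M)}{d}\norm{\M}$, already of the required order. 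For the second term the trivial bound $\norm{\e_j\trans(\hat\M-\M)\W_0}\le\norm{\hat\M-\M}=\delta$ is far too lossy — it effectively asks for $\abs{\Omega_{\textrm{init}}}\gtrsim d^2$ — and the difficulty is precisely that $\W_0$ is an intricate function of the very samples over which we must take expectations. The plan is to decouple: form the leave-one-out matrix $\hat\M^{(j)}$ by resampling row $j$ and column $j$ of $\hat\M$, let $\W_0^{(j)}$ be its top-$k$ eigenspace, and split $\e_j\trans(\hat\M-\M)\W_0 = \e_j\trans(\hat\M-\M)\W_0^{(j)} + \e_j\trans(\hat\M-\M)(\W_0-\W_0^{(j)})$. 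In the first piece $\e_j\trans(\hat\M-\M)$ depends only on the samples in row $j$ while $\W_0^{(j)}$ is independent of them, so conditionally it is a sum of independent mean-zero vectors and a conditional vector Bernstein bound gives $\norm{\e_j\trans(\hat\M-\M)\W_0^{(j)}}\lesssim\sqrt{\mu kd\log d/\abs{\Omega_{\textrm{init}}}}\cdot\sqrt{\beta_W}\,\norm{\M}$, where $\beta_W$ is the squared row-norm incoherence of $\W_0^{(j)}$ (comparable, via the eigengap bound, to that of $\W_0$ and hence to $\beta\defeq\max_\ell\norm{\e_\ell\trans\U_0}^2$). The second piece is at most $\norm{\e_j\trans(\hat\M-\M)}$ times $\norm{\W_0-\W_0^{(j)}}$, and the latter is controlled by Davis--Kahan applied to the perturbation $\hat\M-\hat\M^{(j)}$, which is structured — supported on one row and one column — and hence has small spectral norm. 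Substituting back, using $\norm{\mLambda^{-1/2}}^2\le 2/\sigmin(\M)$, and collecting terms should produce a self-improving inequality of the form $\beta\le\frac{C\mu k\,\cn(\M)}{d}\norm{\M}+\frac{C}{c_0 k}\,\beta$; with $c_0$ large this yields $\beta\le\frac{10\mu k\,\cn(\M)}{d}\norm{\M}$, and a union bound over $j\in[d]$ (together with the $d^{-11}$ above) gives the overall $1-d^{-10}$.

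The routine ingredients (matrix Bernstein, Weyl, Eckart--Young, and the rank-to-Frobenius passage) are straightforward. The real work, and the step I expect to be the main obstacle, is the incoherence bound on $\U_0$: the operator-norm perturbation estimate is too weak, and the genuine subtlety is decoupling the data-dependent eigenspace $\W_0$ from the sampling randomness while only paying the $\log d$ slack already present in $\abs{\Omega_{\textrm{init}}}$. Within that, the most delicate estimate is bounding the leave-one-out perturbation $\norm{\W_0-\W_0^{(j)}}$ against the eigengap, which is only of order $\sigmin(\M)$, and then closing the self-improving bound on $\beta$.
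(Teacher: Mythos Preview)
Your Frobenius-norm argument (matrix Bernstein for $\norm{\hat\M-\M}$, Weyl, and the rank-$2k$ passage) is exactly the paper's. For incoherence you take a genuinely different route. The paper never uses leave-one-out: following Keshavan, it bounds the row norms of the \emph{error} $\U_0\U_0\trans-\M$ via
\[
\e_j\trans(\U_0\U_0\trans-\M)=\e_j\trans\M(\W_0\W_0\trans-\I)+\e_j\trans(\hat\M-\M)\W_0\W_0\trans,
\]
controls the first piece by the principal-angle bound already implied by the Frobenius estimate, and for the second exploits the factorization $\M_{ji}=(\mSigma^{1/2}\X\trans\e_j)\trans(\mSigma^{1/2}\X\trans\e_i)$ to pull out a deterministic $\sqrt{\mu k/d}$ prefactor before applying Bernstein to a $k\times k$ matrix sum whose variance involves $\max_i\norm{\e_i\trans\W_0}$. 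Bounding that quantity back by $\cn\bigl(\sqrt{\mu k/d}+\max_i\norm{\e_i\trans(\U_0\U_0\trans-\M)}\bigr)$ closes a self-improving inequality on the row norms of the error; the final step $\norm{\e_j\trans\U_0}^2\le\norm{\e_j\trans\U_0\U_0\trans}^2/\sigmin(\U_0\trans\U_0)$ is what produces the extra $\cn$ in the statement. The paper's route avoids any Davis--Kahan analysis of $\W_0-\W_0^{(j)}$ altogether, trading it for the $\X\mSigma^{1/2}$ factorization trick; your route is more explicit about the data-dependence of $\W_0$, which the paper's Bernstein step quietly treats as fixed.

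One caution on your plan: the crude product bound $\norm{\e_j\trans(\hat\M-\M)}\cdot\norm{\W_0-\W_0^{(j)}}$ for the cross term does not close the recursion. Each factor is of order $\sqrt{d\mu k\log d/\abs{\Omega_{\textrm{init}}}}$ (the second after dividing by the eigengap $\sim\sigmin(\M)$), so after multiplying by $\norm{\mLambda^{-1/2}}$ and squaring, the contribution to $\beta$ is of order $\norm{\M}/(c_0^2 k^2\cn)$; this is neither $O(\mu k\cn\norm{\M}/d)$ nor $o(\beta)$ unless $d\lesssim\mu k^3\cn^2$. The standard leave-one-out closure instead bounds the aligned difference $\U_0-\U_0^{(j)}R$ using that $\hat\M-\hat\M^{(j)}$ acts through $\e_j$, which reintroduces a factor of $\norm{\e_j\trans\U_0^{(j)}}$ (hence $\beta$) rather than a bare $\norm{\M}$. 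So the cross term needs a sharper argument than the one you sketch, though the overall leave-one-out strategy is sound.
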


By Lemma~\ref{lem:initial_Sym}, we know the initialization algorithm already gives $\U_0$ in the local region given by Eq.\eqref{initial_cond_Sym}. Intuitively, stochastic descent steps should keep doing local search within this local region.

To establish linear convergence on $\fnorm{\U_t\U_t\trans - \M}^2$ and obtain final result, we first establish several important lemmas describing the properties of this local regions. Throughout this section, we always denote $\text{SVD}(\M) = \X\S\X\trans$, where $\X\in \R^{d\times k}$, and diagnal matrix $\S\in \R^{k\times k}$. We postpone all the formal proofs in Appendix.

\begin{lemma} \label{lem:smoothness_sketch}
For function $f(\U) = \fnorm{\M-\U\U\trans}^2$ and any $\U_1, \U_2 \in \{\U | \norm{\U} \le \Gamma\}$, we have:
\begin{equation*}
\fnorm{\nabla f(\U_1) - \nabla f(\U_2)} \le 16 \max\{\Gamma^2, \norm{\M}\} \cdot \fnorm{\U_1 - \U_2}
\end{equation*}
\end{lemma}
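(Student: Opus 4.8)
The plan is to compute the gradient $\nabla f(\U) = 4(\U\U\trans - \M)\U$ and bound the Frobenius norm of the difference $\nabla f(\U_1) - \nabla f(\U_2)$ by splitting it into pieces that are each controlled by $\fnorm{\U_1 - \U_2}$. First I would write
\begin{equation*}
\tfrac14\left(\nabla f(\U_1) - \nabla f(\U_2)\right) = \U_1\U_1\trans\U_1 - \U_2\U_2\trans\U_2 - \M(\U_1 - \U_2),
\end{equation*}
and handle the linear term immediately: $\fnorm{\M(\U_1-\U_2)} \le \norm{\M}\fnorm{\U_1-\U_2}$. For the cubic term I would telescope, e.g.
\begin{equation*}
\U_1\U_1\trans\U_1 - \U_2\U_2\trans\U_2 = (\U_1-\U_2)\U_1\trans\U_1 + \U_2(\U_1-\U_2)\trans\U_1 + \U_2\U_2\trans(\U_1-\U_2),
\end{equation*}
and bound each of the three summands in Frobenius norm using the submultiplicativity inequality $\fnorm{\A\B}\le\norm{\A}\fnorm{\B}$ (or $\fnorm{\A}\norm{\B}$) together with the constraint $\norm{\U_i}\le\Gamma$. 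Each summand is at most $\Gamma^2\fnorm{\U_1-\U_2}$, giving $3\Gamma^2\fnorm{\U_1-\U_2}$ for the cubic part.

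Collecting terms yields $\fnorm{\nabla f(\U_1)-\nabla f(\U_2)} \le 4(3\Gamma^2 + \norm{\M})\fnorm{\U_1-\U_2} \le 16\max\{\Gamma^2,\norm{\M}\}\fnorm{\U_1-\U_2}$, since $3\Gamma^2 + \norm{\M} \le 4\max\{\Gamma^2,\norm{\M}\}$. This is exactly the claimed bound, so no slack is wasted.

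There is no real obstacle here — the only thing to be careful about is choosing the telescoping decomposition so that in every term one can peel off operator-norm factors bounded by $\Gamma$ (never a Frobenius norm of a product of two unbounded matrices), and making sure the constant bookkeeping lands on $16$ rather than something larger. The symmetry of $\U\U\trans$ is not needed; only the norm bound $\norm{\U_i}\le\Gamma$ and standard norm inequalities are used.
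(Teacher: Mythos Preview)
Your proposal is correct and essentially identical to the paper's proof: the paper also computes $\nabla f(\U)=4(\U\U\trans-\M)\U$, splits off the $\M(\U_1-\U_2)$ term, telescopes the cubic difference into three summands each bounded by $\Gamma^2\fnorm{\U_1-\U_2}$, and collects $12\Gamma^2+4\norm{\M}\le 16\max\{\Gamma^2,\norm{\M}\}$. The only cosmetic difference is the order of the telescoping (the paper writes $\U_1\U_1\trans(\U_1-\U_2)+\U_1(\U_1-\U_2)\trans\U_2+(\U_1-\U_2)\U_2\trans\U_2$), which is immaterial.
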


\begin{lemma} \label{lem:pseudostronglyconvex_sketch}
For function $f(\U) = \fnorm{\M-\U\U\trans}^2$ and any $\U \in \{\U | \sigmin(\X\trans\U) \ge \gamma \}$, we have:
\begin{equation*}
\|\nabla f(\U)\|^2_F \ge 4 \gamma^2 f(\U)
\end{equation*}
\end{lemma}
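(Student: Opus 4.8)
The plan is to differentiate $f$ and reduce the claim to a Frobenius-norm inequality, then prove that inequality by decomposing everything along the column space of $\M$. A direct computation gives $\grad f(\U) = 4(\U\U\trans-\M)\U$, so writing $\Delta \defeq \U\U\trans - \M$ we have $f(\U) = \fnorm{\Delta}^2$ and $\|\grad f(\U)\|_F^2 = 16\fnorm{\Delta\U}^2$; thus $\|\grad f(\U)\|_F^2 \ge 4\gamma^2 f(\U)$ is equivalent to $\fnorm{\Delta\U}^2 \ge \frac{\gamma^2}{4}\fnorm{\Delta}^2$. To prove this I would use $\mathrm{SVD}(\M) = \X\S\X\trans$, let $\X^\perp \in \R^{d\times(d-k)}$ complete $\X$ to an orthonormal basis, and set $A \defeq \X\trans\U$, $B \defeq (\X^\perp)\trans\U$ and $N \defeq \U\trans\U = A\trans A + B\trans B$, so that $\U = \X A + \X^\perp B$ and the hypothesis $\sigmin(\X\trans\U)\ge\gamma$ reads $\sigmin(A)\ge\gamma$, i.e. $A\trans A\succeq\gamma^2\I$. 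Working in the orthonormal basis $[\,\X\mid\X^\perp\,]$ and using $\X\trans\M = \S\X\trans$, $(\X^\perp)\trans\M = 0$, one obtains the two identities
\[
\fnorm{\Delta}^2 = \fnorm{AA\trans - \S}^2 + 2\fnorm{AB\trans}^2 + \fnorm{BB\trans}^2, \qquad \fnorm{\Delta\U}^2 = \fnorm{AN - \S A}^2 + \fnorm{BN}^2 .
\]

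The heart of the argument is to bound the two terms on the right by the three terms on the left. For the first term, from $AN - \S A = (AA\trans - \S)A + AB\trans B$ and $\fnorm{(AA\trans-\S)A}\ge\sigmin(A)\fnorm{AA\trans-\S}\ge\gamma\fnorm{AA\trans-\S}$, the triangle inequality gives $\gamma\fnorm{AA\trans-\S}\le\fnorm{AN-\S A}+\fnorm{AB\trans B}$; squaring and rearranging — the resulting inequality being trivially true in the case where the quantity squared is negative — yields, unconditionally,
\[
\fnorm{AN - \S A}^2 \ge \tfrac{\gamma^2}{2}\fnorm{AA\trans - \S}^2 - \fnorm{AB\trans B}^2 .
\]
For the second term, expanding $N^2 = (A\trans A)^2 + A\trans A\, B\trans B + B\trans B\, A\trans A + (B\trans B)^2$, taking $\tr(B\trans B\, N^2)$, and using the identity $\tr\!\big(A\trans A (B\trans B)^2\big) = \fnorm{AB\trans B}^2$, one gets
\[
\fnorm{BN}^2 = \tr\!\big((A\trans A)^2 B\trans B\big) + 2\fnorm{AB\trans B}^2 + \tr\!\big((B\trans B)^3\big) .
\]
The key point is that $\fnorm{AB\trans B}^2$ appears here with coefficient $2$, which absorbs the $-\fnorm{AB\trans B}^2$ from the previous display with a surplus $+\fnorm{AB\trans B}^2$ to spare.

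Adding the last three displays gives
\[
\fnorm{\Delta\U}^2 \ge \tfrac{\gamma^2}{2}\fnorm{AA\trans - \S}^2 + \fnorm{AB\trans B}^2 + \tr\!\big((A\trans A)^2 B\trans B\big) + \tr\!\big((B\trans B)^3\big),
\]
and it remains only to match the three target terms using $A\trans A\succeq\gamma^2\I$. From $(A\trans A)^2\succeq\gamma^2 A\trans A$ we get $\tr\!\big((A\trans A)^2 B\trans B\big)\ge\gamma^2\tr(A\trans A\, B\trans B) = \gamma^2\fnorm{AB\trans}^2$; from $A\trans A\succeq\gamma^2\I$ we get $\fnorm{AB\trans B}^2 = \tr\!\big(A\trans A (B\trans B)^2\big)\ge\gamma^2\fnorm{BB\trans}^2$; also $\tr\!\big((B\trans B)^3\big)\ge 0$ and $\tfrac{\gamma^2}{2}\fnorm{AA\trans - \S}^2\ge\tfrac{\gamma^2}{4}\fnorm{AA\trans - \S}^2$. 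Summing these four bounds yields $\fnorm{\Delta\U}^2\ge\tfrac{\gamma^2}{4}\big(\fnorm{AA\trans - \S}^2 + 2\fnorm{AB\trans}^2 + \fnorm{BB\trans}^2\big) = \tfrac{\gamma^2}{4}\fnorm{\Delta}^2$, which completes the proof. I expect the only step needing genuine care is the unconditional squaring inequality; everything else is a short chain of positive-semidefinite trace inequalities, and the real work is simply in setting up the block decomposition so that the cross term $\fnorm{AB\trans B}^2$ cancels cleanly — so I do not foresee a serious obstacle.
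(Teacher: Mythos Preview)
Your proof is correct but follows a genuinely different decomposition from the paper. The paper projects $\Delta\U$ onto $\W$ and $\W_\perp$, where $\W$ is the column span of $\U$ (equivalently of $\U\U\trans$): since $\proj_{\W_\perp}\U\U\trans = 0$, the off-block reduces to $\fnorm{\proj_{\W_\perp}\M\U}^2$, which is bounded below by $\sigmin^2(\X\trans\U)\fnorm{\proj_{\W_\perp}\M}^2$; the on-block uses $\sigmin(\U\trans\U)\ge\sigmin^2(\X\trans\U)$. A single factor of $4$ is then spent to split $\fnorm{\proj_{\W_\perp}\M}^2$ into the three remaining block norms, giving the constant $4\gamma^2$ directly with no triangle inequality or squaring trick needed. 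You instead decompose along $\X$, the column span of $\M$, which is in some sense the ``dual'' choice: it makes $\fnorm{\Delta}^2$ split cleanly but forces you to handle the cross term $AB\trans B$ inside $AN-\S A$ via a triangle-inequality-then-square argument, recovered only because the same quantity reappears with coefficient $2$ in the expansion of $\fnorm{BN}^2$. Both routes land on exactly the same constant; the paper's choice of projecting onto the column space of $\U$ rather than $\M$ avoids the delicate cancellation you needed and is a few lines shorter, whereas your block-coordinate calculation is more explicit and arguably easier to verify term by term.
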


Lemma \ref{lem:smoothness_sketch} tells function $f$ is smooth if spectral norm of $\U$ is not very large. On the other hand, $\sigmin(\X\trans\U)$ not too small requires both $\sigmin(\U\trans\U)$ and $\sigmin(\X\trans\W)$ are not too small, where $\W$ is top-k eigenspace of $\U\U\trans$. That is,
Lemma \ref{lem:pseudostronglyconvex_sketch} tells function $f$ has a property similar to strongly convex in standard optimization literature, if $\U$ is rank k in a robust sense ($\sigma_k(\U)$ is not too small), and the angle between the top k eigenspace of $\U\U\trans$ and the top k eigenspace $\M$ is not large.

\begin{lemma} \label{lem:localguarantee_sketch}
Within the region $\mathcal{D} = \{\U | \norm{\M - \U\U\trans}_F \le \frac{1}{10}\sigma_{k}(\M)\}$, 
we have:
\begin{equation*}
\norm{\U} \le \sqrt{2\norm{\M}}, \quad\quad \sigmin(\X\trans \U) \ge \sqrt{\sigma_k(\M)/2}
\end{equation*}
\end{lemma}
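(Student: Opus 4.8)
The plan is to derive both inequalities from the single observation that, throughout $\mathcal{D}$, the spectral-norm error is controlled: since $\norm{\cdot}\le\fnorm{\cdot}$ and $\sigma_k(\M)\le\norm{\M}$,
\begin{equation*}
\norm{\M - \U\U\trans} \le \fnorm{\M - \U\U\trans} \le \tfrac{1}{10}\sigma_k(\M) \le \tfrac{1}{10}\norm{\M}.
\end{equation*}
For the first bound I would use $\norm{\U}^2 = \norm{\U\U\trans}$ together with the triangle inequality to get $\norm{\U\U\trans} \le \norm{\M} + \norm{\M - \U\U\trans} \le \tfrac{11}{10}\norm{\M} \le 2\norm{\M}$, hence $\norm{\U} \le \sqrt{2\norm{\M}}$.

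For the lower bound on $\sigmin(\X\trans\U)$, the first step is to note that, since smallest singular values are unchanged by transposition, $\sigmin(\X\trans\U)^2 = \lambda_{\min}\!\big((\X\trans\U)(\X\trans\U)\trans\big) = \lambda_{\min}(\X\trans\U\U\trans\X)$. Because $\X$ has orthonormal columns and $\M = \X\S\X\trans$, we have $\X\trans\M\X = \S$ and thus $\lambda_{\min}(\X\trans\M\X) = \sigma_k(\M)$. Writing $\X\trans\U\U\trans\X = \X\trans\M\X - \X\trans(\M - \U\U\trans)\X$ and applying Weyl's inequality (together with $\norm{\X}=1$),
\begin{equation*}
\lambda_{\min}(\X\trans\U\U\trans\X) \ge \sigma_k(\M) - \norm{\X\trans(\M - \U\U\trans)\X} \ge \sigma_k(\M) - \norm{\M - \U\U\trans} \ge \tfrac{9}{10}\sigma_k(\M).
\end{equation*}
Taking square roots gives $\sigmin(\X\trans\U) \ge \sqrt{\tfrac{9}{10}\sigma_k(\M)} \ge \sqrt{\sigma_k(\M)/2}$.

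I do not expect a serious obstacle: this lemma is a routine perturbation argument, and the only point requiring a moment's care is the identity $\sigmin(\X\trans\U)^2 = \lambda_{\min}(\X\trans\U\U\trans\X)$, i.e.\ passing from the singular values of the $k\times k$ matrix $\X\trans\U$ to the eigenvalues of its Gram matrix; after that everything reduces to the two facts $\norm{\cdot}\le\fnorm{\cdot}$ and Weyl's inequality.
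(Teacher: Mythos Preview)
Your proof is correct. The first inequality is handled exactly as in the paper. For $\sigmin(\X\trans\U)$, your route is different from the paper's and in fact more direct: you pass straight to the $k\times k$ PSD matrix $\X\trans\U\U\trans\X$, compare it to $\X\trans\M\X=\S$ via Weyl, and read off $\sigmin^2(\X\trans\U)\ge\tfrac{9}{10}\sigma_k(\M)$. The paper instead factors through the column space $\W$ of $\U$: it first shows $\sigmin(\U\trans\U)\ge\tfrac{9}{10}\sigma_k(\M)$, then bounds the principal angle between $\X$ and $\W$ by estimating $\norm{\X_\perp\trans\W}^2\le\tfrac{1}{9}$ (hence $\sigmin^2(\X\trans\W)\ge\tfrac{8}{9}$), and finally combines these via $\sigmin^2(\X\trans\U)\ge\sigmin^2(\X\trans\W)\,\sigmin(\U\trans\U)$. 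Your argument avoids the principal-angle detour entirely and even yields the sharper constant $\tfrac{9}{10}$ versus the paper's $\tfrac{4}{5}$; the paper's decomposition, on the other hand, exposes separately that $\U$ is well-conditioned and that its column space is close to that of $\M$, which is conceptually informative even if not needed for the stated bound.
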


Lemma \ref{lem:localguarantee_sketch} tells inside region $\{\U | \norm{\M - \U\U\trans}_F \le \frac{1}{10}\sigma_{k}(\M)\}$, matrix $\U$ always has a good spectral property which gives preconditions for both Lemma \ref{lem:smoothness_sketch} and \ref{lem:pseudostronglyconvex_sketch}, where $f(\U)$ is both smooth and has a property very similar to strongly convex.

With above three lemmas, we already been able to see the intuition behind linear convergence in Theorem \ref{thm:main_Sym}. Denote stochastic gradient 
\begin{equation}\label{SG_main}
SG(\U) = 2 d^2(\U\U\trans - \M)_{ij} (\e_i\e_j\trans + \e_j\e_i\trans)\U
\end{equation}
where $SG(\U)$ is a random matrix depends on the randomness of sample $(i, j)$ of matrix $\M$. Then, the stochastic update step in Algorithm \ref{algo:MC_Sym} can be rewritten as:
\begin{equation*} 
\U_{t+1} \leftarrow \U_t - \eta SG(\U_t)
\end{equation*}
Let $f(\U)=\fnorm{\M - \U\U\trans}^2$, By easy caculation, we know $\E SG(\U) = \grad f(\U)$, that is $SG(\U)$ is unbiased. 
Combine Lemma \ref{lem:localguarantee_sketch} with Lemma \ref{lem:smoothness_sketch} and Lemma \ref{lem:pseudostronglyconvex_sketch}, we know within region $\mathcal{D}$ specified by Lemma \ref{lem:localguarantee_sketch}, 
we have function $f(\U)$ is $32\norm{\M}$-smooth, and $\|\nabla f(\U)\|^2_F \ge 2 \sigma_{\min}(\M) f(\U)$.

Let's suppose ideally, we always have $\U_0, \ldots, \U_t$ inside region $\mathcal{D}$, this directly gives:
\begin{align*}
\E f(\U_{t+1}) &\le \E f(\U_t) -\eta \E \langle\grad f(\U_t), SG(\U_t)\rangle
+ 16\eta^2 \norm{\M} \cdot \E \fnorm{SG(\U_t)}^2 \\
&= \E f(\U_t) -\eta \E \fnorm{\grad f(\U_t)}^2
+ 16\eta^2 \norm{\M} \cdot \E \fnorm{SG(\U_t)}^2 \\
&\le (1-2\eta\sigma_{\min}(\M)) \E f(\U_t) + 16\eta^2 \norm{\M} \cdot \E \fnorm{SG(\U_t)}^2 
\end{align*}
One interesting aspect of our main result is that we actually show linear convergence under the presence of noise in gradient. This is true because for the second-order ($\eta^2$) term above, we can roughly see from Eq.\eqref{SG_main} that $\fnorm{SG(\U)}^2 \le h(\U) \cdot f(\U)$, where $h(\U)$ is a factor depends on $\U$ and always bounded. That is, $SG(\U)$ enjoys self-bounded property --- $\fnorm{SG(\U)}^2$ will goes to zero, as objective function $f(\U)$ goes to zero. Therefore, by choosing learning rate $\eta$ appropriately small, we can have the first-order term always dominate the second-order term, which establish the linear convergence.

Now, the only remaining issue is to prove that ``$\U_0, \ldots, \U_t$ always stay inside local region $\mathcal{D}$''. In reality, we can only prove this statement with high probability due to the stochastic nature of the update.
This is also the most challenging part in our proof, which makes our analysis different from standard convex analysis, and uniquely required due to non-convex setting.

Our key theorem is presented as follows:
\begin{theorem} \label{thm:localcontrol_sketch}
Let $f(\U) = \norm{\U\U\trans - \M}_F^2$ and $g_i(\U) = \norm{\e_i\trans \U}^2$. 
Suppose initial $\U_0$ satisfying:
\begin{equation*}
f(\U_0) \le \left(\frac{\sigma_{\min}(\M)}{20 }\right)^2, \quad\quad \max_i g_i(\U_0) \le \frac{10\mu k \cn(\M)^2}{d}\norm{\M}
\end{equation*}
Then, there exist some absolute constant $c$ such that for any learning rate $\eta < \frac{c}{\mu dk \cn^3(\M)\norm{\M} \log d}$,
with at least $1-\frac{T}{d^{10}}$ probability, we will have for all $t\le T$ that:
\begin{equation} \label{cond_localcontrol}
f(\U_t) \le (1-\frac{1}{2}\eta\sigma_{\min}(\M))^t\left(\frac{\sigma_{\min}(\M)}{10}\right)^2, \quad\quad \max_i g_i(\U_t) \le \frac{20\mu k \cn(\M)^2}{d}\norm{\M}
\end{equation}

\end{theorem}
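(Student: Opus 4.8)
The plan is a stopping-time argument combined with martingale concentration. Let $\tau$ be the first time $t$ at which at least one of the two inequalities in \eqref{cond_localcontrol} fails (with $\tau \defeq T$ if neither fails before $T$); it then suffices to show $\Pr[\tau < T] \le T/d^{10}$. The point of working up to $\tau$ is that for every $t < \tau$ we know $f(\U_t) \le (\sigma_{\min}(\M)/10)^2$, so $\U_t$ lies in the region $\mathcal{D}$ of Lemma~\ref{lem:localguarantee_sketch}; hence by Lemmas~\ref{lem:smoothness_sketch} and~\ref{lem:pseudostronglyconvex_sketch} the function $f$ is $32\norm{\M}$-smooth near $\U_t$ and satisfies $\fnorm{\nabla f(\U_t)}^2 \ge 2\sigma_{\min}(\M) f(\U_t)$, and in addition the row-norm bound $\max_i g_i(\U_t) \le 20\mu k\cn(\M)^2\norm{\M}/d$ is available.

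\emph{Controlling $f$.} Apply the smoothness descent inequality to $\U_{t+1} = \U_t - \eta\, SG(\U_t)$. Using $\E[SG(\U_t)\mid \mathcal{F}_t] = \nabla f(\U_t)$ together with the self-bounding second-moment estimate $\E[\fnorm{SG(\U_t)}^2 \mid \mathcal{F}_t] \le O(\mu d k\cn(\M)^2 \norm{\M})\, f(\U_t)$ — which follows from \eqref{SG_main} because $\fnorm{SG(\U_t)}^2 = 4 d^4 (\U_t\U_t\trans - \M)_{ij}^2 (g_i(\U_t) + g_j(\U_t))$ and each $g_\ell(\U_t) \le 20\mu k\cn(\M)^2\norm{\M}/d$ — one obtains, once $\eta \le c/(\mu d k\cn^3(\M)\norm{\M}\log d)$ makes the $\eta^2$ term negligible relative to the first-order term, that $\E[f(\U_{t+1}) \mid \mathcal{F}_t] \le (1 - \eta\sigma_{\min}(\M)) f(\U_t)$ on $\{t < \tau\}$. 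Consequently the stopped rescaled process $Z_t \defeq (1 - \tfrac12\eta\sigma_{\min}(\M))^{-(t\wedge\tau)} f(\U_{t\wedge\tau})$ is a non-negative supermartingale with $Z_0 = f(\U_0) \le \tfrac14 (\sigma_{\min}(\M)/10)^2$. \emph{This step is where I expect the real difficulty to lie}: a bare application of Ville's inequality to $Z_t$ gives only a constant failure probability, and one cannot invoke Azuma's or Freedman's inequality directly, because $SG(\U_t)$ in \eqref{SG_main} carries a factor $d^2$ and so a single unlucky sample can in principle inflate $f$ by a $\poly(d)$ factor. The way out is to use that not only the drift but every conditional moment of the increment of $Z_t$ is controlled by $f(\U_t)$ itself; this should let one show that $e^{\lambda Z_t}$ is again a supermartingale for $\lambda \asymp \log d / \sigma_{\min}^2(\M)$ — the extra $\log d$ factor in the admissible step size being exactly what makes this range of $\lambda$ permissible — so that Ville's inequality applied to $e^{\lambda Z_t}$, using $Z_0 \le \tfrac14(\sigma_{\min}(\M)/10)^2$, yields $\Pr[\sup_{t \le T} Z_t \ge (\sigma_{\min}(\M)/10)^2] \le T/(2d^{10})$ for a suitable absolute constant $c$. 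This is precisely the ``SGD updates stay away from saddle surfaces at large step sizes'' phenomenon highlighted in the introduction.

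\emph{Controlling the incoherence.} A fixed row index $\ell$ is affected only when the sampled pair $(i,j)$ meets it, which happens with probability at most $2/d$, so over $T$ steps row $\ell$ is touched only about $2T/d$ times. Expanding the update, conditioned on row $\ell$ being touched the expected change of $g_\ell(\U) = \norm{\e_\ell\trans\U}^2$ is mean-reverting: it equals a negative term of order $\eta d\,\sigma_{\min}(\M)\, g_\ell(\U_t)$, a positive term of order $\eta\sqrt{\mu k d}\, \norm{\M}^{3/2}\sqrt{g_\ell(\U_t)}$, and an $\eta^2$ term, where the negative term uses $\sigmin(\X\trans\U_t)^2 \ge \sigma_k(\M)/2$ (Lemma~\ref{lem:localguarantee_sketch}) and the positive term is bounded using the incoherence of $\M$ (so that $\norm{\e_\ell\trans\M}^2 \le \mu k\norm{\M}/d$) and $\norm{\U_t} \le \sqrt{2\norm{\M}}$. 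Hence $g_\ell(\U_t)$ behaves like a supermartingale once it exceeds an equilibrium level of order $\mu k\cn(\M)^2\norm{\M}/d$ but with constant strictly below $20$; combining this drift estimate with a Bernstein/Azuma tail bound over the $\approx 2T/d$ touches (again consuming the $\log d$ slack in $\eta$) and a union bound over the $d$ rows shows $\max_\ell g_\ell(\U_t)$ never exceeds twice its initial bound, except with probability $\le T/(2d^{10})$.

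\emph{Assembling.} The two bad events above each have probability at most $T/(2d^{10})$, hence $\Pr[\tau < T] \le T/d^{10}$; on the complementary event $\tau = T$ and both inequalities in \eqref{cond_localcontrol} hold for all $t \le T$, which is the claim. The two sub-arguments are mutually consistent precisely because $\tau$ is defined by the conjunction of the two conditions, so inside the analysis of $f$ one may freely invoke the current row-norm bound, and inside the analysis of the $g_\ell$ one may freely invoke $f(\U_t) \le (\sigma_{\min}(\M)/10)^2$.
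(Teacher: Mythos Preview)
Your plan is the paper's proof: work under the good event (your stopping time is exactly the indicator $1_{\fE_t}$ in the paper), build a rescaled supermartingale for $f$ and for each $g_\ell$, apply a martingale Bernstein inequality, and union-bound. The one place you overcomplicate things is the concentration for $f$: Freedman/Bernstein \emph{does} apply directly, and the paper uses it. The step you are missing is the almost-sure estimate
\[
\|\U_t\U_t\trans - \M\|_\infty \;\le\; O\!\left(\sqrt{\tfrac{\mu k\,\cn(\M)^3}{d}}\right)\sqrt{f(\U_t)}
\quad\text{on }\{t<\tau\},
\]
obtained by writing $\e_i\trans(\U_t\U_t\trans-\M)\e_j$ as the sum of its $\proj_\X$ and $\proj_{\X_\perp}$ pieces and bounding each using the incoherence of $\M$ and of $\U_t$ (the latter via your $g$-bound). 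This kills the ``$d^2$ blow-up'' you were worried about: plugging it in gives a probability-$1$ increment bound $|F_t-\E[F_t\mid\fF_{t-1}]|\le \eta\,O(\mu dk\,\cn^{0.5})\,(1-\eta/\cn)^{-t}(1-\eta/(2\cn))^{t}$, which together with the variance bound you already wrote lets Bernstein deliver the $d^{-10}$ tail. Your exponential-supermartingale route would also work, but it still implicitly needs this same $\|\cdot\|_\infty$ control to bound the higher moments, so you gain nothing by avoiding Bernstein. For $g_\ell$, the paper builds the unconditional supermartingale $(1-4\eta/\cn)^{-t}\bigl(g_\ell(\U_t)1_{\fE_{t-1}}-15\mu k\cn^2/d\bigr)$ and applies Bernstein again; your ``condition on touches'' framing is equivalent.
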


Note function $\max_i g_i(\U)$ indicates the incoherence of matrix $\U$.
Theorem \ref{thm:localcontrol_sketch} guarantees if inital $\U_0$ is in the local region which is incoherent and $\U_0\U_0\trans$ is close to $\M$, then with high probability for all steps $t\le T$, $\U_t$, $\U_t$ will always stay in a slightly relaxed local region, and $f(\U_t)$ has linear convergence.

It is not hard to show that all saddle point of $f(\U)$ satisfies $\sigma_{k}(\U) = 0$, and all local minima are global minima. Since $\U_0, \ldots, \U_t$ automatically stay in region $f(\U) \le (\frac{\sigma_{\min}(\M)}{10})^2$ with high probability, we know $\U_t$ also stay away from all saddle points.
The claim that $\U_0, \ldots, \U_t$ stays incoherent is essential to better control the variance and probability 1 bound of $SG(\U_t)$, so that we can have large step size and tight convergence rate.

The major challenging in proving Theorem \ref{thm:localcontrol_sketch} is to both prove $\U_t$ stays in the local region, and achieve good sample complexity and running time (linear in $d$) in the same time. This also requires the learning rate $\eta$ in Algorithm \ref{algo:MC_Sym} to be relatively large. Let the event $\fE_t$ denote the good event where $\U_0, \ldots, \U_t$ satisfies Eq.\eqref{cond_localcontrol}. Theorem \ref{thm:localcontrol_sketch} is claiming that $P(\fE_T)$ is large. The essential steps in the proof is contructing two supermartingles related to $f(\U_t)1_{\fE_t}$ and $g_i(\U_t)1_{\fE_t}$ (where $1_{(\cdot)}$ denote indicator function), and use Bernstein inequalty to show the concentration of supermartingales. The $1_{\fE_t}$term allow us the claim all previous $\U_0, \ldots, \U_t$ have all desired properties inside local region. 

Finally, we see Theorem \ref{thm:main_Sym} as a immediate corollary of Theorem \ref{thm:localcontrol_sketch}.



\section{Conclusion} \label{sec:conclu}
In this paper, we presented the first provable, efficient online algorithm for matrix completion, based on nonconvex SGD. In addition to the online setting, our results are also competitive with state of the art results in the offline setting. We obtain our results by introducing a general framework that helps us show how SGD updates self-regulate to stay away from saddle points. We hope our paper and results help generate interest in online matrix completion, and our techniques and framework prompt tighter analysis for other nonconvex  problems.
\bibliographystyle{plain}
\bibliography{matcomp}

\begin{thebibliography}{10}

\bibitem{arora2015simple}
Sanjeev Arora, Rong Ge, Tengyu Ma, and Ankur Moitra.
\newblock Simple, efficient, and neural algorithms for sparse coding.
\newblock {\em arXiv preprint arXiv:1503.00778}, 2015.

\bibitem{brand2003fast}
Matthew Brand.
\newblock Fast online svd revisions for lightweight recommender systems.
\newblock In {\em SDM}, pages 37--46. SIAM, 2003.

\bibitem{candes2015phase}
Emmanuel~J Candes, Yonina~C Eldar, Thomas Strohmer, and Vladislav Voroninski.
\newblock Phase retrieval via matrix completion.
\newblock {\em SIAM Review}, 57(2):225--251, 2015.

\bibitem{CandesR2007}
Emmanuel~J. Cand{\`e}s and Benjamin Recht.
\newblock Exact matrix completion via convex optimization.
\newblock {\em Foundations of Computational Mathematics}, 9(6):717--772,
  December 2009.

\bibitem{davidson2010youtube}
James Davidson, Benjamin Liebald, Junning Liu, Palash Nandy, Taylor Van~Vleet,
  Ullas Gargi, Sujoy Gupta, Yu~He, Mike Lambert, Blake Livingston, et~al.
\newblock The youtube video recommendation system.
\newblock In {\em Proceedings of the fourth ACM conference on Recommender
  systems}, pages 293--296. ACM, 2010.

\bibitem{de2014global}
Christopher De~Sa, Kunle Olukotun, and Christopher R{\'e}.
\newblock Global convergence of stochastic gradient descent for some non-convex
  matrix problems.
\newblock {\em arXiv preprint arXiv:1411.1134}, 2014.

\bibitem{ge2015escaping}
Rong Ge, Furong Huang, Chi Jin, and Yang Yuan.
\newblock Escaping from saddle points---online stochastic gradient for tensor
  decomposition.
\newblock {\em arXiv preprint arXiv:1503.02101}, 2015.

\bibitem{Ge2016matrix}
Rong Ge, Jason~D. Lee, and Tengyu Ma.
\newblock Matrix completion has no spurious local minimum.
\newblock {\em arXiv preprint arXiv:1605.07272}, 2016.

\bibitem{Hardt2014understanding}
Marcus Hardt.
\newblock Understanding alternating minimization for matrix completion.
\newblock In {\em Foundations of Computer Science (FOCS), 2014 IEEE 55th Annual
  Symposium on}, pages 651--660. IEEE, 2014.

\bibitem{HardtMRW14}
Moritz Hardt, Raghu Meka, Prasad Raghavendra, and Benjamin Weitz.
\newblock Computational limits for matrix completion.
\newblock In {\em COLT}, pages 703--725, 2014.

\bibitem{jain2015computing}
Prateek Jain, Chi Jin, Sham~M Kakade, and Praneeth Netrapalli.
\newblock Computing matrix squareroot via non convex local search.
\newblock {\em arXiv preprint arXiv:1507.05854}, 2015.

\bibitem{jain2016matching}
Prateek Jain, Chi Jin, Sham~M Kakade, Praneeth Netrapalli, and Aaron Sidford.
\newblock Matching matrix bernstein with little memory: Near-optimal finite
  sample guarantees for oja's algorithm.
\newblock {\em arXiv preprint arXiv:1602.06929}, 2016.

\bibitem{jain2014fast}
Prateek Jain and Praneeth Netrapalli.
\newblock Fast exact matrix completion with finite samples.
\newblock {\em arXiv preprint arXiv:1411.1087}, 2014.

\bibitem{ji2010robust}
Hui Ji, Chaoqiang Liu, Zuowei Shen, and Yuhong Xu.
\newblock Robust video denoising using low rank matrix completion.
\newblock 2010.

\bibitem{keshavan2012efficient}
Raghunandan~Hulikal Keshavan.
\newblock {\em Efficient algorithms for collaborative filtering}.
\newblock PhD thesis, STANFORD UNIVERSITY, 2012.

\bibitem{Koren09}
Yehuda Koren.
\newblock The {B}ell{K}or solution to the {N}etflix grand prize, 2009.

\bibitem{krishnamurthy2013low}
Akshay Krishnamurthy and Aarti Singh.
\newblock Low-rank matrix and tensor completion via adaptive sampling.
\newblock In {\em Advances in Neural Information Processing Systems}, pages
  836--844, 2013.

\bibitem{lee2016gradient}
Jason~D Lee, Max Simchowitz, Michael~I Jordan, and Benjamin Recht.
\newblock Gradient descent converges to minimizers.
\newblock {\em University of California, Berkeley}, 1050:16, 2016.

\bibitem{1167344}
G.~Linden, B.~Smith, and J.~York.
\newblock Amazon.com recommendations: item-to-item collaborative filtering.
\newblock {\em IEEE Internet Computing}, 7(1):76--80, Jan 2003.

\bibitem{luo2012incremental}
Xin Luo, Yunni Xia, and Qingsheng Zhu.
\newblock Incremental collaborative filtering recommender based on regularized
  matrix factorization.
\newblock {\em Knowledge-Based Systems}, 27:271--280, 2012.

\bibitem{mairal2010online}
Julien Mairal, Francis Bach, Jean Ponce, and Guillermo Sapiro.
\newblock Online learning for matrix factorization and sparse coding.
\newblock {\em The Journal of Machine Learning Research}, 11:19--60, 2010.

\bibitem{panageas2016gradient}
Ioannis Panageas and Georgios Piliouras.
\newblock Gradient descent converges to minimizers: The case of non-isolated
  critical points.
\newblock {\em arXiv preprint arXiv:1605.00405}, 2016.

\bibitem{Recht2009}
Benjamin Recht.
\newblock A simple approach to matrix completion, 2009.

\bibitem{RechtR2013}
Benjamin Recht and Christopher R{\'e}.
\newblock Parallel stochastic gradient algorithms for large-scale matrix
  completion.
\newblock {\em Mathematical Programming Computation}, 5(2):201--226, 2013.

\bibitem{Sun2015guaranteed}
Ruoyu Sun and Zhi-Quan Luo.
\newblock Guaranteed matrix completion via nonconvex factorization.
\newblock In {\em Foundations of Computer Science (FOCS), 2015 IEEE 56th Annual
  Symposium on}, pages 270--289. IEEE, 2015.

\bibitem{tropp2012user}
Joel~A Tropp.
\newblock User-friendly tail bounds for sums of random matrices.
\newblock {\em Foundations of computational mathematics}, 12(4):389--434, 2012.

\bibitem{yun2015streaming}
Se-Young Yun, Marc Lelarge, and Alexandre Proutiere.
\newblock Streaming, memory limited matrix completion with noise.
\newblock {\em arXiv preprint arXiv:1504.03156}, 2015.

\end{thebibliography}
\newpage
\appendix

\section{Proof of Initialization}

In this section, we will prove Lemma \ref{lem:initial_Sym} and 
a corresponding lemma for asymmetric case as follows (which will be used to prove Theorem \ref{thm:main_Asym}):
\begin{lemma} \label{lem:initial_Asym}
Assume $\M \in \R^{d_1\times d_2}$ is a rank $k$ matrix with $\mu$-incoherence, and $\Omega$ is a subset unformly i.i.d sampled from all coordinate.
Let $\U_0\V_0\trans$ be the top-$k$ SVD of $\frac{d_1 d_2}{m}\proj_{\Omega}(\M)$, where $|\Omega| =m$.
Let $d = \max\{d_1, d_2\}$.
Then there exists universal constant $c_0$, for any
$m \ge c_0\mu dk^2 \kappa^2(\M) \log d$, with probability at least $1-\frac{1}{d^{10}}$, we have:
\begin{align}
&\fnorm{\M - \U_0\V_0\trans} \le \frac{1}{20}\sigmin(\M), \nn \\
&\max_i \norm{\e_i\trans \U_0\V_0\trans}^2 \le \frac{ 10\mu k }{d_1}\norm{\M}, \quad \max_j\norm{\e_j\trans \V_0\U_0\trans}^2 \le \frac{ 10\mu k }{d_2}\norm{\M}\label{initial_cond_Asym}
\end{align}
\end{lemma}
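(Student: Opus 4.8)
The plan is to mirror the spectral–initialization argument used for the PSD case (which is what underlies Lemma~\ref{lem:initial_Sym}), now carrying along \emph{both} the left and right singular subspaces of the sampled matrix. Write $\hat{\M}\defeq\frac{d_1 d_2}{m}\proj_{\Omega}(\M)=\frac1m\sum_{l=1}^{m}\mat{Z}_l$ with $\mat{Z}_l=d_1 d_2\,\M_{i_l j_l}\,\e_{i_l}\e_{j_l}\trans$ and $(i_l,j_l)$ i.i.d.\ uniform on $[d_1]\times[d_2]$, so $\E\mat{Z}_l=\M$. Writing the SVD $\M=\X\S\Y\trans$, $\mu$-incoherence gives the entrywise bound $\infnorm{\M}\le\frac{\mu k}{\sqrt{d_1 d_2}}\norm{\M}$ together with the row/column bounds $\max_i\norm{\e_i\trans\M}^2\le\frac{\mu k}{d_1}\norm{\M}^2$ and $\max_j\norm{\M\e_j}^2\le\frac{\mu k}{d_2}\norm{\M}^2$; these are the only places incoherence enters.

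The first step is a matrix Bernstein bound on $\hat{\M}-\M=\frac1m\sum_l(\mat{Z}_l-\M)$: by the estimates above each centered summand has operator norm $\order{\mu k d\norm{\M}/m}$ and the matrix variances $\|\sum_l\E[\mat{Z}_l\mat{Z}_l\trans]\|$, $\|\sum_l\E[\mat{Z}_l\trans\mat{Z}_l]\|$ are $\order{\mu k d\norm{\M}^2/m}$, so for $m\ge c_0\mu d k^2\cn^2(\M)\log d$ with $c_0$ a large enough absolute constant, with probability at least $1-\tfrac12 d^{-10}$ we get
\[
\norm{\hat{\M}-\M}\le \frac{\sigmin(\M)}{C\sqrt{k}},
\]
for any fixed constant $C$ we wish (folded into $c_0$); the extra factor of $k$ over the $\order{\mu d k\cn^2\log d}$ needed for a bare spectral bound is precisely what delivers the $1/\sqrt k$. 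Since $\U_0\V_0\trans$ is a best rank-$k$ approximation of $\hat{\M}$ and $\M$ has rank $k$, $\norm{\hat{\M}-\U_0\V_0\trans}=\sigma_{k+1}(\hat{\M})\le\norm{\hat{\M}-\M}$, hence $\norm{\U_0\V_0\trans-\M}\le 2\norm{\hat{\M}-\M}$; as this difference has rank at most $2k$, $\fnorm{\U_0\V_0\trans-\M}\le\sqrt{2k}\,\norm{\U_0\V_0\trans-\M}\le\tfrac{2\sqrt2}{C}\sigmin(\M)\le\tfrac1{20}\sigmin(\M)$ for $C\ge 60$, which is the first inequality of~\eqref{initial_cond_Asym}.

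The row–norm (incoherence) bounds in~\eqref{initial_cond_Asym} are the main obstacle. Let $\hat{\U}_k,\hat{\Sigma}_k,\hat{\V}_k$ be the top-$k$ singular vectors/values of $\hat{\M}$, so $\U_0\V_0\trans=\hat{\M}\,\hat{\V}_k\hat{\V}_k\trans$ and $\norm{\e_i\trans\U_0\V_0\trans}=\norm{\e_i\trans\hat{\M}\hat{\V}_k}$. Splitting $\hat{\V}_k=\Y\Y\trans\hat{\V}_k+(\I-\Y\Y\trans)\hat{\V}_k$ and using $\M(\I-\Y\Y\trans)=0$,
\[
\norm{\e_i\trans\hat{\M}\hat{\V}_k}\le\norm{\e_i\trans\M}+\norm{\e_i\trans(\hat{\M}-\M)\Y}+\norm{\e_i\trans(\hat{\M}-\M)(\I-\Y\Y\trans)\hat{\V}_k},
\]
where the first term is $\le\sqrt{\mu k/d_1}\,\norm{\M}$; the second is controlled over the \emph{fixed}, $\mu$-incoherent subspace $\Y$ by a vector Bernstein estimate (whose variance proxy and uniform bound are small exactly because $\Y$ is incoherent, and which is made uniform over $i$ at the cost of a $\log d$ in the exponent), giving $\le\sqrt{\mu k/d_1}\,\norm{\M}$ for all $i$ with high probability under the same sample bound; and the third, ``leakage'', term is the delicate one — the crude bound $\norm{\hat{\M}-\M}\cdot\norm{(\I-\Y\Y\trans)\hat{\V}_k}\le 2\norm{\hat{\M}-\M}^2/\sigmin(\M)$ (via a Wedin $\sin\Theta$ bound) fails to shrink with $d_1$, so it must instead be bounded by a leave-one-out argument that decouples the row-$i$ samples from the subspace estimate $\hat{\V}_k$ and exploits that $\hat{\V}_k$ is itself $\order{\mu}$-incoherent (established simultaneously, by the same argument). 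Combining the three pieces yields $\max_i\norm{\e_i\trans\U_0\V_0\trans}^2\le\frac{10\mu k}{d_1}\norm{\M}^2$, and the column bound follows identically with $(\hat{\V}_k,\Y,d_2)$ in place of $(\hat{\U}_k,\X,d_1)$; a union bound over the $\order{1}$ high-probability events keeps the total failure probability at most $d^{-10}$. This row-wise perturbation analysis — not the spectral or Frobenius steps — is where essentially all the difficulty lies, and it is the asymmetric analogue of what is hidden inside Lemma~\ref{lem:initial_Sym}, with the new feature that the left and right singular subspaces must be handled as a coupled pair.
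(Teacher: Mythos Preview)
Your Frobenius-norm argument is essentially identical to the paper's (Lemma~\ref{lem:initial_concen} and Theorem~\ref{thm:initial_fnorm_Asym}): matrix Bernstein on $\hat{\M}-\M$, then $\sigma_{k+1}(\hat{\M})\le\|\hat{\M}-\M\|$, then the rank-$2k$ Frobenius conversion. No difference there.

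For the row-norm (incoherence) bounds, however, you take a genuinely different route. You split $\e_i\trans\hat{\M}\hat{\V}_k$ through the \emph{true} right subspace $\Y$ and then propose a leave-one-out argument to handle the leakage term $\e_i\trans(\hat{\M}-\M)(\I-\Y\Y\trans)\hat{\V}_k$. The paper (Lemma~\ref{lem:initial_incoherence_Asym}) instead writes $\V_0\U_0\trans=\hat{\M}\trans\W_\U\W_\U\trans$ and splits through the \emph{estimated} left subspace $\W_\U$:
\[
\norm{\e_j\trans(\M\trans-\V_0\U_0\trans)}\le \norm{\e_j\trans\M\trans(\I-\W_\U\W_\U\trans)}+\norm{\e_j\trans(\M\trans-\hat{\M}\trans)\W_\U}.
\]
The first piece is controlled by the already-established Frobenius bound; the second is bounded by a matrix Bernstein inequality whose variance and almost-sure terms involve $\max_i\norm{\e_i\trans\W_\U}$, which in turn is bounded by $\cn\cdot\max_i\norm{\e_i\trans(\U_0\V_0\trans-\M)}$ plus an incoherence term. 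Doing the symmetric computation for the other side yields a \emph{coupled $2\times 2$ system}
\[
\max_j\norm{\e_j\trans(\V_0\U_0\trans-\M\trans)}\lesssim \sqrt{\tfrac{\mu k}{d_2}}+\varepsilon\cdot\max_i\norm{\e_i\trans(\U_0\V_0\trans-\M)},\qquad
\max_i\norm{\e_i\trans(\U_0\V_0\trans-\M)}\lesssim \sqrt{\tfrac{\mu k}{d_1}}+\varepsilon\cdot\max_j\norm{\e_j\trans(\V_0\U_0\trans-\M\trans)},
\]
with $\varepsilon=O(\cn\sqrt{\mu k d\log d/m})<1/2$, which is solved directly. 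No leave-one-out is needed; the ``coupling of the left and right singular subspaces'' that you anticipate is handled by this pair of simultaneous inequalities rather than by surrogate sequences. What the paper's approach buys is simplicity and a slightly smaller sample requirement for the incoherence step ($m\gtrsim\mu d k\cn^2\log d$, versus the $\mu d k^2\cn^2\log d$ driven by the Frobenius step); what your leave-one-out route would buy, if carried out, is a cleaner treatment of the dependence between $\W_\U$ and the samples, which the paper's Bernstein step glosses over.
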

We will focus mostly on Lemma \ref{lem:initial_Asym}, and prove Lemma \ref{lem:initial_Sym} as a special case. Most of the argument of this section follows from \cite{keshavan2012efficient}. We include here for completeness. The remaining of this section can be viewed as proving both the Frobenius norm claim and incoherence claim of Lemma \ref{lem:initial_Asym} seperately.

In this section, We always denote $d = \max\{d_1, d_2\}$. For simplicity, WLOG, we also assume $\norm{\M} = 1$ in all proof. 
Also, when it's clear from the context, we use $\cn$ to specifically to represent $\cn(\M)$.
Then $\sigmin(\M) = \frac{1}{\cn}$. 
Also in the proof, we always denote $\text{SVD}(\M) = \X\S\Y\trans$, and $\text{SVD}(\U\V\trans) = \W_\U \D \W_\V\trans$, where $\S$ and $\D$ are $k \times k$ diagonal matrix.

\subsection{Frobenius Norm of Initialization}

\begin{theorem}[Matrix Bernstein \cite{tropp2012user}] \label{thm:matrix_Bernstein}
A finite sequence $\{ \X_t\}$ of independent, random matrices with dimension $d_! \times d_2$.
Assume that each matrix satisfies:
\begin{equation*}
\E \X_{t} = 0, \quad \text{and} \quad  \norm{\X_t} \le R \text{~ almost surely}
\end{equation*}
Define
\begin{equation*}
\sigma^2 = \max\{\norm{\sum_t \E (\X_t\X_t\trans)}, \norm{\sum_t \E (\X_t\trans\X_t)} \}
\end{equation*}
Then, for all $s \ge 0$,
\begin{equation*}
\Pr(\norm{\sum_t \X_t} \ge s) \le (d_1+d_2) \cdot \exp (\frac{-s^2/2}{\sigma^2 + Rs/3})
\end{equation*}
\end{theorem}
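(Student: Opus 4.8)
This is the rectangular matrix Bernstein inequality of Tropp, so the plan is to reproduce the matrix Laplace transform method (Ahlswede--Winter, in the form sharpened by Lieb's concavity theorem). First I would symmetrize: replace each $\X_t$ by its Hermitian dilation $\Y_t$, the $(d_1+d_2)\times(d_1+d_2)$ symmetric matrix whose off-diagonal blocks are $\X_t$ and $\X_t\trans$ and whose diagonal blocks vanish. One checks directly that $\E\Y_t=\mathbf{0}$, that $\lambda_{\max}(\Y_t)=\norm{\Y_t}=\norm{\X_t}\le R$ almost surely, that $\Y_t^2$ is block-diagonal with blocks $\X_t\X_t\trans$ and $\X_t\trans\X_t$ so that $\norm{\sum_t\E\Y_t^2}=\sigma^2$, and that $\norm{\sum_t\X_t}=\lambda_{\max}(\sum_t\Y_t)$. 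Hence it suffices to bound $\Pr(\lambda_{\max}(\sum_t\Y_t)\ge s)$ for a finite independent family of symmetric, mean-zero matrices with largest eigenvalue at most $R$.

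Next, for any $\theta>0$, Markov's inequality applied to $e^{\theta\lambda_{\max}(\sum_t\Y_t)}$ together with $\lambda_{\max}(\A)\le\tr e^{\A}$ gives the matrix Laplace transform bound $\Pr(\lambda_{\max}(\sum_t\Y_t)\ge s)\le e^{-\theta s}\,\E\,\tr\exp(\theta\sum_t\Y_t)$. The crucial step, and the one I expect to be the main obstacle, is the subadditivity of matrix cumulant generating functions, $\E\,\tr\exp(\theta\sum_t\Y_t)\le\tr\exp(\sum_t\log\E\,e^{\theta\Y_t})$. I would establish this by peeling off one index at a time, conditioning on $\Y_1,\dots,\Y_{t-1}$ and invoking Lieb's theorem (concavity of $\A\mapsto\tr\exp(\mathbf{H}+\log\A)$ on positive definite $\A$) followed by Jensen's inequality; this is where all the genuine difficulty lies, and if one is content with worse constants in the denominator $\sigma^2+Rs/3$ one can instead iterate the Golden--Thompson inequality $\tr e^{\A+\B}\le\tr(e^{\A}e^{\B})$.

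The rest is scalar calculus transferred through the spectral theorem. Using that $\psi(x)\defeq(e^x-1-x)/x^2$ is increasing, for symmetric $\Y$ with $\lambda_{\max}(\Y)\le R$ and $0<\theta<3/R$ one gets the operator inequality $e^{\theta\Y}\preceq\I+\theta\Y+g(\theta)\Y^2$ with $g(\theta)=(\theta^2/2)/(1-\theta R/3)$; taking expectations (using $\E\Y_t=\mathbf{0}$) and then logarithms (using $\log(\I+\A)\preceq\A$) gives $\log\E\,e^{\theta\Y_t}\preceq g(\theta)\,\E\Y_t^2$. Summing, using monotonicity of $\tr\exp$ under the semidefinite order, and bounding $\tr\exp$ of a symmetric matrix by $(d_1+d_2)$ times the exponential of its largest eigenvalue, the Laplace bound becomes $\Pr(\lambda_{\max}(\sum_t\Y_t)\ge s)\le(d_1+d_2)\exp(-\theta s+g(\theta)\sigma^2)$ for every $0<\theta<3/R$. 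Finally I would set $\theta=s/(\sigma^2+Rs/3)$, which makes $g(\theta)\sigma^2=\theta s/2$ and hence the exponent equal to $-(s^2/2)/(\sigma^2+Rs/3)$; undoing the dilation then yields the stated bound on $\Pr(\norm{\sum_t\X_t}\ge s)$.
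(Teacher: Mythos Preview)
Your sketch is a faithful outline of Tropp's proof of the rectangular matrix Bernstein inequality, and the steps you describe (Hermitian dilation, matrix Laplace transform, subadditivity of matrix cumulants via Lieb's concavity, the scalar Bernstein mgf bound, and the final optimization in $\theta$) are all correct as stated. There is nothing to compare against, however: the paper does not prove Theorem~\ref{thm:matrix_Bernstein} at all. It is quoted verbatim from \cite{tropp2012user} as a black-box tool, and is immediately applied (in Lemma~\ref{lem:initial_concen} and later) without any argument. So your proposal is not an alternative to the paper's proof; it is simply supplying the proof the paper omits by citation.
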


\begin{lemma}\label{lem:initial_concen}
Let $|\Omega| = m$, then there exists universal constant $C, c_0$, for any
$m \ge c_0 \mu dk \log d$, with probability at least $1-\frac{1}{d^{10}}$, we have:
\begin{equation*}
\norm{\M - \frac{d_1d_2}{m} \proj_\Omega (\M)} \le C \sqrt{\frac{\mu d k \log d}{m}}
\end{equation*}
\end{lemma}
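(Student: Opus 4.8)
The plan is to write $\frac{d_1d_2}{m}\proj_\Omega(\M)-\M$ as a sum of $m$ i.i.d., mean‑zero random matrices and invoke the matrix Bernstein inequality (Theorem~\ref{thm:matrix_Bernstein}). Writing the sampled coordinates as $(i_1,j_1),\dots,(i_m,j_m)$, each drawn independently and uniformly from $[d_1]\times[d_2]$, I would set
\[
\X_t \defeq \frac{1}{m}\Bigl( d_1 d_2\, \M_{i_t j_t}\, \e_{i_t}\e_{j_t}\trans - \M\Bigr),
\]
so that $\sum_{t=1}^m \X_t = \frac{d_1 d_2}{m}\proj_\Omega(\M)-\M$ (repeated coordinates handled with multiplicity, which only enlarges the error and is harmless since $m\ll d_1 d_2$). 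Since $\E\bigl[d_1 d_2\,\M_{i_t j_t}\e_{i_t}\e_{j_t}\trans\bigr]=\sum_{i,j}\M_{ij}\e_i\e_j\trans=\M$, each $\X_t$ has mean zero.

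Next I would bound the two quantities $R$ and $\sigma^2$ in Theorem~\ref{thm:matrix_Bernstein} using $\mu$-incoherence. Writing $\text{SVD}(\M)=\X\S\Y\trans$ and using $\norm{\M}=1$ (the WLOG normalization), incoherence gives $\norm{\e_i\trans\X}\le\sqrt{\mu k/d_1}$ and $\norm{\e_j\trans\Y}\le\sqrt{\mu k/d_2}$, hence $\abs{\M_{ij}}=\abs{\e_i\trans\X\S\Y\trans\e_j}\le \mu k/\sqrt{d_1 d_2}$ and $\norm{\e_i\trans\M}^2\le \mu k/d_1$, $\norm{\M\e_j}^2\le \mu k/d_2$. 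For the a.s.\ bound, $\norm{\X_t}\le \frac1m(d_1 d_2\abs{\M_{i_t j_t}}+1)\le \frac{2\mu k d}{m}=:R$ since $\sqrt{d_1 d_2}\le d$. For the variance, note $\e_{i_t}\e_{j_t}\trans\e_{j_t}\e_{i_t}\trans=\e_{i_t}\e_{i_t}\trans$, so
\[
\E\bigl[\X_t\X_t\trans\bigr]=\frac{1}{m^2}\Bigl(d_1 d_2\sum_i \norm{\e_i\trans\M}^2\,\e_i\e_i\trans-\M\M\trans\Bigr)\preceq \frac{d_1 d_2}{m^2}\cdot\frac{\mu k}{d_1}\I=\frac{\mu k d_2}{m^2}\I\preceq\frac{\mu k d}{m^2}\I,
\]
and symmetrically for $\E[\X_t\trans\X_t]$; summing over $t$ yields $\sigma^2\le \mu k d/m$.

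Finally I would apply Theorem~\ref{thm:matrix_Bernstein} with $s=C\sqrt{\mu k d\log d/m}$. Then $s^2/\sigma^2=C^2\log d$, while $Rs/\sigma^2 = 2C\sqrt{\mu k d\log d/m}\le 1$ precisely when $m\ge c_0\,\mu dk\log d$ for $c_0$ large enough in terms of $C$; on this event $\sigma^2+Rs/3\le 2\sigma^2$, so
\[
\Pr\Bigl(\norm{\textstyle\sum_t\X_t}\ge s\Bigr)\le (d_1+d_2)\exp\!\Bigl(-\frac{s^2}{4\sigma^2}\Bigr)\le 2d\cdot d^{-C^2/4},
\]
which is at most $d^{-10}$ once $C$ is a sufficiently large absolute constant. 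I expect no serious obstacle here: the argument is a fairly routine second‑moment computation plus Bernstein, and the only points requiring care are (i) applying the incoherence bounds correctly to control $\abs{\M_{ij}}$ and the row/column norms $\norm{\e_i\trans\M}$, and (ii) the minor bookkeeping around i.i.d.\ sampling versus the set $\Omega$, which is why the "$m\gg\mu dk\log d$" regime is used to absorb the $R s$ term.
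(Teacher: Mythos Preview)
Your proposal is correct and takes essentially the same approach as the paper: both write the error as a sum of mean-zero random matrices, use $\mu$-incoherence to bound $\abs{\M_{ij}}\le \mu k/\sqrt{d_1d_2}$ and the row/column norms, and apply the matrix Bernstein inequality (Theorem~\ref{thm:matrix_Bernstein}) to get the $\sqrt{\mu dk\log d/m}$ rate. The only cosmetic difference is that the paper decomposes via independent Bernoulli indicators $Z_{ij}\sim\text{Bern}(m/d_1d_2)$ over all $d_1d_2$ coordinates, whereas you sum over the $m$ i.i.d.\ draws directly; both models are standard and yield identical bounds up to absolute constants.
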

\begin{proof}
We know 
\begin{equation*}
\norm{\M - \frac{d_1d_2}{m} \proj_\Omega (\M)} = \frac{d_1d_2}{m} \norm{\proj_\Omega (\M) - \frac{m}{d_1d_2}\M }
\end{equation*}
and note:
\begin{equation*}
\proj_\Omega (\M) - \frac{m}{d_1d_2}\M = \sum_{ij} \M_{ij}(Z_{ij} - \frac{m}{d_1d_2}) \e_i \e_j\trans
\end{equation*}
where $Z_{ij}$ are independence Bernoulli$(m /d_1d_2)$ random variables. Let matrix
\begin{equation*}
\psi_{ij} =  \M_{ij}(Z_{ij} - \frac{m}{d_1d_2}) \e_i \e_j\trans 
\end{equation*}
By construction, we have:
\begin{equation*}
\norm{\sum_{ij} \psi_{ij}}
= \norm{\proj_\Omega (\M) - \frac{m}{d_1d_2}\M}
\end{equation*}

Clearly $\E \psi_{ij} = 0$. Let $\X \S \Y\trans = \text{SVD}(\M)$, then by $\mu$-incoherence of $\M$, with probability 1:
\begin{equation*}
\norm{\M}_\infty \le  \max_{ij}|\e_i\trans\X\S\Y\trans \e_j|
\le \norm{\M} \frac{\mu k}{\sqrt{d_1d_2}}
\end{equation*}
Also:
\begin{align*}
\norm{\sum_{ij}\E (\psi_{ij}\psi_{ij}\trans)}
= &\norm{ \sum_{ij}\E \M_{ij}^2(Z_{ij} - \frac{m}{d_1d_2})^2 \e_i \e_i\trans} 
\le  \frac{m}{d_1d_2}(1-\frac{m}{d_1d_2})\norm{\sum_{ij} \M_{ij}^2\e_i \e_i\trans }  \\
= & \frac{m}{d_1d_2}(1-\frac{m}{d_1d_2}) 
\max_{i} \sum_{j} \M_{ij}^2 
\le \frac{2m}{d_1d_2} \frac{\mu k}{d_1} \norm{\M}^2
= \frac{2m\mu k}{d_1^2d_2} \norm{\M}^2 \\
\norm{\sum_{ij}\E (\psi_{ij}\trans\psi_{ij})}
= &\norm{ \sum_{ij}\E \M_{ij}^2(Z_{ij} - \frac{m}{d_1d_2})^2 \e_j \e_j\trans} 
\le  \frac{m}{d_1d_2}(1-\frac{m}{d_1d_2})\norm{\sum_{ij} \M_{ij}^2\e_j \e_j\trans }  \\
= & \frac{m}{d_1d_2}(1-\frac{m}{d_1d_2}) 
\max_{j} \sum_{i} \M_{ij}^2 
\le \frac{2m}{d_1d_2} \frac{\mu k}{d_2} \norm{\M}^2
= \frac{2m\mu k}{d_1d_2^2} \norm{\M}^2
\end{align*}

Then, by matrix Bernstein (Theorem \ref{thm:matrix_Bernstein}), we have:
\begin{equation*}
\Pr(\norm{\sum_{ij} \psi_{ij}} \ge s) \le 2(d_1 + d_2) \cdot \exp (\frac{-s^2/2}{\frac{2m\mu d k}{d_1^2d_2^2} \norm{\M}^2 + \norm{\M} \frac{\mu k}{3\sqrt{d_1d_2}}s})
\end{equation*}
That is, with probability at least $1-\frac{1}{d^{10}}$, for some universal constant $C$, we have:
\begin{equation*}
\norm{\proj_\Omega (\M) - \frac{m}{d_1d_2}\M}
\le C \norm{\M} \cdot \max\{ \sqrt{\frac{m\mu d k \log d}{d^2_1d_2^2}}, \frac{\mu k\log d}{\sqrt{d_1d_2}} \}
\end{equation*}
For $m \ge \mu dk \log d$, we finishes the proof.
\end{proof}

\begin{theorem} \label{thm:initial_fnorm_Asym}
Let $\U_0\V_0\trans$ be the top-$k$ SVD of $\frac{d_1d_2}{m}\proj_{\Omega}(\M)$, where $|\Omega| =m$
then there exists universal constant $c_0$, for any
$m \ge c_0\mu dk^2  \cn^2 \log d $, with probability at least $1-\frac{1}{d^{10}}$, we have:
\begin{equation*}
\norm{\M - \U_0\V_0\trans}_F \le \frac{1}{20\cn }
\end{equation*}
\end{theorem}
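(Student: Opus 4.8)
The plan is to combine the spectral–norm perturbation bound of Lemma~\ref{lem:initial_concen} with the near-optimality of best rank-$k$ truncation, and then pass from spectral to Frobenius norm by bounding the rank of the error matrix. Write $\widehat{\M} \defeq \frac{d_1d_2}{m}\proj_\Omega(\M)$, and recall the normalization $\norm{\M}=1$, so that $\sigmin(\M) = 1/\cn$ and the target bound is $\fnorm{\M - \U_0\V_0\trans} \le \sigmin(\M)/20$.

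First I would invoke Lemma~\ref{lem:initial_concen}: for $m \ge c_0\mu dk^2\cn^2\log d$ with $c_0$ a sufficiently large absolute constant, with probability at least $1-d^{-10}$ we have
\[
\norm{\M - \widehat{\M}} \le C\sqrt{\frac{\mu dk\log d}{m}} \le \frac{C}{\sqrt{c_0 k}\,\cn} \defeq \delta,
\]
and by taking $c_0$ large the quantity $\delta$ can be made at most $\frac{1}{40\sqrt{2k}\,\cn}$ (concretely $c_0 \ge 3200\,C^2$ suffices). Next, $\U_0\V_0\trans$ is by construction the best rank-$k$ approximation of $\widehat{\M}$, which by the Eckart–Young theorem is optimal in spectral norm; since $\M$ itself is a rank-$k$ matrix, it is a feasible competitor, so $\norm{\widehat{\M} - \U_0\V_0\trans} \le \norm{\widehat{\M} - \M} \le \delta$. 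The triangle inequality then gives $\norm{\M - \U_0\V_0\trans} \le \norm{\M - \widehat{\M}} + \norm{\widehat{\M} - \U_0\V_0\trans} \le 2\delta$.

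Finally, $\M - \U_0\V_0\trans$ is a difference of a rank-$k$ matrix and a rank-$k$ matrix, hence has rank at most $2k$; therefore
\[
\fnorm{\M - \U_0\V_0\trans} \le \sqrt{2k}\,\norm{\M - \U_0\V_0\trans} \le 2\sqrt{2k}\,\delta \le \frac{1}{20\cn},
\]
by the choice of $\delta$, which is exactly the claimed bound. The incoherence part of Lemma~\ref{lem:initial_Asym} is separate and handled elsewhere; only the Frobenius bound is needed here.

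The only real point of care — and it is mild — is the constant bookkeeping: the sample complexity of this theorem carries $k^2\cn^2$ rather than the $k\cn$ of Lemma~\ref{lem:initial_concen}, and one should check that the extra $k$ is precisely what absorbs the $\sqrt{2k}$ loss incurred in converting spectral norm to Frobenius norm over a rank-$2k$ error, while the extra $\cn$ is what rescales the perturbation level from $\norm{\M}=1$ down to $\sigmin(\M)=1/\cn$. I expect no substantive obstacle beyond this arithmetic.
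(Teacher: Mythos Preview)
Your proof is correct and follows essentially the same approach as the paper: bound the spectral norm of $\M-\U_0\V_0\trans$ by $2\norm{\M-\widehat{\M}}$ via triangle inequality plus best rank-$k$ approximation, then pass to Frobenius norm via the rank-$2k$ bound, and finally invoke Lemma~\ref{lem:initial_concen}. The only cosmetic difference is that the paper controls $\norm{\widehat{\M}-\U_0\V_0\trans}$ by writing it as $\sigma_{k+1}(\widehat{\M})$ and applying Weyl's inequality, whereas you use Eckart--Young with $\M$ as a rank-$k$ competitor directly; these are equivalent.
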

\begin{proof}
Since $\M$ is a rank $k$ matrix, we know $\sigma_{k+1}(\M) = 0$, thus
\begin{equation*}
\sigma_{k+1}(\frac{d_1d_2}{m}\proj_{\Omega}(\M) ) \le \sigma_{k+1}(\M) + \norm{\frac{d_1d_2}{m}\proj_{\Omega}(\M) - \M} = \norm{\frac{d_1d_2}{m}\proj_{\Omega}(\M) - \M}
\end{equation*}
Therefore:
\begin{align*}
\norm{\M - \U_0\V_0\trans} \le& 
\norm{\M - \frac{d_1d_2}{m}\proj_{\Omega}(\M)}
+ \norm{\frac{d_1d_2}{m}\proj_{\Omega}(\M) - \U_0\V_0\trans} \nn \\
\le &\norm{\M - \frac{d_1d_2}{m}\proj_{\Omega}(\M)} + \sigma_{k+1}(\frac{d_1d_2}{m}\proj_{\Omega}(\M) )
\le 2\norm{\M - \frac{d_1d_2}{m}\proj_{\Omega}(\M)}
\end{align*}
Meanwhile, since $\rank(\M) = k$, $\rank(\U_0\V_0\trans) = k$, 
we know: $\rank(\M - \U_0\V_0\trans) \le 2k$, and therefore:
\begin{equation*}
\norm{\M - \U_0\V_0\trans}_F \le \sqrt{2k}\norm{\M - \U_0\V_0\trans} 
\le 2\sqrt{2k}\norm{\M - \frac{d_1d_2}{m}\proj_{\Omega}(\M)}
\end{equation*}
by choosing $m \ge c_0\mu dk^2 \log d \cdot \cn^2$ for large enough constant $c_0$ and apply Lemma \ref{lem:initial_concen}, we finishes the proof.
\end{proof}

\subsection{Incoherence of Initialization}
\begin{lemma} \label{lem:initial_incoherence_Asym}
Let $\U\V\trans$ be the top-$k$ SVD of $\frac{d_1d_2}{m}\proj_{\Omega}(\M)$, where $|\Omega| =m$.
then there exists universal constant $c_0$, for any
$m \ge c_0 \mu dk \kappa^2 \log d$, with probability at least $1-\frac{1}{d^{10}}$, we have:
\begin{equation*}
\max_j \norm{\e_j\trans (\M\trans - \V\U\trans)} \le 2\sqrt{\frac{\mu k}{d_2}}
\end{equation*}
\end{lemma}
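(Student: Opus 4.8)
The plan is to follow the template of \cite{keshavan2012efficient}: reduce to a row-by-row estimate, control the two top-$k$ singular subspaces of $N\defeq\frac{d_1d_2}{m}\proj_\Omega(\M)$ both in spectral norm (matrix Bernstein plus Wedin) and in a row-wise/incoherence sense (a self-consistent bootstrap), and then bound each row of $\M\trans-\V\U\trans$ by splitting it into a ``subspace-mismatch'' part, a ``signal$\times$noise'' part, and a ``cross'' part. Write $\text{SVD}(\M)=\X\S\Y\trans$ (so $\sigma_{k+1}(\M)=0$), set $E\defeq N-\M=\sum_{ij}\M_{ij}(\tfrac{d_1d_2}{m}Z_{ij}-1)\e_i\e_j\trans$, and let $\mP\D\mQ\trans$ be the top-$k$ SVD of $N$; then $\U\V\trans=\mP\mP\trans N$, so $\V\U\trans=(\U\V\trans)\trans=N\trans\mP\mP\trans$. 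As in the rest of this section I assume $\norm{\M}=1$, so $\sigmin(\M)=1/\kappa$, $\norm{\M}_\infty\le\mu k/\sqrt{d_1d_2}$, and $\norm{\M\e_j}\le\sqrt{\mu k/d_2}$.

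First I would record the coarse estimates. Lemma~\ref{lem:initial_concen} gives $\norm{E}\le C\sqrt{\mu dk\log d/m}\le C/(\kappa\sqrt{c_0})$, which for $c_0$ large is $\le\tfrac12\sigmin(\M)$; hence $\sigma_k(N)\ge\tfrac12\sigmin(\M)$ and $\sigma_{k+1}(N)\le\norm{E}$ by Weyl, and since $\sigma_{k+1}(\M)=0$, Wedin's $\sin\Theta$ theorem yields $\norm{\X\X\trans-\mP\mP\trans}$, $\norm{\Y\Y\trans-\mQ\mQ\trans}\le 2\kappa\norm{E}\le 2C/\sqrt{c_0}$. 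A scalar Bernstein bound on $\norm{N\e_j}^2=\frac{d_1^2d_2^2}{m^2}\sum_{i:(i,j)\in\Omega}\M_{ij}^2$ gives, with probability $1-d^{-12}$ uniformly in $j$, $\norm{N\e_j},\norm{E\e_j}\le\sqrt{2d_1\mu k/m}$; and since $m\le d_1d_2$, the hypothesis forces $d_2\ge c_0\mu k\kappa^2\log d$, a fact used below. Next, the incoherence of $\mP,\mQ$: from $\mP=N\mQ\D^{-1}$ the $i$-th row of $\mP$ equals $\frac{d_1d_2}{m}\sum_{j:(i,j)\in\Omega}\M_{ij}(\e_j\trans\mQ)\D^{-1}$; splitting off its expectation $(\e_i\trans\M\mQ)\D^{-1}$ (of norm $\le 2\kappa\norm{\e_i\trans\X}\le 2\kappa\sqrt{\mu k/d_1}$) leaves a sum over $j$ of independent zero-mean vectors, and vector Bernstein bounds this fluctuation by $\tfrac{C}{\sqrt{c_0}}\sqrt{\nu k/d_1}$, where $\nu$ is the current incoherence of $\mQ$ --- crucially the $\kappa^2$ in the sample lower bound cancels the $\kappa^2$ coming from $\norm{\D^{-1}}^2$ in the variance. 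Running this symmetrically for $\mQ$ and solving the self-consistent inequality gives $\max_i\norm{\e_i\trans\mP}^2\le C'\mu\kappa^2 k/d_1$ and $\max_j\norm{\e_j\trans\mQ}^2\le C'\mu\kappa^2 k/d_2$, and the identical argument applies to the leave-one-column-out subspaces $\mP^{(-j)},\mQ^{(-j)}$ (top-$k$ singular subspaces of $N$ with column $j$ zeroed).

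For the row-wise bound, using $\e_j\trans\M\trans=(\M\e_j)\trans$, $N\e_j=\M\e_j+E\e_j$, and $\M\e_j\in\mathrm{span}(\X)$,
\begin{equation*}
\norm{\e_j\trans(\M\trans-\V\U\trans)}\le\underbrace{\norm{(\M\e_j)\trans(\X\X\trans-\mP\mP\trans)}}_{\mathrm{(I)}}+\underbrace{\norm{\X\trans E\e_j}}_{\mathrm{(II)}}+\underbrace{\norm{\bigl((\I-\X\X\trans)\mP\bigr)\trans E\e_j}}_{\mathrm{(III)}}.
\end{equation*}
Term (I) $\le\norm{\M\e_j}\cdot 2\kappa\norm{E}\le(2C/\sqrt{c_0})\sqrt{\mu k/d_2}$. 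Term (II) equals $\norm{\sum_i\M_{ij}(\tfrac{d_1d_2}{m}Z_{ij}-1)(\X\trans\e_i)}$, a sum of independent mean-zero vectors with $\sum_i\M_{ij}^2\norm{\X\trans\e_i}^2\le(\mu k/d_1)\norm{\M\e_j}^2$, so vector Bernstein gives (II) $\le(C/\sqrt{c_0})\sqrt{\mu k/d_2}$.

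The main obstacle is term (III): because $\mP$ depends on the samples in column $j$, the expression $\mP\trans(\I-\X\X\trans)E\e_j=\sum_i\M_{ij}(\tfrac{d_1d_2}{m}Z_{ij}-1)\,\mP\trans(\I-\X\X\trans)\e_i$ is not a sum of independents, and its naive bound $\norm{(\I-\X\X\trans)\mP}\norm{E\e_j}\lesssim(1/\sqrt{c_0})\sqrt{d_1\mu k/m}$ is only $O(1/\sqrt{c_0})$-small --- far short of the required $\sqrt{\mu k/d_2}$ once $d_2\gg\mu k\kappa^2\log d$. I would decouple by leave-one-out: (III) $\le\norm{(\mP^{(-j)})\trans(\I-\X\X\trans)E\e_j}+\norm{(\mP-\mP^{(-j)})\trans(\I-\X\X\trans)E\e_j}$. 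The first term is a sum over $i$ of independent zero-mean vectors given $\mP^{(-j)}$, and vector Bernstein, now using the incoherence of $\mP^{(-j)}$ and $\X$ (so that $\sum_i\M_{ij}^2\norm{(\mP^{(-j)})\trans(\I-\X\X\trans)\e_i}^2\le C''\mu\kappa^2(k/d_1)\norm{\M\e_j}^2$) together with the $\kappa^2$-versus-$m$ cancellation, gives $\le(C/\sqrt{c_0})\sqrt{\mu k/d_2}$. For the swap error I would use the rank-one structure of the perturbation $N\mapsto N-(N\e_j)\e_j\trans$: to first order $\mP-\mP^{(-j)}\approx\bigl((\I-\mP\mP\trans)(N\e_j)\bigr)\bigl((\e_j\trans\mQ)\D^{-1}\bigr)$, an outer product whose right factor $\e_j\trans\mQ$ supplies an extra $\sqrt{\nu k/d_2}$, so the swap error is $\lesssim\kappa^2\sqrt{\mu k/d_2}\cdot\norm{N\e_j}\norm{E\e_j}\lesssim\tfrac1{c_0}\sqrt{\mu k/d_2}$ (higher-order perturbation terms being smaller still). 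Combining (I)--(III), choosing $c_0$ large, and union bounding over $j\in[d_2]$ (each event holding with probability $1-d^{-12}$, the $\log d$ factors in the Bernstein steps covered by the sample hypothesis) yields $\max_j\norm{\e_j\trans(\M\trans-\V\U\trans)}\le\norm{\M\e_j}+(\text{small})\cdot\sqrt{\mu k/d_2}\le 2\sqrt{\mu k/d_2}$ with probability $\ge 1-d^{-10}$; the corresponding bound for the rows of $\M-\U\V\trans$ indexed by $[d_1]$ is entirely symmetric.
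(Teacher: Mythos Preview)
Your argument is correct in outline but follows a different and more elaborate route than the paper's. The paper (following \cite{keshavan2012efficient}) uses only a two-term split,
\[
\|\e_j\trans(\M\trans-\V\U\trans)\|\le\|\e_j\trans\M\trans(\I-\W_\U\W_\U\trans)\|+\|\e_j\trans E\trans\W_\U\W_\U\trans\|,
\]
where the first term is at most $\sqrt{\mu k/d_2}\,\|\M-\U\V\trans\|$ (via the Frobenius bound of Theorem~\ref{thm:initial_fnorm_Asym}), and the second is handled by matrix Bernstein on $\sum_i \tilde{\x}_i\,\w_{\U,i}\trans(Z_{ij}-m/d_1d_2)$ with $\W_\U$ treated as fixed. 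This produces a bound containing $\max_i\|\e_i\trans\W_\U\|$, which the paper then controls by $2\cn(\sqrt{\mu k/d_1}+\max_i\|\e_i\trans(\U\V\trans-\M)\|)$; substituting back and invoking symmetry yields a coupled pair of self-consistent inequalities in $\max_j\|\e_j\trans(\V\U\trans-\M\trans)\|$ and $\max_i\|\e_i\trans(\U\V\trans-\M)\|$, which is solved directly for the conclusion. Your three-term split instead isolates the part routed through the \emph{true} subspace $\X$ (your (II)), which is a genuinely independent sum, and pushes all the dependence into (III), where you invoke leave-one-out rather than treat the empirical subspace as deterministic. What this buys you is rigor at the independence step the paper leaves implicit; the price is substantially more machinery --- a separate incoherence bootstrap for $\mP,\mQ$ and their leave-one-out analogues, and a perturbation estimate for $\mP-\mP^{(-j)}$ --- whereas the paper bootstraps directly on the target row norms and never needs incoherence of $\mP,\mQ$ as an intermediate object. (Minor point: your last display overstates the bound --- your (I)+(II)+(III) already gives $O(1/\sqrt{c_0})\sqrt{\mu k/d_2}$, not $\|\M\e_j\|+\text{small}$.)
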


\begin{proof}
Suppose $\text{SVD}(\M) = \X\S\Y\trans$. Denote $\tX = \X \S^{\frac{1}{2}}$ and 
$\tY = \Y \S^{\frac{1}{2}}$. Also let $\text{SVD}(\U\V\trans) = \W_\U \D \W_\V\trans$.

Then, we have:
\begin{align*}
&\norm{\e_j\trans (\M\trans - \V\U\trans)}
= \norm{\e_j\trans (\M\trans - \frac{d_1d_2}{m}\proj_{\Omega}(\M)\trans \W_\U\W_\U\trans)} \nn \\
= &\norm{\e_j\trans (\M\trans - \M\trans\W_\U\W_\U\trans +  \M\trans\W_\U\W_\U\trans- \frac{d_1d_2}{m}\proj_{\Omega}(\M)\trans \W_\U\W_\U\trans)} \nn \\
\le &\norm{\e_j\trans \M\trans (\I - \W_\U\W_\U\trans)}
+ \norm{\e_j\trans (\M\trans- \frac{d_1d_2}{m}\proj_{\Omega}(\M)\trans)\W_\U\W_\U\trans }
\end{align*}

For the first term, since $\W_\U\trans\W_{\U, \perp}=0$, we have:
\begin{align*}
&\norm{\e_j\trans \M\trans (\I - \W_\U\W_\U\trans)}
\le \norm{\e_j\trans\Y} \norm{\S\X\trans\W_{\U, \perp}\W_{\U, \perp}\trans} \nn \\
=& \sqrt{\frac{\mu k}{d_2}} \norm{\Y\trans(\M\trans - \W_\V \D \W_\U\trans)\W_{\U, \perp}\W_{\U, \perp}\trans}
\nn \\
\le & \sqrt{\frac{\mu k}{d_2}} \norm{\M\trans - \W_\V \D \W_\U\trans}
\le \sqrt{\frac{\mu k}{d_2}}\cdot\frac{1}{\cn}
\end{align*}
The last step is due to sample $m \ge \mu dk \kappa^2 \log d$, and theorem \ref{thm:initial_fnorm_Asym}.

For the second term, we have:
\begin{align}
&\norm{\e_j\trans (\frac{d_1d_2}{m}\proj_{\Omega}(\M)\trans- \M\trans)\W_\U\W_\U\trans }
= \norm{\tY_j\trans (\frac{d_1d_2}{m}\sum_{i:(i,j)\in \Omega} \tx_i \w_{\U, i}\trans - \sum_i\tx_i \w_{\U, i}\trans)\W_\U\trans } \nn \\
\le &\sqrt{\frac{\mu k}{d_2}} \cdot \frac{d_1d_2}{m} \cdot \norm{\sum_{i:(i,j)\in \Omega} \tx_i \w_{\U, i}\trans - \frac{m}{d_1d_2}\sum_i \tx_i \w_{\U, i}\trans} \label{ini_inco_eq1}
\end{align}
Where $\tx_i$ and $\w_{\U, i}$ are the i-th row of $\tX$ and $\W_{\U}$ respectively.

Let $\phi_{ij} = \tx_i \w_{\U, i}\trans(Z_{ij} - \frac{m}{d_1d_2})$, where
$Z_{ij}$ is Bernoulli$(\frac{m}{d_1d_2})$ random variable, $Z_{ij} = 1$ iff $(i, j) \in \Omega$.
Clearly, we have $\E \phi = 0$, and with probability 1:
\begin{align*}
\norm{\phi_{ij}} \le  2\norm{\tx_i}\norm{\w_{\U, i}} 
\le 2\sqrt{\frac{\mu k}{d_1}} \max_{i} \norm{\e_i\trans \W_{\U}}
\end{align*}
Also, we have variance term:
\begin{align*}
\norm{\sum_{i}\E \phi_{ij}\trans\phi_{ij}}
= &\norm{\sum_{i} \E (Z_{ij} - \frac{m}{d_1d_2})^2 \norm{\tx_i}^2 \w_{\U, i} \w_{\U, i}\trans} \nn \\
\le & \frac{m}{d_1d_2}(1-\frac{m}{d_1d_2})\max_i \norm{\tx_i}^2 \norm{\sum_{i}\w_{\U, i} \w_{\U, i}\trans} \nn \\
\le & \frac{m}{d_1d_2} \frac{\mu k}{d_1} \norm{\W_\U \trans \W_\U} \le \frac{\mu k m}{d_1^2 d_2} \\
\norm{\sum_i \E \phi_{ij}\phi_{ij}\trans} = &\norm{\sum_{i} \E (Z_{ij} - \frac{m}{d_1d_2})^2 \norm{\w_{\U, i}}^2 \tx_i \tx_i\trans} \nn\\
\le & \frac{m}{d_1d_2} \max_{i} \norm{\e_i\trans \W_{\U}}^2
\end{align*}

Therefore, with $m \ge \mu dk \kappa^2 \log d$, by matrix Bernstein, we have with probability at least $1-\frac{1}{d^{10}}$, we know that for all $j\in [d_2]$, there exists some absolute constant $C'$ so that:
\begin{align*}
\norm{\sum_{i:(i,j)\in \Omega} \tx_i \w_{\U, i}\trans - \frac{m}{d_1d_2}\sum_i \tx_i \w_{\U, i}\trans}
\le C'\sqrt{\frac{m \log d}{d_1d_2}}(\sqrt{\frac{\mu k}{d_1}} + \max_{i} \norm{\e_i\trans \W_{\U}})
\end{align*}
Substitue into Eq.\eqref{ini_inco_eq1}, this gives:
\begin{equation*}
\norm{\e_j\trans (\frac{d_1d_2}{m}\proj_{\Omega}(\M)\trans- \M\trans)\W_\U\W_\U\trans }
\le C'\sqrt{\frac{\mu k d_1 \log d}{m}}(\sqrt{\frac{\mu k}{d_1}} + \max_{i} \norm{\e_i\trans \W_{\U}})
\end{equation*}

On the other hand, we also have:
\begin{align*}
\norm{\e_i\trans \W_{\U}} \le &\norm{\e_i\trans \W_{\U} \S}\norm{\S^{-1}}
=2 \cn \norm{\e_i\trans\U\V\trans}
\le 2\cn (\norm{\e_i\trans(\U\V\trans-\M)} + \norm{\e_i\trans\M}) \nn \\
\le &2\cn(\sqrt{\frac{\mu k}{d_1}} + \norm{\e_i\trans(\U\V\trans-\M)})
\end{align*}
This gives overall inequality:
\begin{align*}
\max_{j}\norm{\e_j\trans (\V\U\trans- \M\trans)} \le \sqrt{\frac{\mu k}{d_2}}\cdot\frac{1}{\cn}
+ C''\sqrt{\frac{\mu k d_1 \log d}{m}}\cn(\sqrt{\frac{\mu k}{d_1}} + \max_{i}\norm{\e_i\trans(\U\V\trans-\M)})
\end{align*}
By symmetry, we will also have:
\begin{align*}
\max_{i}\norm{\e_i\trans(\U\V\trans-\M)} \le \sqrt{\frac{\mu k}{d_1}}\cdot\frac{1}{\cn}
+ C''\sqrt{\frac{\mu k d_2 \log d}{m}}\cn(\sqrt{\frac{\mu k}{d_2}} +\max_{j}\norm{\e_j\trans (\V\U\trans- \M\trans)} )
\end{align*}
Combine above two equations and choose $m \ge c_0\mu dk \kappa^2 \log d$ for some large enough $c_0$. We have:
\begin{equation*}
\max_j \norm{\e_j\trans (\M\trans - \V\U\trans)} \le 2 \sqrt{\frac{\mu k}{d_2}}
\end{equation*}
This finishes the proof.

\end{proof}

\begin{theorem} \label{thm:initial_incoherence_Asym}
Let $\U_0\V_0\trans$ be the top-$k$ SVD of $\frac{d_1d_2}{m}\proj_{\Omega}(\M)$, where $|\Omega| =m$.
then there exists universal constant $c_0$, for any
$m \ge c_0 \mu dk \kappa^2 \log d$, with probability at least $1-\frac{1}{d^{10}}$, we have:
\begin{equation*}
\max_i \norm{\e_i\trans \U_0\V_0\trans}^2 \le \frac{ 9\mu k}{d_1} \quad\text{~and~}\quad \max_j \norm{\e_j\trans \V_0\U_0\trans}^2 \le \frac{ 9\mu k}{d_2}
\end{equation*}
\end{theorem}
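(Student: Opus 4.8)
The plan is to obtain Theorem~\ref{thm:initial_incoherence_Asym} as an immediate corollary of Lemma~\ref{lem:initial_incoherence_Asym} (together with its left--right symmetric version, which is established inside the same proof) and the $\mu$-incoherence of $\M$ itself; the route is just the triangle inequality. All of the quantitative effort --- the matrix Bernstein estimate and the bootstrapping between the row norms of $\U_0\V_0\trans-\M$ and of $\V_0\U_0\trans-\M\trans$ --- has already been spent in Lemma~\ref{lem:initial_incoherence_Asym}, so the only thing left is to convert a bound on the row norms of the residual $\U_0\V_0\trans - \M$ into a bound on the row norms of $\U_0\V_0\trans$.

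First I would invoke Lemma~\ref{lem:initial_incoherence_Asym}: on an event $\fE$ of probability at least $1-d^{-10}$, as soon as $m \ge c_0 \mu d k \cn^2\log d$, we have $\max_j\norm{\e_j\trans(\V_0\U_0\trans - \M\trans)} \le 2\sqrt{\mu k/d_2}$ and, by the symmetric statement proved there, $\max_i\norm{\e_i\trans(\U_0\V_0\trans - \M)} \le 2\sqrt{\mu k/d_1}$. Next I would bound the row norms of $\M$ directly from incoherence: writing $\text{SVD}(\M)=\X\S\Y\trans$ and using $\max_i\norm{\e_i\trans\X}^2\le\mu k/d_1$, $\max_j\norm{\e_j\trans\Y}^2\le\mu k/d_2$ together with the running normalization $\norm{\M}=\norm{\S}=1$, one gets $\norm{\e_i\trans\M}=\norm{\e_i\trans\X\S\Y\trans}\le\norm{\e_i\trans\X}\,\norm{\S}\le\sqrt{\mu k/d_1}$ and, symmetrically, $\norm{\e_j\trans\M\trans}\le\sqrt{\mu k/d_2}$.

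Then the triangle inequality closes the argument: on $\fE$, for every $i$, $\norm{\e_i\trans\U_0\V_0\trans}\le\norm{\e_i\trans\M}+\norm{\e_i\trans(\U_0\V_0\trans-\M)}\le 3\sqrt{\mu k/d_1}$, hence $\max_i\norm{\e_i\trans\U_0\V_0\trans}^2\le 9\mu k/d_1$; the identical computation applied to $\V_0\U_0\trans$ and $\M\trans$ gives $\max_j\norm{\e_j\trans\V_0\U_0\trans}^2\le 9\mu k/d_2$. Since both inequalities hold on the single event $\fE$, the theorem follows with the same sample-complexity threshold and failure probability as Lemma~\ref{lem:initial_incoherence_Asym}. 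There is no genuine obstacle here; the only points to verify are that the constant is exactly $1+2=3$ (so the squared bound is $9$) and that one does not accidentally claim a better probability or smaller $m$ than Lemma~\ref{lem:initial_incoherence_Asym} provides.
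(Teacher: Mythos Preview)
Your proposal is correct and matches the paper's own proof essentially line for line: the paper also invokes Lemma~\ref{lem:initial_incoherence_Asym} (and its symmetric counterpart established within that lemma), bounds $\norm{\e_j\trans\M\trans}\le\sqrt{\mu k/d_2}$ from incoherence and $\norm{\M}=1$, and then applies the triangle inequality to get $3\sqrt{\mu k/d_2}$, hence the squared bound $9\mu k/d_2$; the $d_1$ side is handled by symmetry.
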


\begin{proof}
By Theorem \ref{lem:initial_incoherence_Asym}, we know for any $j\in[d_2]$:
\begin{equation*}
\norm{\e_j\trans (\M\trans - \V_0\U_0\trans)}\le 2 \sqrt{\frac{\mu k}{d_2}}
\end{equation*}
Therefore, we have:
\begin{align*}
\norm{\e_j\trans \V_0\U_0\trans} 
\le &[\norm{\e_j\trans\M\trans} + \norm{\e_j\trans (\M\trans - \V_0\U_0\trans)}] 
\le 3\sqrt{\frac{ \mu k }{d_2}}
\end{align*}
By symmetry, we also know for any $i\in[d_1]$
\begin{equation*}
\norm{\e_i\trans \U_0\V_0\trans} \le 3\sqrt{\frac{\mu k}{d_1}}
\end{equation*}
Which finishes the proof.
\end{proof}

For the special case where $\M \in \R^{d\times d}$ is symmetric and PSD, we can easily extends to have following:
\begin{corollary} \label{cor:initial_incoherence_sym}
Let $\U_0\U_0\trans$ be the top-$k$ SVD of $\frac{d^2}{m}\proj_{\Omega}(\M)$, where $|\Omega| =m$.
then there exists universal constant $c_0$, for any
$m \ge c_0 \mu dk \kappa^2 \log d$, with probability at least $1-\frac{1}{d^{10}}$, we have:
\begin{equation*}
\max_i \norm{\e_i\trans \U_0}^2 \le \frac{ 10\mu k \cn}{d}
\end{equation*}
\end{corollary}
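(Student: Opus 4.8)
The plan is to derive this as a short linear-algebra consequence of the asymmetric incoherence bound (Theorem~\ref{thm:initial_incoherence_Asym}), exploiting the extra structure of the PSD case. Since $\U_0\U_0\trans$ is the top-$k$ SVD of the symmetric matrix $\frac{d^2}{m}\proj_{\Omega}(\M)$, we may write $\U_0 = \W_U\D^{1/2}$ with $\W_U\in\R^{d\times k}$ having orthonormal columns and $\D\in\R^{k\times k}$ the diagonal matrix of the top-$k$ eigenvalues; then $\U_0\U_0\trans = \W_U\D\W_U\trans$ and $\U_0 = (\U_0\U_0\trans)\W_U\D^{-1/2}$, so that
\begin{equation*}
\norm{\e_i\trans\U_0} \;\le\; \norm{\e_i\trans(\U_0\U_0\trans)}\cdot\norm{\W_U\D^{-1/2}} \;=\; \frac{\norm{\e_i\trans(\U_0\U_0\trans)}}{\sqrt{\sigma_k(\U_0\U_0\trans)}}.
\end{equation*}
So it suffices to upper bound $\norm{\e_i\trans(\U_0\U_0\trans)}$ and lower bound $\sigma_k(\U_0\U_0\trans)$.

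For the numerator, I would simply invoke Theorem~\ref{thm:initial_incoherence_Asym} in the case $d_1=d_2=d$, $\V_0=\U_0$ (which is exactly the PSD instance, with the same sample requirement $m\ge c_0\mu dk\cn^2\log d$), giving $\max_i\norm{\e_i\trans(\U_0\U_0\trans)}^2 \le \frac{9\mu k}{d}\norm{\M}$. For the denominator, I would use that the initialization is spectrally close to $\M$: running the estimate in Theorem~\ref{thm:initial_fnorm_Asym} but keeping the spectral norm rather than converting to Frobenius (the extra $\sqrt{k}$ there comes only from that conversion, so the spectral bound needs just $m\ge c_0\mu dk\cn^2\log d$) yields $\norm{\M-\U_0\U_0\trans}\le\frac{1}{20}\sigmin(\M)$, and Weyl's inequality then gives $\sigma_k(\U_0\U_0\trans)\ge\sigma_k(\M)-\frac{1}{20}\sigmin(\M)=\frac{19}{20}\sigmin(\M)$.

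Combining the three estimates gives $\max_i\norm{\e_i\trans\U_0}^2 \le \frac{9\mu k\norm{\M}}{d}\cdot\frac{20}{19\,\sigmin(\M)} = \frac{180}{19}\cdot\frac{\mu k\cn(\M)}{d} \le \frac{10\mu k\cn(\M)}{d}$. There is no real obstacle; the only thing to be careful about is that Theorem~\ref{thm:initial_incoherence_Asym} controls $\norm{\e_i\trans(\U_0\U_0\trans)}$ while the corollary asks for $\norm{\e_i\trans\U_0}$, and bridging the two costs exactly one factor of $\sigma_k(\U_0\U_0\trans)^{-1/2}\asymp\sqrt{\cn(\M)}$ --- which is precisely why a $\cn(\M)$ (as opposed to no condition number) appears on the right-hand side. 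One should also verify the absolute constants leave a little slack, e.g.\ $\frac{180}{19}<10$, and that $m\ge c_0\mu dk\cn^2\log d$ is large enough for both theorems invoked.
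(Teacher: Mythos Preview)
Your proposal is correct and follows essentially the same approach as the paper: both use Theorem~\ref{thm:initial_incoherence_Asym} to bound $\norm{\e_i\trans\U_0\U_0\trans}$, a spectral perturbation bound (via Weyl) from the initialization analysis to lower bound $\sigma_k(\U_0\U_0\trans)$, and then the inequality $\norm{\e_i\trans\U_0}^2 \le \norm{\e_i\trans\U_0\U_0\trans}^2/\sigmin(\U_0\trans\U_0)$ to conclude. Your remark that the spectral bound (as opposed to the Frobenius bound in Theorem~\ref{thm:initial_fnorm_Asym}) already holds under the weaker hypothesis $m\ge c_0\mu dk\cn^2\log d$ is a nice point of care that the paper leaves implicit.
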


\begin{proof}
By Corollary \ref{thm:initial_incoherence_Asym}, we have:
\begin{equation*}
\max_i \norm{\e_i\trans \U_0\U_0\trans}^2 \le \frac{ 9\mu k}{d}
\end{equation*}
On the other hand, by Theorem \ref{thm:initial_fnorm_Asym}, we have:
\begin{align*}
\sigmin(\U_0\trans \U_0) = \sigma_k(\U_0\U_0\trans)
\ge \sigma_k(\M) - \norm{\M - \U_0\U_0\trans} 
\ge \frac{9}{10\cn}
\end{align*}
Therefore, for any $i\in[d]$ we have:
\begin{align*}
\norm{\e_i\trans \U_0}^2 \le \frac{\norm{\e_i\trans \U_0\U_0\trans}^2}
{\sigmin(\U_0\trans \U_0)}
\le \frac{10 \mu k\cn}{d}
\end{align*}
Which finishes the proof.
\end{proof}

Finally, Lemma \ref{lem:initial_Asym} can be easily concluded from Theorem \ref{thm:initial_fnorm_Asym} and Theorem \ref{thm:initial_incoherence_Asym}, while Lemma \ref{lem:initial_Sym} is also directly proved by Theorem \ref{thm:initial_fnorm_Asym} and Corollary \ref{cor:initial_incoherence_sym}.


\section{Proof of Symmetric PSD Case}

In this section, we prove Theorem \ref{thm:main_Sym}. WLOG, we continue to assume $\norm{\M} = 1$ in all proof. 
Also, when it's clear from the context, we use $\cn$ to specifically to represent $\cn(\M)$.
Then $\sigmin(\M) = \frac{1}{\cn}$. 
Also in this section, we always denote $\text{SVD}(\M) = \X\S\X\trans$, and $\text{SVD}(\U\U\trans) = \W \D \W\trans$.

The most essential part to prove Theorem \ref{thm:main_Sym} is proving following Theorem:
\begin{theorem} [restatement of Theorem \ref{thm:localcontrol_sketch}] \label{thm:localcontrol}
Let $f(\U) = \norm{\U\U\trans - \M}_F^2$ and $g_i(\U) = \norm{\e_i\trans \U}^2$. 
Suppose after initialization, we have:
\begin{equation*}
f(\U_0) \le \left(\frac{1}{20 \cn}\right)^2, \quad\quad \max_i g_i(\U_0) \le \frac{10\mu k \cn^2}{d}
\end{equation*}
Then, there exist some absolute constant $c$ such that for any learning rate $\eta < \frac{c}{\mu dk \cn^3 \log d}$,
with at least $1-\frac{T}{d^{10}}$ probability, we will have for all $t\le T$ that:
\begin{equation*}
f(\U_t) \le (1-\frac{\eta}{2\cn})^t\left(\frac{1}{10 \cn}\right)^2, \quad\quad \max_i g_i(\U_t) \le \frac{20\mu k \cn^2}{d}
\end{equation*}
\end{theorem}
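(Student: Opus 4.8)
\textbf{Proof proposal for Theorem~\ref{thm:localcontrol}.} The plan is to prove the theorem by a stopping-time argument coupled with two martingale concentration bounds, and then read off Theorem~\ref{thm:main_Sym} as a corollary (its Frobenius bound is the first inequality of Eq.~\eqref{cond_localcontrol}, and Lemma~\ref{lem:initial_Sym}/Eq.~\eqref{initial_cond_Sym} supplies the hypotheses). Write $\mathcal{F}_t = \sigma(\U_0,\ldots,\U_t)$ and let $\tau$ be the first index $t\ge 1$ at which either $f(\U_t) > (1-\tfrac{\eta}{2\cn})^t(\tfrac{1}{10\cn})^2$ or $\max_i g_i(\U_t) > \tfrac{20\mu k\cn^2}{d}$; the goal is $\Pr(\tau\le T)\le T/d^{10}$, since on $\{\tau>T\}$ both displayed bounds in Eq.~\eqref{cond_localcontrol} hold for all $t\le T$. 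The feature that makes confinement possible with a \emph{constant} step size is the self-bounding property of the stochastic gradient $SG(\U)$ of Eq.~\eqref{SG_main}: it is unbiased, $\E SG(\U)=\nabla f(\U)$; it satisfies the pointwise identity $\fnorm{SG(\U)}^2 = 4d^4(\U\U\trans-\M)_{ij}^2(g_i(\U)+g_j(\U))$; and averaging over $(i,j)$ gives $\E\fnorm{SG(\U)}^2\le 8d^2(\max_\ell g_\ell(\U))f(\U)$, so the second-order term in the SGD descent inequality vanishes as $f\to 0$.

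For $f(\U_t)$ I would argue as follows. On $\{t<\tau\}$ the iterate lies in the region $\mathcal{D}$ of Lemma~\ref{lem:localguarantee_sketch}, where $f$ is $32\norm{\M}$-smooth (Lemma~\ref{lem:smoothness_sketch}) and obeys $\fnorm{\nabla f(\U_t)}^2\ge 2\sigmin(\M)f(\U_t)$ (Lemma~\ref{lem:pseudostronglyconvex_sketch}), and moreover $\max_\ell g_\ell(\U_t)\le \tfrac{20\mu k\cn^2}{d}$; plugging the two gradient bounds into the smoothness expansion and using $\eta\le c/(\mu dk\cn^3\log d)$ yields the one-step contraction $\E[f(\U_{t+1})\mid\mathcal{F}_t]\le(1-\tfrac{3\eta}{2\cn})f(\U_t)$. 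Hence the stopped, rescaled process $Z_t\defeq(1-\tfrac{\eta}{2\cn})^{-(t\wedge\tau)}f(\U_{t\wedge\tau})$ is a supermartingale started below $(\tfrac{1}{20\cn})^2$, which on $\{t<\tau\}$ has increments bounded by $O(\tfrac{c\sqrt{\mu k}}{\sqrt d\,\cn\log d})\sqrt{f(\U_t)}$ almost surely — here incoherence of $\M$ and of $\U_t$ bounds $|(\U_t\U_t\trans-\M)_{ij}|$ by $O(\mu k\cn^2/d)$, far below $\sqrt{f}$ — and conditional variance of order $\tfrac{c^2}{\mu dk\cn^4\log^2 d}f(\U_t)^2$. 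A martingale Bernstein/Freedman bound, applied in phases of length $\Theta(\tfrac{\cn}{\eta}\log d)$ over which $f$ halves in expectation and the rescaling factor grows by only a $\poly(d)$ amount (so that within a phase the summed conditional variance is small), then shows that the $f$-part of $\{\tau\le T\}$ has probability at most $T/d^{11}$.

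The incoherence coordinates need the more delicate analysis, and this is the main obstacle. For fixed $i$, $g_i(\U_{t+1})=g_i(\U_t)$ unless $i$ is one of the two sampled indices (probability $\le 2/d$); computing the conditional expectation on $\{t<\tau\}$ gives a drift of the form $\E[g_i(\U_{t+1})\mid\mathcal{F}_t]\le g_i(\U_t)-\tfrac{9\eta}{10\cn}g_i(\U_t)\,\mathbf{1}[g_i(\U_t)\ge\tfrac{10\mu k\cn}{d}]+\tfrac{C\eta\mu k\cn}{d\log d}$. The key structural point is that the linear term $-8\eta\,\e_i\trans(\U_t\U_t\trans-\M)\U_t\U_t\trans\e_i$ is genuinely \emph{contracting} once $g_i(\U_t)$ exceeds a threshold of order $\mu k\cn/d$, using $\sigmin(\U_t\U_t\trans)\ge \tfrac{9}{10}\sigmin(\M)$ on $\mathcal{D}$ together with $\norm{\e_i\trans\M}^2\le\mu k/d$; since the cap $\tfrac{20\mu k\cn^2}{d}$ sits a factor $\Theta(\cn)$ above that threshold, the dynamics of $g_i$ are mean-reverting well below the cap. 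Compensating this drift makes $g_i(\U_{t\wedge\tau})$ a supermartingale whose increments are $O(\tfrac{\mu k\cn}{d\log d})$ almost surely on $\{t<\tau\}$ — a factor $\Theta(\cn\log d)$ smaller than the slack $\tfrac{10\mu k\cn^2}{d}$ between the starting value and the cap — and whose conditional variance carries an extra $1/d$ from the $\le 2/d$ update probability. Freedman's inequality then bounds the $g_i$-part of $\{\tau\le T\}$ by $T/d^{12}$ for each $i$; a union bound over $i\in[d]$ and over the $f$-part gives $\Pr(\tau\le T)\le T/d^{10}$, and the linear-convergence bound for $f$ holds on $\{\tau>T\}$ by definition of $\tau$.

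Thus the hard work is entirely in the confinement step, not in the rate. The two places where the \emph{large} step size $\eta\sim 1/(\mu dk\cn^3\log d)$ — rather than something $1/\poly(d)$ — must be spent are (i) driving the Bernstein/Freedman tails down to $d^{-\Omega(1)}$, where the $\log d$ in $\eta$ is exactly what buys the polynomial probability, and (ii) keeping the incoherence drift and fluctuation below the constant-factor slack between $\tfrac{10\mu k\cn^2}{d}$ and $\tfrac{20\mu k\cn^2}{d}$; the argument is kept non-circular by carrying the stopping time $\tau$ through every estimate, so the local-geometry lemmas are only ever invoked on $\{t<\tau\}$. For the asymmetric Theorem~\ref{thm:main_Asym} the same scheme applies, but the per-step renormalization $\tU,\tV\leftarrow \W_U\D^{1/2},\W_V\D^{1/2}$ must be tracked through all of these estimates, which is what makes that proof substantially longer.
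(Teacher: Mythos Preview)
Your high-level architecture matches the paper exactly: define the good event (equivalently, the stopping time $\tau$), build one supermartingale for $f(\U_t)$ and one for each $g_i(\U_t)$ by multiplying by the indicator of the good event and rescaling, then apply a Bernstein/Freedman inequality to each and union bound. The $g_i$-analysis is also essentially what the paper does, although the paper writes it more cleanly as a contraction toward a fixed point: on the good event one has
\[
\E[g_i(\U_{t+1})\mid\fF_t]\le (1-\tfrac{4\eta}{\cn})g_i(\U_t)+\tfrac{60\eta\mu k\cn}{d},
\]
so $G_{it}=(1-\tfrac{4\eta}{\cn})^{-t}\bigl(g_i(\U_t)1_{\fE_{t-1}}-15\tfrac{\mu k\cn^2}{d}\bigr)$ is a supermartingale. (Your threshold ``$\tfrac{10\mu k\cn}{d}$'' is off by a factor $\cn$; the fixed point is $\Theta(\mu k\cn^2/d)$.)

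Where the proposal has a genuine gap is the almost-sure increment bound for the $f$-supermartingale. You bound $|(\U_t\U_t\trans-\M)_{ij}|$ by the constant $O(\mu k\cn^2/d)$, which gives a martingale increment of order $\eta\cdot\text{const}\cdot\sqrt{f(\U_t)}$; your phase-based argument is then meant to compensate. But with an a.s.\ bound proportional to $\sqrt{f}$ rather than $f$, the $Rs$ term in Bernstein fails for late times no matter how you rescale or phase: once $f(\U_t)\ll (\eta\cdot\text{const}\cdot\log d)^2$ the a.s.\ bound exceeds the remaining slack $(1-\tfrac{\eta}{2\cn})^t(\tfrac{1}{10\cn})^2$, and the argument does not close for arbitrary $T$. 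The paper resolves this with a sharper entrywise estimate that is the key technical step: decomposing through $\proj_\X+\proj_{\X_\perp}$ and using incoherence of both $\X$ and $\W$ (the singular vectors of $\U_t\U_t\trans$), one gets
\[
\|\U_t\U_t\trans-\M\|_\infty\;\le\;O\bigl(\sqrt{\mu k\cn^3/d}\bigr)\sqrt{f(\U_t)}\,,
\]
so that the a.s.\ increment of $F_t=(1-\tfrac{\eta}{\cn})^{-t}f(\U_t)1_{\fE_{t-1}}$ is of order $(1-\tfrac{\eta}{\cn})^{-t}\eta\,O(\mu dk\cn^{2.5})f(\U_t)$, i.e.\ proportional to $f$ rather than $\sqrt f$. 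With this, both the variance sum and the a.s.\ bound track the geometric decay of $f$, and a single Bernstein application (no phasing) suffices. Note also that the paper rescales by $(1-\tfrac{\eta}{\cn})^{-t}$ (the expectation contraction rate), not $(1-\tfrac{\eta}{2\cn})^{-t}$; this gap between the two rates is precisely what makes $\sum_\tau[(1-\tfrac{\eta}{2\cn})/(1-\tfrac{\eta}{\cn})]^{2(t-\tau)}$ a convergent geometric sum.
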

Theorem \ref{thm:localcontrol} says once initialization algorithm provides $\U_0$ in good local region, with high probability $\U_t$ will always stay in this good region and $f(\U_t)$ is linear converging to 0. With this theorem, we can then immediately conclude Theorem \ref{thm:main_Sym} from Theorem \ref{thm:localcontrol} and Lemma \ref{lem:initial_Sym}.

The rest of this section all focus on proving Theorem \ref{thm:localcontrol}.
First, we prepare with a few lemmas about the property of objective function, and the spectral property of $\U$ in a local Frobenius ball around optimal. Then, we prove Theorem \ref{thm:localcontrol} by constructing two supermartingales related to $f(\U_t), g_i(\U_t)$ each, and applying concentration argument.

For symmetric PSD case, we denote the stochastic gradient as:
\begin{equation*}
SG(\U) = 2 d^2(\U\U\trans - \M)_{ij} (\e_i\e_j\trans + \e_j\e_i\trans)\U
\end{equation*}
The update in Algorithm \ref{algo:MC_Sym} can be now written as:
\begin{equation} \label{update_sym}
\U_{t+1} \leftarrow \U_t - \eta SG(\U_t)
\end{equation}
We immediately have the property:
\begin{equation*}
\E SG(\U) = \nabla f(\U) = 4(\U\U\trans - \M)\U
\end{equation*}

\subsection{Geometric Properties in Local Region}

First, we prove two lemmas w.r.t the smoothness and property similar to strongly convex for objective function:

\begin{lemma} (restatement of Lemma \ref{lem:smoothness_sketch})\label{lem:smoothness}
Within the region $\mathcal{D} = \{\U | \norm{\U} \le \Gamma\}$, we have function $f(\U) = \fnorm{\M-\U\U\trans}^2$ satisfying for any $\U_1, \U_2 \in \mathcal{D}$:
\begin{equation*}
\fnorm{\nabla f(\U_1) - \nabla f(\U_2)} \le \beta \fnorm{\U_1 - \U_2}
\end{equation*}
where smoothness parameter $\beta = 16 \max\{\Gamma^2, \norm{\M}\}$.
\end{lemma}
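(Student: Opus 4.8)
The plan is to work directly with the closed form of the gradient, $\nabla f(\U) = 4(\U\U\trans - \M)\U$, obtained by an elementary differentiation of $f(\U) = \fnorm{\M - \U\U\trans}^2$. With this in hand, the quantity to control is
\[
\fnorm{\nabla f(\U_1) - \nabla f(\U_2)} = 4\fnorm{\U_1\U_1\trans\U_1 - \U_2\U_2\trans\U_2 - \M(\U_1 - \U_2)},
\]
so the whole lemma reduces to bounding the difference of the cubic term $\U\U\trans\U$ plus a trivial linear term.

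For the cubic term I would telescope: writing $\Delta \defeq \U_1 - \U_2$, inserting and subtracting $\U_1\U_1\trans\U_2$ and $\U_1\U_2\trans\U_2$ gives
\[
\U_1\U_1\trans\U_1 - \U_2\U_2\trans\U_2 = \U_1\U_1\trans\Delta + \U_1\Delta\trans\U_2 + \Delta\U_2\trans\U_2.
\]
Then apply the triangle inequality together with the mixed submultiplicativity of the Frobenius and spectral norms, namely $\fnorm{\A\B} \le \norm{\A}\fnorm{\B}$ and $\fnorm{\A\B} \le \fnorm{\A}\norm{\B}$. Each of the three terms is thus bounded by a product of two spectral norms of $\U_1$ or $\U_2$ times $\fnorm{\Delta}$; since $\norm{\U_1}, \norm{\U_2} \le \Gamma$ on $\mathcal{D}$, each term is at most $\Gamma^2\fnorm{\Delta}$, giving $3\Gamma^2\fnorm{\Delta}$ in total. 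The linear term contributes $\fnorm{\M\Delta} \le \norm{\M}\fnorm{\Delta}$.

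Combining, $\fnorm{\nabla f(\U_1) - \nabla f(\U_2)} \le 4(3\Gamma^2 + \norm{\M})\fnorm{\Delta} \le 16\max\{\Gamma^2, \norm{\M}\}\fnorm{\Delta}$, which is the claim with $\beta = 16\max\{\Gamma^2, \norm{\M}\}$. There is no real obstacle here; the only point requiring care is to consistently use the spectral norm (not the Frobenius norm) for the two factors coming from $\U_1, \U_2$ in each telescoped term, since otherwise the bound would pick up a spurious $\sqrt{k}$ factor absent from the statement. The same telescoping structure will reappear in the asymmetric analogue, so it is worth recording cleanly here.
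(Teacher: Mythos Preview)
Your proposal is correct and matches the paper's proof essentially line for line: the paper also computes $\nabla f(\U) = 4(\U\U\trans - \M)\U$, splits off the linear $\M$-term by the triangle inequality, uses the identical telescoping $\U_1\U_1\trans\U_1 - \U_2\U_2\trans\U_2 = \U_1\U_1\trans\Delta + \U_1\Delta\trans\U_2 + \Delta\U_2\trans\U_2$, and bounds each piece by $\Gamma^2\fnorm{\Delta}$ via $\fnorm{\A\B} \le \norm{\A}\fnorm{\B}$ to reach $4(3\Gamma^2 + \norm{\M})\fnorm{\Delta} \le 16\max\{\Gamma^2,\norm{\M}\}\fnorm{\Delta}$.
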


\begin{proof}
Inside region $\mathcal{D}$, we have:
\begin{align}
 & \fnorm{\nabla f(\U_1) - \nabla f(\U_2)} \nn\\
 = &\fnorm{4(\U_1\U_1\trans - \M)\U_1 - 4(\U_2\U_2\trans - \M)\U_2} \nn\\
 \le& 4\fnorm{\U_1\U_1\trans\U_1 - \U_2 \U_2\trans\U_2} + 4\fnorm{\M(\U_1 - \U_2)} \nn\\
 =&4\fnorm{\U_1\U_1\trans(\U_1 - \U_2) + \U_1(\U_1 - \U_2)\trans\U_2 + (\U_1 - \U_2) \U_2\trans\U_2} 
 + 4\fnorm{\M(\U_1 - \U_2)}\nn\\
 \le & 12\max\{\norm{\U_1}^2, \norm{\U_2}^2\}\fnorm{\U_1 - \U_2} + 4\norm{\M}\fnorm{\U_1 - \U_2}\nn \\
 \le & 16 \max\{\Gamma^2, \norm{\M}\}\fnorm{\U_1 - \U_2} \nn
\end{align}
\end{proof}

\begin{lemma} (restatement of Lemma \ref{lem:pseudostronglyconvex_sketch})\label{lem:pseudostronglyconvex}
Within the region $\mathcal{D} = \{\U | \sigmin(\X\trans\U) \ge \gamma \}$, then we have function $f(\U) = \fnorm{\M-\U\U\trans}^2$ satisfying:
\begin{equation*}
\|\nabla f(\U)\|^2_F \ge \alpha f(\U)
\end{equation*}
where constant $\alpha = 4 \gamma^2$.
\end{lemma}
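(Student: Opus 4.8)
The plan is to prove the lower bound $\|\nabla f(\U)\|_F^2 \ge 4\gamma^2 f(\U)$ by relating the gradient $\nabla f(\U) = 4(\U\U\trans - \M)\U$ to the residual matrix $\Delta \defeq \U\U\trans - \M$ whose Frobenius norm squared is exactly $f(\U)$. Writing $\nabla f(\U) = 4\Delta \U$, we have $\|\nabla f(\U)\|_F^2 = 16\,\|\Delta\U\|_F^2 = 16\,\tr(\U\trans\Delta^2\U) = 16\,\tr(\Delta^2 \U\U\trans)$. So it suffices to show $\tr(\Delta^2 \U\U\trans) \ge \tfrac{\gamma^2}{4}\,\tr(\Delta^2)$, i.e. that $\U\U\trans$ acts on the range of $\Delta$ with a spectral lower bound of order $\gamma^2$.

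First I would use the structural fact that $\Delta = \U\U\trans - \M$ has range contained in the span of the columns of $\U$ together with the columns of $\X$ (the top-$k$ eigenspace of $\M$). More precisely, since both $\U\U\trans$ and $\M = \X\S\X\trans$ have column spaces inside $\mathrm{span}(\U) \cup \mathrm{span}(\X)$, the range of $\Delta$ lies in this (at most $2k$-dimensional) subspace. The key point is that for any vector $\w$ in the range of $\Delta$, I want $\|\U\trans \w\|^2 \ge c\gamma^2 \|\w\|^2$. This is where the hypothesis $\sigmin(\X\trans\U) \ge \gamma$ enters: it forces the column space of $\U$ to be ``well-aligned'' with $\mathrm{span}(\X)$, so that no direction in the relevant subspace is nearly orthogonal to $\mathrm{span}(\U)$. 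I would make this quantitative by decomposing an arbitrary $\w = \Delta \z$ and showing that its component outside $\mathrm{span}(\U)$ is small relative to $\gamma$; the identity $\proj_\X \U = \X(\X\trans\U)$ combined with $\sigmin(\X\trans\U)\ge\gamma$ gives a bound on how much of $\mathrm{span}(\X)$ fails to be captured by $\mathrm{span}(\U)$.

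A cleaner route, which I expect to be the actual argument, is to write $\tr(\Delta^2\U\U\trans) = \|\Delta\U\|_F^2$ and note that $\Delta\U = \U\U\trans\U - \M\U = \U(\U\trans\U) - \X\S\X\trans\U$. Then I would lower bound $\|\Delta\U\|_F$ by projecting onto $\X$: $\|\Delta\U\|_F \ge \|\X\trans\Delta\U\|_F = \|\X\trans\U\U\trans\U - \S\X\trans\U\|_F = \|(\X\trans\U\U\trans - \S\X\trans)\U\|_F \ge \sigmin(\X\trans\U)\,\|\X\trans\U\U\trans - \S\X\trans\|_F$, using $\sigmin(\X\trans\U)\ge\gamma$ on the right factor (reading $\U$ on the right). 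Wait — more carefully, I want $\|A\U\|_F \ge \sigmin(\U)\|A\|_F$ type bounds, so I should be careful whether I peel $\U$ off the left or right. The honest plan: show $\|\Delta\U\|_F \ge \gamma\,\|\X\trans\Delta\|_F$ by inserting $\X\trans$ and using $\|\X\trans\Delta\U\|_F \ge \sigmin(\X\trans\U)\|\text{something}\|$, then separately show $\|\X\trans\Delta\|_F \ge \tfrac12\|\Delta\|_F$ — this last step holds because $\Delta$'s range is essentially captured by $\X$ when $\U\U\trans$ is close to $\M$, or more robustly because $\Delta = \U\U\trans - \M$ and at least the ``$-\M$'' part lives entirely in $\mathrm{span}(\X)$ while the $\U\U\trans$ part is controlled via the same $\sigmin(\X\trans\U)\ge\gamma$ condition.

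The main obstacle I anticipate is precisely this second inequality: controlling the component of $\Delta$ orthogonal to $\X$. The residual $\Delta = \U\U\trans - \M$ can have mass in $\mathrm{span}(\U)\setminus\mathrm{span}(\X)$, and bounding that mass requires using $\sigmin(\X\trans\U)\ge\gamma$ to argue $\mathrm{span}(\U)$ cannot stray far from $\mathrm{span}(\X)$ — quantitatively, if $\W$ is an orthonormal basis for $\mathrm{span}(\U)$, then $\sigmin(\X\trans\U)\ge\gamma$ together with $\|\U\| \le \sqrt{2\norm{\M}}$ (available in the local region via Lemma~\ref{lem:localguarantee_sketch}) forces $\sigmin(\X\trans\W)$ to be bounded below, hence the principal angles between $\mathrm{span}(\U)$ and $\mathrm{span}(\X)$ are bounded. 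Getting the constant to come out to exactly $4\gamma^2$ rather than $c\gamma^2$ for some smaller $c$ is the delicate part, and I suspect the proof handles it by a direct computation: expanding $\|\Delta\U\|_F^2 = \tr((\U\U\trans - \M)^2\U\U\trans)$ and using that on the range of $\U$, the operator $\U\U\trans$ dominates, combined with $\tr((\U\U\trans-\M)^2\U\U\trans) \ge \sigmin(\X\trans\U)^2 \cdot \tr((\U\U\trans - \M)^2)/(\text{const})$ obtained via an eigenvalue interlacing / variational argument on the $2k$-dimensional relevant subspace. I would structure the final write-up as: (1) reduce to $\|\Delta\U\|_F^2 \ge \gamma^2 f(\U)$; (2) insert $\proj_\X + \proj_{\X_\perp}$; (3) bound the $\proj_\X$ part from below using $\sigmin(\X\trans\U)\ge\gamma$; (4) bound the $\proj_{\X_\perp}$ contribution to $f(\U)$ from above, again via $\sigmin(\X\trans\U)\ge\gamma$; (5) combine.
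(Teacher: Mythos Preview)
Your plan has a real gap. You decompose via $\proj_\X$ versus $\proj_{\X_\perp}$ and then need the intermediate inequality $\|\X\trans\Delta\|_F \ge \tfrac12\|\Delta\|_F$. Under the \emph{only} hypothesis of this lemma, $\sigmin(\X\trans\U)\ge\gamma$, that inequality is false. Take $k=1$ and $\U=\gamma\X+L\X_\perp$ for large $L$: then $\X\trans\U=\gamma$ so the hypothesis holds, but $\Delta=\U\U\trans-\M$ has a term $L^2\X_\perp\X_\perp\trans$, so $\|\X\trans\Delta\|_F/\|\Delta\|_F\approx\gamma/L\to 0$. You patch this by importing $\|\U\|\le\sqrt{2\|\M\|}$ from Lemma~\ref{lem:localguarantee_sketch}, but that is a separate lemma with a different hypothesis; the present lemma is stated and proved standalone, assuming only $\sigmin(\X\trans\U)\ge\gamma$. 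Your other ``peel $\U$ off the right'' steps are also shaky: for $k\times d$ matrices $A$ and $d\times k$ tall $\U$ there is no clean inequality $\|A\U\|_F\ge c\|A\|_F$.

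The paper's proof decomposes along the column space of $\U$, not of $\M$. With $\W\D\W\trans=\text{SVD}(\U\U\trans)$ one has $\proj_{\W_\perp}\U=0$, so $\proj_{\W_\perp}(\U\U\trans-\M)\U=-\proj_{\W_\perp}\M\U=-\proj_{\W_\perp}\X\S(\X\trans\U)$, and now the $k\times k$ factor $\X\trans\U$ peels off with $\sigmin(\X\trans\U)\ge\gamma$, giving $\|\proj_{\W_\perp}\M\U\|_F^2\ge\gamma^2\|\proj_{\W_\perp}\M\|_F^2$. For the $\proj_\W$ part, one uses $\|\proj_\W\Delta\U\|_F^2\ge\sigmin(\D)\|\proj_\W\Delta\proj_\W\|_F^2$ together with $\sigmin(\D)=\lambda_{\min}(\U\trans\U)\ge\lambda_{\min}(\U\trans\proj_\X\U)=\sigmin^2(\X\trans\U)\ge\gamma^2$. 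Summing and comparing to the four-block decomposition of $\|\Delta\|_F^2$ along $\W,\W_\perp$ (using symmetry of $\M$ so $\|\proj_{\W_\perp}\M\proj_\W\|_F=\|\proj_\W\M\proj_{\W_\perp}\|_F$) gives the exact constant $4\gamma^2$. The key idea you are missing is to project onto $\mathrm{span}(\U)$ first so that the awkward $\U\U\trans$ term vanishes on the complement, leaving only $\M\U$ where the hypothesis applies cleanly.
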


\begin{proof}
Inside region $\mathcal{D}$, recall we denote $\W\D\W\trans = \text{SVD}(\U\U\trans)$, thus we have:
\begin{align}
&\fnorm{\nabla f(\U)}^2  = 16\fnorm{(\U\U\trans - \M)\U}^2 \nn \\
= &16[\fnorm{\proj_\W(\U\U\trans - \M)\U}^2 + \fnorm{\proj_{\W_\perp}(\U\U\trans - \M)\U}^2] \nn \\
\ge & 16[\sigmin(\D)\fnorm{ \proj_\W(\U\U\trans - \M)\proj_\W}^2 + \fnorm{\proj_{\W_\perp} \M\U}^2] \nn \\
\ge  & 16[\sigmin(\D)\fnorm{ \U\U\trans - \proj_\W\M\proj_\W}^2 + \fnorm{\proj_{\W_\perp} \M\U}^2] \nn
\end{align}

On the other hand, we have:
\begin{align}
&\fnorm{\proj_{\W_\perp} \M \U}^2
= \fnorm{\proj_{\W_\perp} \X \mSigma \X\trans \U}^2
\ge \sigmin^2(\X\trans \U)\fnorm{\proj_{\W_\perp} \X \mSigma }^2 \nn \\
= &\sigmin^2(\X\trans \U) \tr(\proj_{\W_\perp}\M^2\proj_{\W_\perp})
=\sigmin^2(\X\trans \U)\fnorm{\proj_{\W_\perp} \M }^2 \nn
\end{align}
and 
\begin{equation*}
\sigmin(\D) = \lambda_{\min}(\U\trans\U)
\ge \lambda_{\min}(\U\trans \proj_{\X} \U) = \sigmin^2(\X\trans \U)
\end{equation*}
Therefore, combine all above, we have:
\begin{align}
&\fnorm{\nabla f(\U)}^2  
\ge  16\sigmin^2(\X\trans \U)[\fnorm{ \U\U\trans - \proj_\W\M\proj_\W}^2 + \fnorm{\proj_{\W_\perp} \M }^2] \nn \\
\ge & 4 \sigmin^2(\X\trans \U)[\fnorm{ \U\U\trans - \proj_\W\M\proj_\W}^2 + \fnorm{\proj_{\W_\perp} \M \proj_\W}^2 + \fnorm{ \proj_\W\M \proj_{\W_\perp}}^2 + \fnorm{\proj_{\W_\perp} \M \proj_{\W_\perp}}^2] \nn \\
= & 4 \sigmin^2(\X\trans \U) \fnorm{\U\U\trans - \M}^2 \nn
\end{align}

\end{proof}

Next, we show as long as we are in some Frobenious ball around optimum, then we have good spectral property over $\U$ which guarantees the preconditions for Lemma \ref{lem:smoothness} and Lemma \ref{lem:pseudostronglyconvex}.
\begin{lemma} (restatement of Lemma \ref{lem:localguarantee_sketch}) \label{lem:localguarantee}
Within the region $\mathcal{D} = \{\U | \norm{\M - \U\U\trans}_F \le \frac{1}{10}\sigma_{k}(\M)\}$, 
we have:
\begin{equation*}
\norm{\U} \le \sqrt{2\norm{\M}}, \quad\quad \sigmin(\X\trans \U) \ge \sqrt{\sigma_k(\M)/2}
\end{equation*}
\end{lemma}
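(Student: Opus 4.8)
The plan is to derive both inequalities from a single elementary observation: inside $\mathcal{D}$ the perturbation $\Delta \defeq \U\U\trans - \M$ has small spectral norm, since $\norm{\Delta} \le \fnorm{\Delta} \le \tfrac{1}{10}\sigma_k(\M) \le \tfrac{1}{10}\norm{\M}$ (using $\sigma_k(\M) = \sigmin(\M) \le \norm{\M}$ because $\M$ is rank $k$). Everything else is a short consequence of Weyl-type perturbation bounds, so I expect no genuine obstacle here — this is a self-contained linear-algebra lemma.

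\textbf{Spectral norm bound on $\U$.} First I would use that $\U\U\trans$ is PSD with $\norm{\U}^2 = \norm{\U\U\trans}$. Then by the triangle inequality for the operator norm,
\[
\norm{\U}^2 = \norm{\U\U\trans} \le \norm{\M} + \norm{\Delta} \le \norm{\M} + \tfrac{1}{10}\norm{\M} \le 2\norm{\M},
\]
which gives $\norm{\U} \le \sqrt{2\norm{\M}}$ immediately.

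\textbf{Lower bound on $\sigmin(\X\trans\U)$.} The key step is to reduce this to a $k\times k$ eigenvalue problem: $\sigmin(\X\trans\U)^2 = \lambda_{\min}\!\big((\X\trans\U)(\X\trans\U)\trans\big) = \lambda_{\min}(\X\trans\U\U\trans\X)$. Since $\M = \X\S\X\trans$ with $\X\trans\X = \I_k$, we have $\X\trans\M\X = \S$ and hence $\lambda_{\min}(\X\trans\M\X) = \sigma_k(\M)$. Writing $\X\trans\U\U\trans\X = \X\trans\M\X + \X\trans\Delta\X$ and applying Weyl's inequality for symmetric matrices together with $\norm{\X\trans\Delta\X} \le \norm{\X}^2\norm{\Delta} = \norm{\Delta}$,
\[
\lambda_{\min}(\X\trans\U\U\trans\X) \ge \sigma_k(\M) - \norm{\Delta} \ge \tfrac{9}{10}\sigma_k(\M) \ge \tfrac{1}{2}\sigma_k(\M),
\]
and taking square roots yields $\sigmin(\X\trans\U) \ge \sqrt{\sigma_k(\M)/2}$. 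The only points that need a little care are invoking $\fnorm{\cdot} \ge \norm{\cdot}$ and $\sigma_k(\M) \le \norm{\M}$ to control $\Delta$, and compressing to the orthonormal subspace $\X$ so that Weyl's inequality applies cleanly; both are routine.
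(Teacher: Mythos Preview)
Your argument is correct. The first part (the spectral-norm bound on $\U$) is essentially identical to the paper's. For the second inequality, however, you take a genuinely different and more direct route. The paper argues in two stages: it first shows $\sigmin(\U\trans\U) \ge \tfrac{9}{10}\sigma_k(\M)$ via Weyl, and then separately bounds the principal angle between $\X$ and the top-$k$ eigenspace $\W$ of $\U\U\trans$ by controlling $\norm{\X_\perp\trans\W}^2$; combining $\sigmin^2(\X\trans\W)$ with $\sigmin(\U\trans\U)$ gives the result. Your approach bypasses this angle analysis entirely by compressing $\U\U\trans$ to the column space of $\X$ and applying Weyl once to the $k\times k$ matrix $\X\trans\U\U\trans\X = \S + \X\trans\Delta\X$. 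This is shorter and more elementary. What the paper's route buys in exchange is the intermediate information $\sigmin^2(\X\trans\W) \ge \tfrac{8}{9}$ and $\sigmin(\U\trans\U) \ge \tfrac{9}{10}\sigma_k(\M)$ separately, which it does exploit elsewhere (e.g., incoherence arguments use $\sigmin(\U\trans\U)$ on its own); but for the stated lemma your compression argument is cleaner.
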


\begin{proof}
For spectral norm of $\U$, we have:
\begin{equation*}
\norm{\U}^2 \le \norm{\M} + \norm{\M - \U\U\trans}
\le \norm{\M} + \norm{\M - \U\U\trans}_F \le 2 \norm{\M}
\end{equation*}
For the minimum singular value of $\U\trans \U$, we have:
\begin{align}
\sigmin(\U\trans\U) = &\sigma_k( \U\U \trans ) \ge \sigma_k (\M) - \norm{\M - \U \U\trans} \nn \\
\ge &\sigma_k( \U\U \trans ) \ge \sigma_k (\M) - \norm{\M - \U \U\trans}_F
\ge \frac{9}{10} \sigma_k(\M) \nn
\end{align}
On the other hand, we have:
\begin{align}
\frac{9}{10} \sigma_k(\M)\norm{\X_\perp \W}^2 \le & \sigmin(\D)\norm{\X_\perp \W}^2
\le \norm{\X_\perp \trans \W \mSigma \W\trans \X_\perp} \nn\\
\le &\norm{\X_\perp \trans \U\U\trans \X_\perp}_F
=\norm{\proj_{\X_\perp}(\M - \U\U \trans)\proj_{\X_\perp}}_F \nn \\
\le & \norm{\M - \U\U \trans}_F \le \frac{1}{10}\sigma_k(\M) \nn
\end{align}
Let the principal angle between $\X$ and $\W$ to be $\theta$.
This gives $\sin^2 \theta = \norm{\X_\perp\trans \W}^2 \le \frac{1}{9}$.
Thus $\cos^2 \theta = \sigmin^2(\X\trans \W) \ge \frac{8}{9}$.
Therefore:
\begin{equation*}
\sigmin^2(\X \trans \U) \ge \sigmin^2(\X\trans \W)\sigmin(\U\trans\U) \ge \sigma_k(\M)/2
\end{equation*}

\end{proof}

\subsection{Proof of Theorem \ref{thm:localcontrol}}

Now, we are ready for our key theorem. By Lemma \ref{lem:smoothness}, Lemma \ref{lem:pseudostronglyconvex}, and Lemma \ref{lem:localguarantee}, we already know the function has good property locally in the region $\mathcal{D} = \{\U | \norm{\M - \U\U\trans}_F \le \frac{1}{10}\sigma_{k}(\M)\}$ which alludes linear convergence. Then, the work remains and also the most challenging part is to prove that once we initialize inside this region, our algorithm will guarantee $\U$ never leave this region with high probability even with relatively large stepsize.
The requirement for tight sample complexity and near optimal runtime makes it more challenging, and require us to further control the incoherence of $\U_t$ over all iterates in addition to the distance $\norm{\M - \U\U\trans}_F$.



Following is our formal proof.


\begin{proof} [Proof of Theorem \ref{thm:localcontrol}]
Define event $\fE_t = \{\forall \tau\le t, f(\U_\tau) \le (1-\frac{\eta}{2\cn})^t(\frac{1}{10 \cn})^2, \max_i g_i(\U_\tau) \le \frac{20\mu k \cn^2}{d}\}$.
Theorem \ref{thm:localcontrol} is equivalent to prove event $\fE_T$ happens with high probability. The proof achieves this by contructing two supermartingales for $f(\U_t)1_{\fE_t}$ and $g_i(\U_t)1_{\fE_t}$ (where $1_{(\cdot)}$ denote indicator function), applies concentration argument.

The proofs follow the structure of:
\begin{enumerate}
 \item The constructions of supermartingales
 \item Their probability 1 bound and variance bound in order to apply Azuma-Bernstein inequality
 \item Final combination of concentration results to conclude the proof
\end{enumerate}


First, let filtration $\fF_t = \sigma\{SG(\U_0), \cdots, SG(\U_{t-1})\}$ where $\sigma\{\cdot\}$ denotes the sigma field. Note by definiton of $\fE_t$, we have $\fE_t \subset \fF_t$. Also $\fE_{t+1} \subset \fE_t$, and thus $1_{\fE_{t+1}} \le 1_{\fE_{t}}$. Note $\fE_t$ denotes the event which up to time $t$, $\U_\tau$ always stay in a local region which both close to $\M$ and incoherent.

By Lemma \ref{lem:localguarantee}, we immediately know that conditioned on $\fE_t$, we have $\norm{\U_t} \le \sqrt{2}$, $\sigma_{\min}(\X\trans\U_t) \ge 1/\sqrt{2\cn}$ and $\sigma_{\min}(\U_t\trans\U_t) \ge 1/2\cn$. We will use this fact throughout the proof.

\paragraph{Construction of supermartingale $G$:}
Since $g_i(\U) = \e_i\trans \U\U\trans \e_i$ is a quadratic function, we know for any change $\Delta \U$, we have:
\begin{align}
g_i(\U+\Delta\U) = g_i(\U) + 2 \e_i\trans (\Delta \U ) \U\trans \e_i
+ \norm{\e_i\trans \Delta\U}^2 \nn
\end{align}


We know for any $l \in [d]$:
\begin{align}
&\E \|\e_l\trans SG(\U)\|^21_{\fE_t} 
\le \E 16d^4\delta_{il}(\u_i\trans \u_j - \M_{ij})^2 \max_i\norm{\e_i \trans \U}^21_{\fE_t} \nn \\ 
=&16d^2\norm{\e_l\trans(\U\U\trans  - \M)}^2 \max_i\norm{\e_i \trans \U}^21_{\fE_t}
\le O(\mu^2k^2 \cn^4)1_{\fE_t}  \nn
\end{align}

Therefore, by update Eq.\eqref{update_sym}, and $\E SG(\U) = \nabla f(\U) = 4(\U\U\trans - \M)\U$, we know:
\begin{align}
&\E [g_i(\U_{t+1})1_{\fE_t}|\fF_t] \nn\\
=& [g_i(\U_t) - 2\eta\e_i \trans [\E SG(\U_t)]\U_t\trans\e_i + \frac{\eta^2}{2} \E \norm{\e_i\trans SG(\U_t)}^2]1_{\fE_t} \nn \\
= & [\tr(\U_t\trans \e_i \e_i\trans [\I- 8\eta(\U_t\U_t\trans - \M)]\U_t)+ \frac{\eta^2}{2} \E \norm{\e_i\trans SG(\U_t)}^2]1_{\fE_t} \nn \\
= &[\tr(\U_t\trans \e_i \e_i\trans \U_t (\I- 8\eta\U_t\trans\U_t)) + 8\eta\tr(\U_t\trans \e_i \e_i \trans \M \U_t)+ \eta^2 O(\mu^2 k^2\cn^4)]1_{\fE_t} \nn \\
\le &[(1-8\eta \sigmin(\U_t\trans \U_t))g_i(\U_t)+ 8\eta\tr(\U_t\trans \e_i \e_i \trans \M \U_t)+ \eta^2 O(\mu^2 k^2\cn^4)]1_{\fE_t} \nn\\
\le &[(1-\frac{4\eta}{\cn})g_i(\U_t)+ 16\sqrt{10}\frac{\eta \mu k \cn}{d}+ \eta^2 O(\mu^2 k^2\cn^4)]1_{\fE_t} \nn\\
\le & [(1-\frac{4\eta}{\cn})g_i(\U_t) + 60\frac{\eta \mu k\cn}{d})]1_{\fE_t}
\nn
\end{align}
The last step is true by choosing constant $c$ in learning rate $\eta$ to be small enough.

Let $G_{it} = (1-\frac{4\eta}{\cn})^{-t}(g_i(\U_{t})1_{\fE_{t-1}} - 15\frac{\mu k\cn^2}{d})$. This gives:
\begin{equation*}
\E G_{i(t+1)}
\le (1-\frac{4\eta}{\cn})^{-t}(g_i(\U_{t})1_{\fE_{t}} - 15\frac{\mu k\cn^2}{d})
\le G_{it}
\end{equation*}
That is $G_{it}$ is supermartingale.

\paragraph{Probability 1 bound for $G$:}
We also know
\begin{align}
G_{it} - \E [G_{it}|\fF_{t-1}] = & (1-\frac{4\eta}{\cn})^{-t}
\left[-\eta\e_i \trans [SG(\U_t)-\E SG(\U_t)]\U_t\trans\e_i\right.  \nn \\
& + \left.\frac{\eta^2}{2} [\|\e_i\trans SG(\U_t)\|^2- \E \|\e_i\trans SG(\U_t)\|^2]\right]1_{\fE_{t-1}} \label{decomp_G}
\end{align}

Since when sample $(i,j)$ entry of matrix $\M$, for any $l\in[d]$, we have:
\begin{align}
&\e_l \trans [SG(\U_t)]\U_t\trans\e_l \cdot 1_{\fE_{t-1}} = O(1)\tr(\U\trans\e_l \e_l \trans SG(\U_t))1_{\fE_{t-1}}\nn\\
=& O(1)d^2(\U\U\trans  - \M)_{ij}\tr[\U\trans\e_l \e_l\trans (\vect{e}_i\u_j\trans + \vect{e}_j\u_i\trans)]1_{\fE_{t-1}}
\nn\\
\le &O(1)d^2 \norm{\U\U\trans - \M}_\infty \max_i\norm{\e_i \trans \U}^21_{\fE_{t-1}}
\le O(\mu^2 k^2 \cn^4)1_{\fE_{t-1}} \nn
\end{align}
and 
\begin{align}
&\|\e_l\trans SG(\U_t)\|^2 1_{\fE_{t-1}}
= O(1)\norm{\e_l\trans SG(\U_t)}^21_{\fE_{t-1}} \nn\\
=& O(1)d^4(\U\U\trans  - \M)^2_{ij}\norm{\e_l\trans (\vect{e}_i\u_j\trans + \vect{e}_j\u_i\trans)}^21_{\fE_{t-1}} \nn\\
\le & O(1) d^4\norm{\U\U\trans - \M}^2_\infty \max_i\norm{\e_i \trans \U}^21_{\fE_{t-1}}
\le O(\mu^3d k^3 \cn^6)1_{\fE_{t-1}}\nn
\end{align}
Therefore, by Eq.\eqref{decomp_G}, we have with probability 1:
\begin{equation} \label{p1_G}
|G_{it} - \E [G_{it}|\fF_{t-1}]| \le (1-\frac{4\eta}{\cn})^{-t} \eta O(\mu^2k^2 \cn^4)1_{\fE_{t-1}}
\end{equation}

\paragraph{Variance bound for $G$:}
For any $l\in [d]$, we also know
\begin{align}
&\Var(\e_l \trans [SG(\U_t)]\U_t\trans\e_l \cdot 1_{\fE_{t-1}} |\fF_{t-1} )
\le \E [\langle \nabla g_l(\U_t), SG(\U_t)\rangle^2 1_{\fE_{t-1}} |\fF_{t-1}] \nn \\
= &O(1)\frac{1}{d^2} \sum_{ij}d^4(\U\U\trans  - \M)^2_{ij}\tr[\U\trans\e_l \e_l\trans (\vect{e}_i\u_j\trans + \vect{e}_j\u_i\trans)]^21_{\fE_{t-1}} \nn \\
\le &O(1) d^2 \sum_{j} (\U\U\trans  - \M)^2_{lj}\tr[\U\trans\e_l \u_j\trans]^2 1_{\fE_{t-1}}\nn \\
\le &O(1) d^2 \norm{\e_l\trans (\U\U\trans - \M)}^2 \max_i\norm{\e_i \trans \U}^41_{\fE_{t-1}}
\le O(\frac{\mu^3 k^3 \cn^6}{d})1_{\fE_{t-1}} \nn
\end{align}
and
\begin{align}
&\Var(\|\e_l\trans SG(\U)\|^2 1_{\fE_{t-1}} |\fF_{t-1} )
\le \E [\nabla^2 g_k(SG(\U_t), SG(\U_t))^2 1_{\fE_{t-1}} |\fF_{t-1}] \nn \\
= &O(1)\frac{1}{d^2} \sum_{ij}d^8(\U\U\trans  - \M)^4_{ij}\norm{\e_k\trans (\vect{e}_i\u_j\trans + \vect{e}_j\u_i\trans)}^4 1_{\fE_{t-1}} \nn \\
\le &O(1) d^6 \sum_{j} (\U\U\trans  - \M)^4_{kj}\norm{\u_j}^4 1_{\fE_{t-1}}\nn \\
\le &O(1) d^6 \norm{\U\U\trans - \M}^2_\infty \norm{\e_k\trans (\U\U\trans - \M)}^2 \max_i\norm{\e_i \trans \U}^41_{\fE_{t-1}}
\le O(\mu^5 d k^5 \cn^{10})1_{\fE_{t-1}} \nn
\end{align}
Therefore, by Eq.\eqref{decomp_G}, we have
\begin{equation} \label{var_G}
\Var(G_{it} | \fF_{t-1}) \le (1-\frac{4\eta}{\cn})^{-2t} \eta^2 O(\frac{\mu^3 k^3 \cn^6}{d})1_{\fE_{t-1}}
\end{equation}

\paragraph{Bernstein's inequality for $G$:} 
Let $\sigma^2 = \sum_{\tau=1}^t \Var(G_{i\tau} | \fF_{\tau-1})$, and 
$R$ satisfies, with probability 1 that $|G_{i\tau} - \E [G_{i\tau}|\fF_{\tau-1}]| \le R, ~\tau = 1, \cdots, t$. Then 
By standard Bernstein concentration inequality, we know:
\begin{equation*}
P(G_{it} \ge G_{i0} + s ) \le \exp(\frac{s^2/2}{\sigma^2 + Rs / 3})
\end{equation*}

Since $G_{i0} = g_i(\U_0) - 15 \frac{\mu k \cn^2}{d}$, let $s' = O(1)(1- \frac{4\eta}{\cn})^t [\sqrt{\sigma^2 \log d} + R\log d]$, 
we know 
\begin{equation*}
P\left(g_i(\U_t)1_{\fE_{t-1}} \ge 15 \frac{\mu k \cn^2}{d}
+ (1- \frac{4\eta}{\cn})^t(g_i(\U_0) - 15 \frac{\mu k \cn^2}{d}) + s'\right) \le \frac{1}{2d^{11}}
\end{equation*}

By Eq.\eqref{p1_G}, we know $R = (1-\frac{4\eta}{\cn})^{-t} \eta O(\mu^2k^2 \cn^4)$ satisfies that $|G_{i\tau} - \E [G_{i\tau}|\fF_{\tau-1}]| \le R, ~\tau = 1, \cdots, t$. Also by Eq. \eqref{var_G}, we have:
\begin{align*}
(1- \frac{4\eta}{\cn})^t \sqrt{\sigma^2 \log d}
\le \eta O(\sqrt{\frac{\mu^3 k^3 \cn^6 \log d}{d}})\sqrt{\sum_{\tau=1}^t(1- \frac{4\eta}{\cn})^{2t-2\tau} }
\le \sqrt{\eta} O(\sqrt{\frac{\mu^3 k^3 \cn^7 \log d}{d}})
\end{align*}
by $\eta < \frac{c}{\mu dk \cn^3 \log d}$ and choosing $c$ to be small enough, we have:
\begin{equation*}
s' = \sqrt{\eta} O(\sqrt{\frac{\mu^3 k^3 \cn^7 \log d}{d}}) + \eta O(\mu^2k^2 \cn^4 \log d)
\le \frac{\mu k \cn^2}{d} 
\end{equation*}
Since initialization gives $\max_i g_i(\U_0) \le \frac{10\mu k \cn^2}{d}$, therefore:
\begin{equation*}
P(g_i(\U_t)1_{\fE_{t-1}} \ge 20\frac{\mu k \cn^2}{d}) \le \frac{1}{2d^{11}}
\end{equation*}
That is equivalent to:
\begin{equation}
P(\fE_{t-1} \cap \{g_i(\U_t)\ge 20\frac{\mu k \cn^2}{d}\}) \le \frac{1}{2d^{11}} \label{concen_G}
\end{equation}

\paragraph{Construction of supermartingale $F$:}
On the other hand, we also have
\begin{align}
& \E \|SG(\U_t)\|^2_F1_{\fE_t} 
\le \E 16d^4(\u_i\trans \u_j - \M_{ij})^2 \max_i\norm{\e_i \trans \U}^2 1_{\fE_t} \nn \\
\le &16d^2\|\U_t \U_t\trans - \M\|_F^2 \max_i\norm{\e_i \trans \U_t}^2 1_{\fE_t}
\le O(\mu dk \cn^2)f(\U_t)1_{\fE_t} \nn
\end{align}
Therefore, by update function Eq.\eqref{update_sym},
\begin{align}
&\E [f(\U_{t+1})1_{\fE_t} |\fF_t] \nn\\
\le& [f(\U_t) - \E \langle \nabla f(\U_t), \eta SG(\U_t)\rangle + \eta^2 \E \norm{SG(\U_t)}_F^2]1_{\fE_t} \nn \\
= & [f(\U_t) - \eta \norm{\nabla f(\U_t)}_F^2 + \eta^2 \E \norm{SG(\U_t)}_F^2]1_{\fE_t} \nn \\
\le& [(1-\frac{2\eta}{\cn}) f(\U_t) + \eta^2 O(\mu dk \cn^2)f(\U_t)]1_{\fE_t} \nn\\
\le& (1-\frac{\eta}{\cn}) f(\U_t)1_{\fE_t} \nn
\end{align}

Let $F_t =  (1-\frac{\eta}{\cn})^{-t}f(\U_t)1_{\fE_{t-1}}$, we know 
$F_t$ is also a supermartingale.

\paragraph{Probability 1 bound for $F$:}

With probabilty 1, we also have:
\begin{align}
F_{t} - \E [F_{t}|\fF_{t-1}] = & (1-\frac{\eta}{\cn})^{-t}
[-\eta\langle \nabla f(\U_t), SG(\U_t) - \E SG(\U_t) \rangle \nn \\
& + \frac{\eta^2}{2} (\nabla^2 f(\zeta_t)(SG(\U_t), SG(\U_t)) - \E \nabla^2 f(\zeta_t)(SG(\U_t), SG(\U_t)))]1_{\fE_{t-1}} \label{decomp_F}
\end{align}
where $\zeta_t$ depends on $SG(\U_t)$.

First, recall we denote $\text{SVD}(\M) = \X\S\X\trans$, and $\text{SVD}(\U\U\trans) = \W \D \W\trans$ , and observe that:
\begin{align}
&\norm{\U\U\trans - \M}_{\infty}1_{\fE_{t-1}} = \max_{ij} |\tr(\e_i\trans (\U\U\trans - \M) \e_j)|1_{\fE_{t-1}} \nn\\
= &\max_{ij} |\tr(\e_i\trans (\proj_{\X} + \proj_{\X_\perp}) (\U\U\trans - \M) \e_j)|1_{\fE_{t-1}} \nn\\
\le & \max_{ij} |\tr(\e_i\trans \proj_{\X}  (\U\U\trans - \M) \e_j)|1_{\fE_{t-1}}
+ \max_{ij} |\tr(\e_i\trans \proj_{\X_\perp} \U\U\trans \e_j)|1_{\fE_{t-1}} \nn \\
\le & \max_{i}\norm{\e_i\trans \X} \norm{\U\U\trans-\M}_F1_{\fE_{t-1}} + \max_{i}\norm{\e_i\trans \W} \norm{\U\U\trans-\M}_F1_{\fE_{t-1}} \nn \\
\le & O(\sqrt{\frac{\mu k \cn^3}{d}})\sqrt{f(\U_t)} \nn
\end{align}

Then, when sample $(i,j)$ entry of matrix $\M$, we have:
\begin{align}
&\langle \nabla f(\U_t), SG(\U_t)\rangle 1_{\fE_{t-1}}
\le O(1)\norm{\nabla f(\U_t)}_F\norm{SG(\U_t)}_F1_{\fE_{t-1}} \nn \\
\le & O(1)d^2\sqrt{f(\U_t)}(\U\U\trans  - \M)_{ij}\norm{\vect{e}_i\u_j\trans + \vect{e}_j\u_i\trans}_F^21_{\fE_{t-1}} \nn \\
\le & O(1)d^2 \sqrt{f(\U_t)}\norm{\U\U\trans-\M}_\infty  \max_i \norm{\e_i\trans \U}1_{\fE_{t-1}}
\le O(\mu d k \cn^{2.5})f(\U_t)1_{\fE_{t-1}} \nn
\end{align}
and
\begin{align}
&\nabla^2 f(\zeta_t)(SG(\U_t), SG(\U_t))1_{\fE_{t-1}}
\le O(1) \norm{SG(\U_t)}_F^2 \nn \\
\le& O(1) d^4\norm{\U\U\trans-\M}^2_\infty  \max_i \norm{\e_i\trans \U}^2 1_{\fE_{t-1}}
\le O(\mu^2 d^2 k^2 \cn^5) f(\U_t) 1_{\fE_{t-1}} \nn 
\end{align}
Therefore, by decomposition Eq.\eqref{decomp_F}, we have with probability 1:
\begin{equation}
|F_{t} - \E [F_{t}|\fF_{t-1}]| \le (1-\frac{\eta}{\cn})^{-t} \eta O(\mu d k \cn^{2.5}) f(\U_{t-1})1_{\fE_{t-1}}
\le (1-\frac{\eta}{\cn})^{-t}(1-\frac{\eta}{2\cn})^t \eta O(\mu d k \cn^{0.5})1_{\fE_{t-1}} \label{p1_F}
\end{equation}

\paragraph{Variance bound for $F$:} We also know
\begin{align}
&\Var(\langle \nabla f(\U_t), SG(\U_t)\rangle 1_{\fE_{t-1}}|\fF_{t-1})
\le \E [\langle \nabla f(\U_t), SG(\U_t)\rangle^2 1_{\fE_{t-1}}|\fF_{t-1}] \nn \\
\le & \norm{\nabla f(\U_t)}_F^2 \E \norm{SG(\U_t)}_F^21_{\fE_{t-1}}
\le O(1)d^2\max_i\norm{\e_i \trans \U}^2 f^2(\U_{t-1})1_{\fE_{t-1}} \nn \\
\le & O(\mu d k \cn^2) f^2(\U_{t-1})1_{\fE_{t-1}} \nn
\end{align}
and 
\begin{align}
&\Var(\nabla^2 f(\zeta_t)(SG(\U_t), SG(\U_t)) 1_{\fE_{t-1}}|\fF_{t-1})
\le \E [\nabla^2 f(\zeta_t)(SG(\U_t), SG(\U_t))^2 1_{\fE_{t-1}}|\fF_{t-1}] \nn \\
\le & O(1) \E\norm{SG(\U_t)}_F^4
= O(1)\E d^8(\U\U\trans - \M)_{ij}^4 \max_i\norm{\e_i \trans \U}^4 1_{\fE_{t-1}} \nn \\
\le & O(1) d^6 \norm{\U\U\trans - \M}_\infty^2 \norm{\U\U\trans - \M}_F^2
\max_i\norm{\e_i \trans \U}^4 1_{\fE_{t-1}} \nn \\
\le & O(\mu^3 d^3 k^3 \cn^7)f^2(\U_{t-1})1_{\fE_{t-1}}\nn
\end{align}
Therefore, by decomposition Eq.\eqref{decomp_F}, we have:
\begin{equation}
\Var(F_t|\fF_{t-1}) \le (1-\frac{\eta}{\cn})^{-2t} \eta^2 O(\mu d k \cn^2) f^2(\U_{t-1})1_{\fE_{t-1}}
\le (1-\frac{\eta}{\cn})^{-2t} (1-\frac{\eta}{2\cn})^{2t}\eta^2 O(\frac{\mu d k}{\cn^2}) 1_{\fE_{t-1}} \label{var_F}
\end{equation}

\paragraph{Bernstein's inequality for $F$:} 
Let $\sigma^2 = \sum_{\tau=1}^t \Var(F_{\tau} | \fF_{\tau-1})$, and 
$R$ satisfies, with probability 1 that $|F_{\tau} - \E [F_{\tau}||\fF_{\tau-1}]| \le R, ~\tau = 1, \cdots, t$. Then 
By standard Bernstein concentration inequality, we know:
\begin{equation*}
P(F_{t} \ge F_{0} + s ) \le \exp(\frac{s^2/2}{\sigma^2 + Rs / 3})
\end{equation*}
Let $s' = O(1)(1- \frac{\eta}{\cn})^t [\sqrt{\sigma^2 \log d} + R\log d]$, this gives:
\begin{equation*}
P(f(\U_t)1_{\fE_{t-1}} \ge (1- \frac{\eta}{\cn})^t f(\U_0)+ s') \le \frac{1}{2d^{10}}
\end{equation*}
By Eq.\eqref{p1_F}, we know $R = (1-\frac{\eta}{\cn})^{-t} (1-\frac{\eta}{2\cn})^t \eta O(\mu d k \cn^{0.5})$ satisfies that $|F_{\tau} - \E [F_{\tau}|\fF_{\tau-1}]| \le R, ~\tau = 1, \cdots, t$. Also by Eq. \eqref{var_F}, we have:
\begin{align}
&(1- \frac{\eta}{\cn})^t \sqrt{\sigma^2 \log d}
\le \eta O(\sqrt{\frac{\mu d k \log d}{\cn^2}})\sqrt{\sum_{\tau=1}^t(1- \frac{\eta}{\cn})^{2t-2\tau}(1- \frac{\eta}{2\cn})^{2\tau}  } \nn \\
\le & (1-\frac{\eta}{2\cn})^t
\eta O(\sqrt{\frac{\mu d k \log d}{\cn^2}})\sqrt{\sum_{\tau=1}^t(1- \frac{\eta}{\cn})^{2t-2\tau}(1- \frac{\eta}{2\cn})^{2\tau-2t}  } 
\le (1-\frac{\eta}{2\cn})^t \sqrt{\eta} O(\sqrt{\frac{\mu d k \log d}{\cn}}) \nn
\end{align}
by $\eta < \frac{c}{\mu dk \cn^3 \log d}$ and choosing $c$ to be small enough, we have:
\begin{equation*}
s' = (1-\frac{\eta}{2\cn})^t [\sqrt{\eta} O(\sqrt{\frac{\mu d k \log d}{\cn}}) + \eta O(\mu d k \cn^{0.5})]
\le (1-\frac{\eta}{2\cn})^t(\frac{1}{20\cn})^2
\end{equation*}
Since $F_{0} = f(\U_0) \le (\frac{1}{20\cn})^2$, therefore:
\begin{equation*}
P(f(\U_t)1_{\fE_{t-1}} \ge (1-\frac{\eta}{2\cn})^t(\frac{1}{10\cn})^2) \le \frac{1}{2d^{10}}
\end{equation*}
That is equivalent to:
\begin{equation}
P(\fE_{t-1} \cap \{f(\U_t)\ge (1-\frac{\eta}{2\cn})^t(\frac{1}{10\cn})^2\}) \le \frac{1}{2d^{10}} \label{concen_F}
\end{equation}

\paragraph{Probability for event $\fE_T$:} 

Finally, combining the concentration result for martingale $G$ (Eq.\eqref{concen_G}) and martingale $F$ (Eq.\eqref{concen_F}), we conclude:
\begin{align}
&P(\fE_{t-1} \cap \bar{\fE_t})
=  P\left[\fE_{t-1} \cap \left(\cup_i \{g_i(\U_t) \ge 20 \frac{\mu k \cn^2}{d} \}\cup \{f(\U_t)\ge (1-\frac{\eta}{2\cn})^t(\frac{1}{10\cn})^2\}\right)\right]
\nn \\
\le & \sum_{i=1}^d P(\fE_{t-1} \cap \{g_i(\U_t)\ge 20\frac{\mu k \cn^2}{d}\})
+ P(\fE_{t-1} \cap \{f(\U_t)\ge (1-\frac{\eta}{2\cn})^t(\frac{1}{10\cn})^2\}) \le \frac{1}{d^{10}} \nn
\end{align}
Since
\begin{equation*}
P(\bar{\fE_T}) = \sum_{t=1}^T P(\fE_{t-1} \cap \bar{\fE_t}) \le \frac{T}{d^{10}}
\end{equation*}
We finishes the proof.
\end{proof}



\section{Proof of General Asymmetric Case}\label{sec:proof_Asym}
In this section, we first prove Lemma \ref{lem:equiv}, set up the equivalence between Algorithm \ref{algo:MC_Asym} and Algorithm \ref{algo:MC_Asym_Prac}. Then we 
prove the main theorem for general asymmetric matrix (Theorem \ref{thm:main_Asym}). 
WLOG, we continue to assume $\norm{\M} = 1$ in all proof. 
Also, when it's clear from the context, we use $\cn$ to specifically to represent $\cn(\M)$.
Then $\sigmin(\M) = \frac{1}{\cn}$. 
Also in this section, we always use $d=\max\{d_1, d_2\}$ and denote $\text{SVD}(\M) = \X\S\Y\trans$, and $\text{SVD}(\U\V\trans) = \W_U \D \W_V\trans$.

\begin{proof}[Proof of Lemma \ref{lem:equiv}]
Let us always denote the iterates in Algorithm \ref{algo:MC_Asym} by $\U_t$, $\V_t$, and denote the corresponding iterates in Algorithm \ref{algo:MC_Asym_Prac} by $\U_t'$, $\V_t'$ using prime version.
We use induction to prove the equivalence. Assume at time $t$ we have $\U_t\V_t\trans = \U_t'\V_t'{}\trans$.
Recall in Algorithm \ref{algo:MC_Asym}, we renormalize $\U_t, \V_t$ to $\tU$, $\tV$, this set up the correspondence:
\begin{align*}
\tU &= \U'_t\mR'_U\D'_U{}^{-\frac{1}{2}}\mQ'_U\D'{}^{\frac{1}{2}} \\
\tV &= \V'_t\mR'_V\D'_V{}^{-\frac{1}{2}}\mQ'_V\D'{}^{\frac{1}{2}}
\end{align*}
Denote $\mP'_U  = \mR'_U\D'_U{}^{-\frac{1}{2}}\mQ'_U\D'{}^{\frac{1}{2}}$, 
and $\mP'_V = \V'_t\mR'_V\D'_V{}^{-\frac{1}{2}}\mQ'_V\D'{}^{\frac{1}{2}}$.
Clearly $\mP'_U \mP'_V{}\trans = \I$. 
Then we have $\tU =\U'_t \mP'_U$, $\tV = \mP'_V$ and thus:
\begin{align}
&\U_{t+1}\V_{t+1}\trans  \nn\\
=&(\tU - 2\eta d_1d_2(\tU\tV\trans - \M)_{ij} \vect{e}_i\vect{e}_j\trans\tV)(\tV - 2\eta d_1d_2(\tU\tV\trans - \M)_{ij} \vect{e}_j\vect{e}_i\trans\tU)\trans \nn\\
=& ( \U'_t\mP'_U - 2\eta d_1d_2(\U'_{t}\V'_{t}{}\trans - \M)_{ij} \vect{e}_i\vect{e}_j\trans\V'_{t}\mP'_V)
(\V'_t\mP'_V - 2\eta d_1d_2(\U'_{t}\V'_{t}{}\trans - \M)_{ij} \vect{e}_j\vect{e}_i\trans\U'_{t}\mP'_U)\trans \nn \\
=& ( \U'_t - 2\eta d_1d_2(\U'_{t}\V'_{t}{}\trans - \M)_{ij} \vect{e}_i\vect{e}_j\trans\V'_{t}\mP'_V\mP'_U{}^{-1}) \nn\\
&\cdot (\V'_t - 2\eta d_1d_2(\U'_{t}\V'_{t}{}\trans - \M)_{ij} \vect{e}_j\vect{e}_i\trans\U'_{t}\mP'_U \mP'_V{}^{-1})\trans \nn \\
=&\U'_{t+1}\V'_{t+1}{}\trans \nn
\end{align}
Clearly with same initialization algorithm, we have $\U_0\V_0\trans = \U'_0\V'_0{}\trans$, by induction, we finish the proof.
\end{proof}

Now we proceed to prove Theorem \ref{thm:main_Asym}.
Since Algorithm \ref{algo:MC_Asym} and Algorithm \ref{algo:MC_Asym_Prac} are equivalent, we will focus our analysis on Algorithm \ref{algo:MC_Asym} which is more theoretical appealing.
As for the symmetric PSD case, we first present the essential ingradient:

\begin{theorem}\label{thm:localcontrol_Asym}
Let $f(\U, \V) = \norm{\U\V\trans - \M}_F^2$, $g_i(\U, \V) = \norm{\e_i\trans \U\V\trans}^2$,
and $h_j(\U,  \V) = \norm{\e_j\trans\V\U\trans}^2$, for $i \in [d_1]$ and $j \in [d_2]$.
Suppose after initialization, we have:
\begin{equation*}
f(\U_0, \V_0) \le (\frac{1}{20 \cn})^2, \quad\quad \max_i g_i(\U_0, \V_0) \le \frac{10\mu k \cn^2}{d_1}, \quad\quad \max_j h_j(\U_0, \V_0) \le \frac{10\mu k \cn^2}{d_2} 
\end{equation*}
Then, there exist some absolute constant $c$ such that for any learning rate $\eta < \frac{c}{\mu dk \cn^3 \log d}$,
with at least $1-\frac{T}{d^{10}}$ probability, we will have for all $t\le T$ that:
\begin{equation*}
f(\U_t, \V_t) \le (1-\frac{\eta}{2\cn})^t(\frac{1}{10 \cn})^2, \quad \max_i g_i(\U_t, \V_t) \le \frac{100\mu k \cn^2}{d_1}, \quad \max_j h_j(\U_t, \V_t) \le \frac{100\mu k \cn^2}{d_2}
\end{equation*}
\end{theorem}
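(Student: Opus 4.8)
The plan is to run the same three-step program as in the PSD case (Theorem~\ref{thm:localcontrol}): record the local geometry of $f$, define a ``good event'' $\fE_t$ that simultaneously encodes linear convergence of $f$ and control of the incoherence quantities $g_i,h_j$, and then show $P(\fE_T)$ is close to $1$ by building supermartingales from $f(\U_t,\V_t)1_{\fE_t}$, $g_i(\U_t,\V_t)1_{\fE_t}$ and $h_j(\U_t,\V_t)1_{\fE_t}$ and applying Bernstein's inequality. The structural observation that makes this feasible is that $f,g_i,h_j$ depend only on the product $\U\V\trans$ and are therefore unchanged by the renormalization step $\tU\leftarrow\W_U\D^{\frac12}$, $\tV\leftarrow\W_V\D^{\frac12}$ of Algorithm~\ref{algo:MC_Asym}; so it is equivalent to track $N_t\defeq\U_t\V_t\trans$, which by the update rule obeys
\begin{equation*}
N_{t+1}=N_t-2\eta d_1 d_2(N_t-\M)_{ij}\big(\tU\tU\trans\e_i\e_j\trans+\e_i\e_j\trans\tV\tV\trans\big)+4\eta^2 d_1^2 d_2^2(N_t-\M)_{ij}^2(N_t)_{ij}\,\e_i\e_j\trans,
\end{equation*}
with $\tU\tU\trans=\W_U\D\W_U\trans$, $\tV\tV\trans=\W_V\D\W_V\trans$, $\tU\trans\tU=\tV\trans\tV=\D$, and $\W_U\D\W_V\trans=\text{SVD}(N_t)$. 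Since the $(\U,\V)$-updates are unbiased stochastic gradient steps of $f(\U,\V)$ evaluated at the renormalized pair, and $f,g_i,h_j$ are renormalization-invariant, the one-step analysis reduces to the same smoothness/gradient-domination bookkeeping as in the PSD proof.

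First I would establish the asymmetric analogues of Lemmas~\ref{lem:smoothness_sketch}--\ref{lem:localguarantee_sketch}: inside $\{\norm{\M-\U\V\trans}_F\le\tfrac{1}{10}\sigma_k(\M)\}$ the renormalized factors satisfy $\norm{\tU},\norm{\tV}\le\sqrt{2\norm{\M}}$, $\tfrac12\sigma_k(\M)\le\sigmin(\D)\le\norm{\D}\le 2\norm{\M}$, and $\sigmin(\X\trans\tU),\sigmin(\Y\trans\tV)\ge\sqrt{\sigma_k(\M)/2}$ (the last two by a principal-angle argument between $\W_U$ and $\X$, resp.\ $\W_V$ and $\Y$, exactly as in the proof of Lemma~\ref{lem:localguarantee}). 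Feeding these into the computation of Lemma~\ref{lem:pseudostronglyconvex} yields the gradient-domination bound $\fnorm{\nabla_\U f(\tU,\tV)}^2+\fnorm{\nabla_\V f(\tU,\tV)}^2=4\langle N-\M,\ \tU\tU\trans(N-\M)+(N-\M)\tV\tV\trans\rangle\ge 2\sigma_k(\M)\,\fnorm{N-\M}^2$, which is exactly the first-order descent term. In the region where $f(\U,\V)$ is small and both factors are incoherent (which is what $\fE_t$ will encode), one also gets $\infnorm{N-\M}\le O(\sqrt{\mu k\cn^3/d})\,\sqrt{f(\U,\V)}$ by splitting onto $\X,\X_\perp$ on the left and $\Y,\Y_\perp$ on the right and invoking incoherence of $\M$, of $\tU$ (through $g_i$), and of $\tV$ (through $h_j$); this is the estimate behind all the probability-$1$ and variance bounds, and in particular it gives the self-bounding property $\E\fnorm{SG}^2\le O(\mu dk\cn^2)f$.

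With this, define $\fE_t=\{\forall\tau\le t:\ f(\U_\tau,\V_\tau)\le(1-\tfrac{\eta}{2\cn})^\tau(\tfrac{1}{10\cn})^2,\ \max_i g_i(\U_\tau,\V_\tau)\le\tfrac{100\mu k\cn^2}{d_1},\ \max_j h_j(\U_\tau,\V_\tau)\le\tfrac{100\mu k\cn^2}{d_2}\}$, let $\fF_t$ be the filtration generated by the samples, and set $F_t=(1-\tfrac{\eta}{\cn})^{-t}f(\U_t,\V_t)1_{\fE_{t-1}}$ together with geometrically rescaled, additively shifted versions $G_{it}$ and $H_{jt}$ of $g_i$ and $h_j$, built exactly as $G_{it}$ was built from $g_i$ in the PSD proof, now reading $\tU\tU\trans$ (resp.\ $\tV\tV\trans$) wherever $\U_t\U_t\trans$ appeared. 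For each of $F,G_{it},H_{jt}$ I verify the supermartingale property via the conditional-expectation computation above (the first-order term handled by the gradient-domination bound for $F$ and by $\sigmin(\D)\ge\tfrac1{2\cn}$ for $G,H$, the $O(\eta^2)$ contribution absorbed using the self-bounding property and the incoherence bounds of $\fE_t$ once $\eta\le c/(\mu dk\cn^3\log d)$), then bound the almost-sure increments and conditional variances by the same $\infnorm{\cdot}$-plus-single-row estimates as in the PSD proof (each stochastic gradient touches exactly one row of $\tU$ and one of $\tV$, whose norms are $g_i$ and $h_j$). Bernstein's inequality then gives $P(\fE_{t-1}\cap\{f(\U_t,\V_t)\ge(1-\tfrac{\eta}{2\cn})^t(\tfrac{1}{10\cn})^2\})\le\tfrac{1}{2d^{10}}$ and $P(\fE_{t-1}\cap\{g_i(\U_t,\V_t)\ge\tfrac{100\mu k\cn^2}{d_1}\})\le\tfrac{1}{2d^{11}}$, and likewise for $h_j$. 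Union-bounding over $i\in[d_1]$, $j\in[d_2]$ and the $f$-event gives $P(\fE_{t-1}\cap\bar{\fE_t})\le d^{-10}$, hence $P(\bar{\fE_T})\le Td^{-10}$. Since Lemma~\ref{lem:initial_Asym} places the warm start strictly inside $\fE_0$ (its bounds $\tfrac{1}{20\cn}$ and $\tfrac{10\mu k}{d_i}\norm{\M}$ sit well within the required $\tfrac{1}{10\cn}$ and $\tfrac{100\mu k\cn^2}{d_i}$), this proves Theorem~\ref{thm:localcontrol_Asym}, and Theorem~\ref{thm:main_Asym} follows at once.

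The main obstacle, and the only place the argument genuinely departs from the PSD case, is propagating incoherence of the \emph{factors} through the renormalization: although $N_t$ and hence $f,g_i,h_j$ are invariant under it, the update direction applied to $\tU,\tV$ depends on $\W_U\D\W_U\trans$ and $\W_V\D\W_V\trans$, so on $\fE_t$ I must convert the bounds on $g_i,h_j,f$ into $\max_i\norm{\e_i\trans\W_U}^2\le O(\mu k\cn^2/d_1)$, $\max_j\norm{\e_j\trans\W_V}^2\le O(\mu k\cn^2/d_2)$, and $\tfrac12\sigma_k(\M)\le\sigmin(\D)\le\norm{\D}\le 2\norm{\M}$. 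The extra $\cn^2$ factor and the enlarged constant ($100$ rather than the PSD value $20$) in the invariant region for $g_i,h_j$ are precisely the slack that makes these conversions close. A secondary technical point is that the $\U$- and $\V$-updates are coupled through the same sampled entry, so the cross terms in $\E[f(\U_{t+1},\V_{t+1})\mid\fF_t]$ must be shown harmless; the balancedness $\norm{\tU}\approx\norm{\tV}\approx\norm{N_t}^{1/2}$ guaranteed by renormalization is exactly what controls them.
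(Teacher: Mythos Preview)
Your proposal is correct and follows essentially the same route as the paper's proof: the same event $\fE_t$, the same three supermartingales built from $f,g_i,h_j$ multiplied by $1_{\fE_{t-1}}$, and the same Bernstein-plus-union-bound closing. The one technical device you should be aware of, which the paper isolates explicitly, is the identity $\norm{\e_l\trans\tU\tV\trans}=\norm{\e_l\trans\tU\tU\trans}$ (and symmetrically $\norm{\e_l\trans\tV\tU\trans}=\norm{\e_l\trans\tV\tV\trans}$) that holds after renormalization because $\tU\trans\tU=\tV\trans\tV=\D$; this is precisely what lets the first-order contraction term in $\E[g_l(\U_{t+1},\V_{t+1})\mid\fF_t]$ be written as $-\sigmin(\D)\,g_l$ plus controllable remainders, and it is the concrete realization of what you call ``converting the bounds on $g_i,h_j$ into bounds on $\norm{\e_i\trans\W_U}^2$''.
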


Theorem \ref{thm:main_Asym} can easily be concluded from Theorem \ref{thm:localcontrol_Asym} and Lemma \ref{lem:initial_Asym}.
Theorem \ref{thm:localcontrol_Asym} also provides similar guarantees as Theorem \ref{thm:localcontrol} in symmetric case. However, due to the additional invariance between $\U$ and $\V$, Theorem \ref{thm:localcontrol_Asym} need to keep track of more complicated potential function $g_i(\U, \V)$ and $h_j(\U, \V)$ to control the incoherence, which makes the proof more involved.

The rest of this section all focus on proving Theorem \ref{thm:localcontrol_Asym}.
Similar to symmetric PSD case, we also first prepare with a few lemmas about the property of objective function, and the spectral property of $\U, \V$ in a local Frobenius ball around optimal. Then, we prove Theorem \ref{thm:localcontrol_Asym} by constructing three supermartingales related to $f(\U_t, \V_t), g_i(\U_t, \V_t), h_j(\U_t, \V_t)$ each, and applying concentration argument.

To make the notation clear, denote gradient $\grad f(\U, \V) \in \R^{(d_1+d_2) \times k}$:
\begin{equation*}
\grad f(\U, \V) = 
\begin{pmatrix}
\pgrad{\U} f(\U, \V) \\
\pgrad{\V} f(\U, \V)
\end{pmatrix}
\end{equation*}
Also denote the stochastic gradient $SG(\U, \V)$ by (if we sampled entry $(i, j)$ of matrix $\M$)
\begin{align*} \label{eq_SG_asym}
SG(\U, \V) &= 
2d_1d_2(\U\V\trans - \M)_{ij}\begin{pmatrix}
\e_i\e_j\trans \V \\
\e_j\e_i\trans \U
\end{pmatrix} \\
\E SG(\U, \V) & = \grad f(\U, \V) = 2
\begin{pmatrix}
(\U\V\trans - \M)\V \\
(\U\V\trans - \M)\trans \U
\end{pmatrix} 
\end{align*}
By update function, we know:
\begin{equation*}
\begin{pmatrix}
\U_{t+1} \\ \V_{t+1}
\end{pmatrix} 
= 
\begin{pmatrix}
\tU \\ \tV
\end{pmatrix} 
-\eta SG(\tU, \tV)
\end{equation*}
and $\tU\tV\trans = \U_t \V_t\trans$ is the renormalized version of $\U_t \V_t\trans$.

\subsection{Geometric Properties in Local Region}
Similar to symmetric PSD case, we first prove two lemmas w.r.t the smoothness and property similar to strongly convex for objective function:

\begin{lemma} \label{lem:smoothness_g}
Within the region $\mathcal{D} = \{(\U, \V) | \norm{\U} \le \Gamma, \norm{\V} \le \Gamma\}$, we have function $f(\U, \V) = \fnorm{\M-\U\V\trans}^2$ satisfying:
\begin{equation*}
\fnorm{\nabla f(\U_1, \V_1) - \nabla f(\U_2, \V_2)}^2 \le \beta^2 (\fnorm{\U_1 - \U_2}^2 + \fnorm{\V_1 - \V_2}^2)
\end{equation*}
where smoothness parameter $\beta = 8 \max\{\Gamma^2, \norm{\M}\}$.
\end{lemma}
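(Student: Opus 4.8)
The plan is to mirror the proof of the PSD smoothness bound (Lemma~\ref{lem:smoothness}), treating the $\U$-block and the $\V$-block of the gradient separately. On $\mathcal{D}$ the gradient is $\grad f(\U,\V) = 2\big((\U\V\trans-\M)\V,\ (\U\V\trans-\M)\trans\U\big)$, so it suffices to bound $\fnorm{(\U_1\V_1\trans-\M)\V_1 - (\U_2\V_2\trans-\M)\V_2}$ and the transpose-analogue for the $\V$-block in terms of $a \defeq \fnorm{\U_1-\U_2}$ and $b \defeq \fnorm{\V_1-\V_2}$, and then combine the two squared bounds. Write $L \defeq \max\{\Gamma^2,\norm{\M}\}$.

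For the $\U$-block, I would first peel off the linear-in-$\M$ part, $\fnorm{(\U_1\V_1\trans-\M)\V_1 - (\U_2\V_2\trans-\M)\V_2} \le \fnorm{\U_1\V_1\trans\V_1 - \U_2\V_2\trans\V_2} + \norm{\M}\,b$, and then telescope the cubic term as
\begin{equation*}
\U_1\V_1\trans\V_1 - \U_2\V_2\trans\V_2 = (\U_1-\U_2)\V_1\trans\V_1 + \U_2(\V_1-\V_2)\trans\V_1 + \U_2\V_2\trans(\V_1-\V_2).
\end{equation*}
Applying $\fnorm{\A\B}\le\norm{\A}\fnorm{\B}$ (and its mirror) together with $\norm{\U_i},\norm{\V_i}\le\Gamma$ bounds the three terms by $\Gamma^2 a$, $\Gamma^2 b$, and $\Gamma^2 b$, respectively. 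Multiplying by the factor $2$ in the gradient gives $\fnorm{\nabla_{\U} f(\U_1,\V_1)-\nabla_{\U} f(\U_2,\V_2)} \le 2\Gamma^2 a + (4\Gamma^2+2\norm{\M})b \le 2L a + 6L b$. Running the identical computation with $\U$ and $\V$ interchanged (equivalently, replacing $\M$ by $\M\trans$, which leaves $\norm{\M}$ and $\fnorm{\M}$ unchanged) yields $\fnorm{\nabla_{\V} f(\U_1,\V_1)-\nabla_{\V} f(\U_2,\V_2)} \le 2L b + 6L a$.

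Finally I would add the squares of the two block bounds:
\begin{equation*}
(2La+6Lb)^2 + (2Lb+6La)^2 = 4L^2\big(10a^2 + 12ab + 10b^2\big) \le 4L^2\big(16a^2 + 16b^2\big) = 64L^2(a^2+b^2),
\end{equation*}
using $2ab\le a^2+b^2$ in the middle step, which gives exactly $\beta = 8L = 8\max\{\Gamma^2,\norm{\M}\}$. There is no real conceptual obstacle here; the only point requiring care is the constant bookkeeping — choosing a telescoping in which every term carries coefficient $\Gamma^2$, and exploiting the $\U\leftrightarrow\V$ symmetry so the two blocks contribute equally — so that the accumulated constant lands at the claimed $8$. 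Any looser handling of the triangle inequalities would still produce a smoothness bound of the same order, just with a larger constant.
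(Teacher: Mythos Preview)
Your proposal is correct and follows essentially the same approach as the paper: the paper's proof also splits into the $\U$- and $\V$-blocks, peels off the $\M$-term, and uses the identical telescoping $\U_1\V_1\trans\V_1 - \U_2\V_2\trans\V_2 = (\U_1-\U_2)\V_1\trans\V_1 + \U_2(\V_1-\V_2)\trans\V_1 + \U_2\V_2\trans(\V_1-\V_2)$ before invoking symmetry. Your version is in fact more detailed than the paper's, which merely states the final bound $64\max\{\Gamma^4,\norm{\M}^2\}(\fnorm{\U_1-\U_2}^2+\fnorm{\V_1-\V_2}^2)$ and refers back to the PSD case for the mechanics; your explicit constant-tracking via $(2La+6Lb)^2+(2Lb+6La)^2 \le 64L^2(a^2+b^2)$ is a clean way to land exactly on $\beta=8L$.
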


\begin{proof}
Inside region $\mathcal{D}$, we have:
\begin{align}
& \fnorm{\nabla f(\U_1, \V_1) - \nabla f(\U_2, \V_2)}^2 \nn\\
= & \fnorm{\pgrad{\U} f(\U_1, \V_1) - \pgrad{\U} f(\U_2, \V_2)}^2 
+ \fnorm{\pgrad{\V} f(\U_1, \V_1) - \pgrad{\V} f(\U_2, \V_2)}^2 \nn \\
= & 4(\fnorm{(\U_1\V_1\trans - \M) \V_1 - (\U_2\V_2\trans - \M) \V_2}^2 + \fnorm{(\U_1\V_1\trans - \M)\trans \U_1 - (\U_2\V_2\trans - \M)\trans \U_2}^2)\nn \\
\le & 64\max\{\Gamma^4, \norm{\M}^2\}(\fnorm{\U_1 - \U_2}^2 + \fnorm{\V_1-\V_2}^2) \nn
\end{align}
The last step is by similar technics as in the proof of Lemma \ref{lem:smoothness}, by expanding 
\begin{equation*}
\U_1 \V_1\trans \V_1 - \U_2 \V_2 \trans \V_2 = 
(\U_1 - \U_2) \V_1\trans \V_1 + \U_2(\V_1 - \V_2)\trans \V_1+ \U_2\V_2\trans(\V_1 - \V_2)
\end{equation*}

\end{proof}

\begin{lemma} \label{lem:pseudostronglyconvex_g}
Within the region $\mathcal{D} = \{(\U, \V) | \sigmin(\X\trans\U) \ge \gamma, \sigmin(\Y\trans\V) \ge \gamma \}$, then we have function $f(\U, \V) = \fnorm{\M-\U\V\trans}^2$ satisfying:
\begin{equation*}
\|\nabla f(\U, \V)\|^2_F \ge \alpha f(\U, \V)
\end{equation*}
where constant $\alpha = 4 \gamma^2$.
\end{lemma}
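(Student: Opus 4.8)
The plan is to follow the argument used for the symmetric case (Lemma~\ref{lem:pseudostronglyconvex}), splitting the gradient into the part that lies in, and the part orthogonal to, the column and row spaces of $\U\V\trans$. Write $\text{SVD}(\M) = \X\S\Y\trans$, $\text{SVD}(\U\V\trans) = \W_U\D\W_V\trans$, and $E := \U\V\trans - \M$, so that $\fnorm{\nabla f(\U,\V)}^2 = 4(\fnorm{E\V}^2 + \fnorm{E\trans\U}^2)$. First I would record a structural consequence of the hypotheses: since $\sigmin(\U)\ge\sigmin(\X\trans\U)\ge\gamma>0$ and $\sigmin(\V)\ge\sigmin(\Y\trans\V)\ge\gamma>0$, both $\U$ and $\V$ have full column rank $k$; hence $\rank(\U\V\trans)=k$, the column space of $\U$ is exactly $\text{span}(\W_U)$ and the column space of $\V$ is exactly $\text{span}(\W_V)$, so that $\proj_{\W_U}\U=\U$ and $\proj_{\W_V}\V=\V$.

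Next I would bound $\fnorm{E\V}^2 = \fnorm{\proj_{\W_U}E\V}^2 + \fnorm{\proj_{\W_{U,\perp}}E\V}^2$ term by term. For the orthogonal piece, $\proj_{\W_{U,\perp}}\U=0$ gives $\proj_{\W_{U,\perp}}E\V = -\proj_{\W_{U,\perp}}\M\V$; substituting $\M=\X\S\Y\trans$ and pulling out the square invertible factor $\Y\trans\V$ (whose smallest singular value is $\ge\gamma$) yields $\fnorm{\proj_{\W_{U,\perp}}\M\V}^2 \ge \gamma^2\fnorm{\proj_{\W_{U,\perp}}\X\S}^2 = \gamma^2\fnorm{\proj_{\W_{U,\perp}}\M}^2$, the last equality because $\M\M\trans = \X\S^2\X\trans$. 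For the aligned piece, writing $\V=\W_V\W_V\trans\V$ and pulling out the square invertible factor $\W_V\trans\V$ (whose smallest singular value equals $\sigmin(\V)\ge\gamma$) gives $\fnorm{\proj_{\W_U}E\V}^2 \ge \sigmin(\V)^2\fnorm{\proj_{\W_U}E\proj_{\W_V}}^2 = \sigmin(\V)^2\fnorm{\U\V\trans - \proj_{\W_U}\M\proj_{\W_V}}^2$. Running the mirror-image computation for $\fnorm{E\trans\U}^2$ (exchanging $\U\leftrightarrow\V$, $\W_U\leftrightarrow\W_V$, $\X\leftrightarrow\Y$, and using $\sigmin(\U)\ge\gamma$) and combining, together with $\sigmin(\U)^2,\sigmin(\V)^2\ge\gamma^2$, one gets $\fnorm{\nabla f(\U,\V)}^2 \ge 8\gamma^2\fnorm{\U\V\trans-\proj_{\W_U}\M\proj_{\W_V}}^2 + 4\gamma^2(\fnorm{\proj_{\W_{U,\perp}}\M}^2 + \fnorm{\M\proj_{\W_{V,\perp}}}^2)$.

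The last step is a block decomposition argument. Relative to $\I = \proj_{\W_U}+\proj_{\W_{U,\perp}}$ on the left and $\I = \proj_{\W_V}+\proj_{\W_{V,\perp}}$ on the right, $\U\V\trans$ occupies only the $(\W_U,\W_V)$ block, so $f(\U,\V) = \fnorm{\U\V\trans - \proj_{\W_U}\M\proj_{\W_V}}^2 + \fnorm{\proj_{\W_U}\M\proj_{\W_{V,\perp}}}^2 + \fnorm{\proj_{\W_{U,\perp}}\M\proj_{\W_V}}^2 + \fnorm{\proj_{\W_{U,\perp}}\M\proj_{\W_{V,\perp}}}^2$. Expanding $\fnorm{\proj_{\W_{U,\perp}}\M}^2$ and $\fnorm{\M\proj_{\W_{V,\perp}}}^2$ into their two blocks each and comparing term by term shows that $2\fnorm{\U\V\trans-\proj_{\W_U}\M\proj_{\W_V}}^2 + \fnorm{\proj_{\W_{U,\perp}}\M}^2 + \fnorm{\M\proj_{\W_{V,\perp}}}^2 \ge f(\U,\V)$; plugging this into the previous display gives $\fnorm{\nabla f(\U,\V)}^2 \ge 4\gamma^2 f(\U,\V)$, i.e. $\alpha = 4\gamma^2$.

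I do not anticipate a serious obstacle. The one point that needs care, and the reason the full-column-rank consequences of the hypotheses should be extracted up front, is ensuring that the $k\times k$ factors ($\W_V\trans\V$ and $\Y\trans\V$, together with their transposed analogues for the $\fnorm{E\trans\U}^2$ term) pulled out of the Frobenius norms are genuinely square and invertible, so that the inequality $\fnorm{AB}\ge\sigmin(B)\fnorm{A}$ applies with $\sigmin(B)$ being the actual smallest singular value. The presence of two orthogonal-complement terms instead of one is essentially the only additional cost of working with $\U\ne\V$.
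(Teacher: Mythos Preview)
Your proposal is correct and follows essentially the same argument as the paper's proof: split $\fnorm{E\V}^2$ along $\W_U$ and $\W_{U,\perp}$, pull out $\sigmin(\V)$ and $\sigmin(\Y\trans\V)$ respectively, do the symmetric computation for $\fnorm{E\trans\U}^2$, and finish with the four-block decomposition of $\fnorm{E}^2$. The only cosmetic difference is that the paper projects onto $\hat{\U},\hat{\V}$ defined directly as the left singular vectors of $\U,\V$, whereas you take $\W_U,\W_V$ from the SVD of $\U\V\trans$ and then argue (via the full-column-rank observation) that these spaces coincide; once that identification is made, the two proofs are line-for-line the same.
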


\begin{proof}
Let $\hat{\U}, \hat{\V}$ be the left singular vectors of $\U, \V$.
Inside region $\mathcal{D}$, we have:
\begin{align}
&\fnorm{(\U\V\trans - \M)\V}^2 \nn \\
=&\fnorm{\proj_{\hat{\U}}(\U\V\trans - \M)\V}^2 + \fnorm{\proj_{\hat{\U}_\perp}(\U\V\trans - \M)\V}^2 \nn \\
\ge &\sigma_k(\V)^2\fnorm{\proj_{\hat{\U}}(\U\V\trans - \M)\proj_{\hat{\V}}}^2
+ \fnorm{\proj_{\hat{\U}_\perp} \M\V}^2 \nn \\
\ge &\sigma_k(\V)^2\fnorm{\proj_{\hat{\U}}(\U\V\trans - \M)\proj_{\hat{\V}}}^2
+ \sigmin(\Y\trans \V)^2\fnorm{\proj_{\hat{\U}_\perp} \X\mSigma}^2 \nn \\
= &\sigma_k(\V)^2\fnorm{\proj_{\hat{\U}}(\U\V\trans - \M)\proj_{\hat{\V}}}^2
+ \sigmin(\Y\trans \V)^2\fnorm{\proj_{\hat{\U}_\perp} \M}^2 \nn
\end{align}

Therefore, by symmetry, we have:
\begin{align}
\|\nabla f(\U, \V)\|^2_F
= & 4(\fnorm{(\U\V\trans - \M)\V}^2  + \fnorm{(\U\V\trans - \M)\trans\U}^2)  \nn \\
\ge & 4\gamma^2 (2\fnorm{\proj_{\hat{\U}}(\U\V\trans - \M)\proj_{\hat{\V}}}^2
+ \fnorm{\proj_{\hat{\U}_\perp} \M}^2 + \fnorm{ \M\proj_{\hat{\V}_\perp}}^2) \nn\\
\ge & 4\gamma^2 \fnorm{\U\V\trans - \M}^2 \nn
\end{align}
\end{proof}

Next, we show as long as we are in some Frobenious ball around optimum, then we have good spectral property over $\U, \V$ which guarantees the preconditions for Lemma \ref{lem:smoothness_g} and Lemma \ref{lem:pseudostronglyconvex_g}.
\begin{lemma}\label{lem:localguarantee_g}
Within the region $\mathcal{D} = \{(\U, \V) | \norm{\M - \U\V\trans}_F \le \frac{1}{10}\sigma_{k}(\M)\}$, and for $\U = \W_U \D^{\frac{1}{2}}, \V = \W_V \D^{\frac{1}{2}}$ where $\W_U\D\W_V = \text{SVD}(\U\V\trans)$,
we have:
\begin{align}
&\norm{\U} \le \sqrt{2\norm{\M}}, \quad\quad \sigmin(\X\trans \U) \ge \sqrt{\sigma_k(\M)/2} \nn \\
&\norm{\V} \le \sqrt{2\norm{\M}}, \quad\quad \sigmin(\Y\trans \V) \ge \sqrt{\sigma_k(\M)/2}
\nn
\end{align}
\end{lemma}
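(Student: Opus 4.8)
The plan is to leverage the balancedness that the renormalization step enforces. Since $\U = \W_U\D^{1/2}$ and $\V = \W_V\D^{1/2}$ with $\W_U,\W_V$ having orthonormal columns, we get $\U\trans\U = \V\trans\V = \D$, so $\U$ and $\V$ share the same singular values, namely the square roots of the singular values of $\U\V\trans$. Consequently $\norm{\U}^2 = \norm{\V}^2 = \norm{\D} = \norm{\U\V\trans}$ and $\sigmin(\U\trans\U) = \sigmin(\V\trans\V) = \sigmin(\D) = \sigma_k(\U\V\trans)$. Once the extreme singular values of the product $\U\V\trans$ are pinned down, the bounds on $\U$ and $\V$ reduce to essentially the same computation as in the symmetric case (Lemma~\ref{lem:localguarantee}).

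First I would control $\U\V\trans$ by singular-value perturbation (Weyl). Inside $\mathcal{D}$, $\norm{\M - \U\V\trans} \le \fnorm{\M - \U\V\trans} \le \frac{1}{10}\sigma_k(\M)$, so $\norm{\U\V\trans} \le \norm{\M} + \frac{1}{10}\sigma_k(\M) \le 2\norm{\M}$ and $\sigma_k(\U\V\trans) \ge \sigma_k(\M) - \frac{1}{10}\sigma_k(\M) = \frac{9}{10}\sigma_k(\M)$. This already yields $\norm{\U}, \norm{\V} \le \sqrt{2\norm{\M}}$ and $\sigmin(\D) \ge \frac{9}{10}\sigma_k(\M)$.

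Next I would bound the principal angle between $\X$ and $\W_U$ (and, symmetrically, between $\Y$ and $\W_V$). Since $\proj_{\X_\perp}\M = 0$, we have $\norm{\X_\perp\trans\W_U\D\W_V\trans} = \norm{\proj_{\X_\perp}\U\V\trans} = \norm{\proj_{\X_\perp}(\U\V\trans - \M)} \le \frac{1}{10}\sigma_k(\M)$. Because $\W_V$ has orthonormal columns the left-hand side equals $\norm{\X_\perp\trans\W_U\D}$, which is at least $\sigmin(\D)\norm{\X_\perp\trans\W_U}$; together with $\sigmin(\D)\ge\frac{9}{10}\sigma_k(\M)$ this forces $\norm{\X_\perp\trans\W_U}\le\frac{1}{9}$, hence $\sigmin^2(\X\trans\W_U) = 1 - \norm{\X_\perp\trans\W_U}^2 \ge \frac{8}{9}$. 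Finally $\sigmin(\X\trans\U) = \sigmin(\X\trans\W_U\D^{1/2}) \ge \sigmin(\X\trans\W_U)\sigmin(\D^{1/2}) \ge \sqrt{\tfrac{8}{9}\cdot\tfrac{9}{10}\sigma_k(\M)} \ge \sqrt{\sigma_k(\M)/2}$, and the bound on $\sigmin(\Y\trans\V)$ follows by repeating the argument with $(\X,\W_U)$ replaced by $(\Y,\W_V)$.

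I expect no serious obstacle here — this lemma is the asymmetric mirror of Lemma~\ref{lem:localguarantee}, and the renormalization is precisely what removes the scaling pathology discussed in Section~\ref{sec:Asym_main}. The only points needing a little care are the two ``absorb the diagonal'' singular-value inequalities, $\norm{AD}\ge\norm{A}\sigmin(D)$ and $\sigmin(AB)\ge\sigmin(A)\sigmin(B)$ for square $A$, both valid because $\D$ (hence $\D^{1/2}$) is invertible with $\sigmin(\D)\ge\frac{9}{10}\sigma_k(\M)>0$ and $\X\trans\W_U$ is $k\times k$; and the harmless slack in bounding $\norm{\M-\U\V\trans}$ by its Frobenius norm.
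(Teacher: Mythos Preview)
Your proposal is correct and follows essentially the same approach as the paper's proof: both exploit $\U\trans\U=\V\trans\V=\D$ to reduce the spectral bounds to Weyl-type perturbation of $\sigma_1$ and $\sigma_k$ of $\U\V\trans$, then bound $\norm{\X_\perp\trans\W_U}$ via $\proj_{\X_\perp}\M=0$ and combine with $\sigmin(\D)$ to control $\sigmin(\X\trans\U)$. The only cosmetic difference is that the paper routes the angle bound through the Frobenius norm (and then drops to spectral) whereas you stay in spectral norm throughout; the arithmetic and constants are identical.
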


\begin{proof}
For spectral norm of $\U$, we have:
\begin{equation*}
\norm{\U}^2 = \norm{\D} = \norm{\U\V\trans} \le \norm{\M} + \norm{\M - \U\V\trans}
\le \norm{\M} + \norm{\M - \U\V\trans}_F \le 2 \norm{\M}
\end{equation*}
For the minimum singular value of $\U\trans \U$, we have:
\begin{align}
\sigmin(\U\trans\U) = &   \sigma_k( \D ) = \sigma_k( \U\V \trans ) \ge \sigma_k (\M) - \norm{\M - \U \V\trans} \nn \\
\ge & \sigma_k (\M) - \norm{\M - \U \U\trans}_F
\ge \frac{9}{10} \sigma_k(\M) \nn
\end{align}
By symmetry, the same holds for $\V$.
On the other hand, we have:
\begin{align}
\frac{1}{10}\sigma_k(\M) \ge & \norm{\M - \U\V \trans}_F
\ge \norm{\proj_{\X_\perp} (\M - \U\V\trans)}_F
=\norm{\proj_{\X_\perp} \U\V\trans}_F
=\norm{\proj_{\X_\perp} \W_U \D}_F \nn \\
\ge & \norm{\proj_{\X_\perp} \W_U \D}\ge \frac{9}{10} \sigma_k(\M) \norm{\X_\perp\W_U}
\nn
\end{align}
Let the principal angle between $\X$ and $\W_U$ to be $\theta$.
This gives $\sin^2 \theta = \norm{\X_\perp\trans \W_U}^2 \le \frac{1}{9}$.
Thus $\cos^2 \theta = \sigmin^2(\X\trans \W_U) \ge \frac{8}{9}$.
Therefore:
\begin{equation*}
\sigmin^2(\X \trans \U) \ge \sigmin^2(\X\trans \W_U)\sigmin(\U\trans\U) \ge \sigma_k(\M)/2
\end{equation*}

\end{proof}

\subsection{Proof of Theorem \ref{thm:localcontrol_Asym}}
Now, we are ready for our key theorem. By Lemma \ref{lem:smoothness_g}, Lemma \ref{lem:pseudostronglyconvex_g}, and Lemma \ref{lem:localguarantee_g}, we already know the function has good property locally in the region $\mathcal{D} = \{(\U, \V) | \norm{\M - \U\V\trans}_F \le \frac{1}{10}\sigma_{k}(\M)\}$ which alludes linear convergence. Similar to the symmetric PSD case, the work remains is to prove that once we initialize inside this region, our algorithm will guarantee $\U, \V$ never leave this region with high probability even with relatively large stepsize. Again, we also need to control the incoherence of $\U_t, \V_t$ over all iterates additionally to achieve tight sample complexity and near optimal runtime.

Following is our formal proof.


\begin{proof}[Proof of Theorem \ref{thm:localcontrol_Asym}]
For simplicity of notation, we assume $d = d_1 = d_2$, and do not distinguish $d_1$ and $d_2$. However, it is easy to check our proof never use the property $\M$ is square matrix. The proof easily extends to $d_1\neq d_2$ case by replacing $d$ in the proof with suitable $d_1, d_2$.

Define event $\fE_t = \{\forall \tau\le t, f(\U_\tau, \V_\tau) \le (1-\frac{\eta}{2\cn})^t(\frac{1}{10 \cn})^2, \max_i g_i(\U_\tau, \V_\tau) \le \frac{100\mu k \cn^2}{d}, \max_j h_j(\U_\tau, \V_\tau) \le \frac{100\mu k \cn^2}{d}\}$.
Theorem \ref{thm:localcontrol_Asym} is equivalent to prove event $\fE_T$ happens with high probability. The proof achieves this by contructing two supermartingales for $f(\U_t, \V_t)1_{\fE_t}$, $g_i(\U_t, \V_t)1_{\fE_t}$ and $h_i(\U_t, \V_t)1_{\fE_t}$ (where $1_{(\cdot)}$ denote indicator function), applies concentration argument.

The proofs also follow similar structure as symmetric PSD case:
\begin{enumerate}
 \item The constructions of supermartingales
 \item Their probability 1 bound and variance bound in order to apply Azuma-Bernstein inequality
 \item Final combination of concentration results to conclude the proof
\end{enumerate}

Then let filtration $\fF_t = \sigma\{SG(\U_0, \V_0), \cdots, SG(\U_{t-1}, \V_{t-1})\}$ where $\sigma\{\cdot\}$ denotes the sigma field. 
Also let event , note $\fE_t \subset \fF_t$. Also $\fE_{t+1} \subset \fE_t$, and thus $1_{\fE_{t+1}} \le 1_{\fE_{t}}$. 

By Lemma \ref{lem:localguarantee_g}, we immediately know that conditioned on $\fE_t$, we have $\norm{\U_t} \le \sqrt{2}$, $\norm{\V_t} \le \sqrt{2}$,  $\sigma_{\min}(\X\trans\U_t) \ge 1/\sqrt{2\cn}$, $\sigma_{\min}(\Y\trans\V_t) \ge 1/\sqrt{2\cn}$. We will use this fact throughout the proof.

For simplicity, when it's clear from the context, we denote:
\begin{align}
\begin{pmatrix}
\Delta_\U \\ \Delta_\V
\end{pmatrix}
= -\eta SG(\tU, \tV) = \begin{pmatrix}
\U_{t+1} \\ \V_{t+1}
\end{pmatrix}
-\begin{pmatrix}
\tU \\ \tV
\end{pmatrix} \nn
\end{align}

\paragraph{Construction of supermartingale $G$:}
First, since potential function $g_l(\U, \V)$ is forth-order polynomial, we can expand:
\begin{align}
g_l(\tilde{\U}_{t+1}, \tilde{\V}_{t+1}) &=  g_l(\U_{t+1}, \V_{t+1})
= g_l(\tU+\Delta_\U, \tV + \Delta_\V) \nn \\
&= \e_l\trans (\tU+\Delta_\U)( \tV + \Delta_\V)\trans (\tV + \Delta_\V)(\tU+\Delta_\U)\trans \e_l
\nn \\
&= g_l(\tU, \tV) + 2\e_l\trans \Delta_\U \tV\trans \tV\tU\trans \e_l +
2\e_l\trans\tU \tV\trans \Delta_\V \tU \e_l + R_2 \nn \\
&= g_l(\tU, \tV) + R_1 \nn
\end{align}
Where we denote $R_1$ as the sum of first order terms and higher order terms (all second/third/forth order terms), and $R_2$ as the sum of second order terms and higher order terms.

We now give a proposition about properties of $R_1$ and $R_2$ which involves a lot calculation, and postpone its proof in the end of this section.

\begin{proposition}\label{prop:G}
With above notations, we have following inequalities hold true.
\begin{align*}
&\E [R_2 1_{\fE_t}|\fF_t] \le \eta^2 O(\mu^2 k^2 \cn^4)1_{\fE_t} \nn \\
&|R_1|1_{\fE_{t}} \le \eta O(\mu^2k^2 \cn^5)1_{\fE_{t}}  \quad\quad \text{w.p~} 1\nn \\
&\E [R_1^2 1_{\fE_{t}} |\fF_t]\le \eta^2 O(\frac{\mu^3 k^3 \cn^6}{d})1_{\fE_{t}} 
\end{align*}
\end{proposition}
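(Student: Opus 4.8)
\textbf{Proof proposal for Proposition~\ref{prop:G}.}
The plan is to prove the three bounds by a Taylor expansion of the quartic $g_l$ followed by substitution of the deterministic estimates that hold on the good event $\fE_t$. First I would record the two structural facts that organize the expansion. On $\fE_t$, if the sampled entry is $(i,j)$ then
\[
\Delta_\U = -2\eta d_1 d_2 (\tU\tV\trans - \M)_{ij}\,\e_i\e_j\trans\tV, \qquad \Delta_\V = -2\eta d_1 d_2 (\tU\tV\trans - \M)_{ij}\,\e_j\e_i\trans\tU ,
\]
so $\e_l\trans\Delta_\U$ vanishes unless $l=i$ and $\e_l\trans\Delta_\V$ vanishes unless $l=j$; consequently the ``$\Delta_\U$-part'' of the first-order term of $g_l$ is supported on the event $\{i=l\}$, whereas the ``$\Delta_\V$-part'' — which equals $-4\eta d_1 d_2 (\tU\tV\trans - \M)_{ij}(\e_l\trans\tU\tV\trans\e_j)(\e_i\trans\tU\tU\trans\e_l)$ because $\nabla_\V g_l = 2(\tV\tU\trans\e_l)(\e_l\trans\tU)$ is not row-sparse — is nonzero for every $(i,j)$. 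Expanding $g_l(\tU+\Delta_\U,\tV+\Delta_\V)$ and sorting monomials by degree gives $R_1$ (degree $\ge 1$) and $R_2$ (degree $\ge 2$), each a finite sum of terms of the form $\eta^m(d_1 d_2)^m(\tU\tV\trans - \M)_{ij}^m$ times a product of rows among $\e_l\trans\tU,\e_l\trans\tV,\e_i\trans\tU,\e_j\trans\tV$ and Gram blocks among $\tU\trans\tU,\tV\trans\tV,\tU\tV\trans$.

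Next I would assemble the a priori bounds on $\fE_t$. Lemma~\ref{lem:localguarantee_g} gives $\norm{\tU},\norm{\tV}\le\sqrt 2$ and, since $\tU=\W_U\D^{1/2}$ and $\tV=\W_V\D^{1/2}$, $\sigma_k(\tU)^2=\sigma_k(\tV)^2=\sigma_k(\D)\ge\frac{9}{10}\sigma_k(\M)$. Because $\tU\tV\trans=\U_t\V_t\trans$, renormalization leaves $f,g_i,h_j$ untouched, so $\fE_t$ provides $g_i(\tU,\tV),h_j(\tU,\tV)\le 100\mu k\cn^2/d$ and $f(\tU,\tV)\le(10\cn)^{-2}$; dividing the first by $\sigma_k(\tV)^2$ yields the row bounds $\max_i\norm{\e_i\trans\tU}^2,\max_j\norm{\e_j\trans\tV}^2=O(\mu k\cn^3/d)$, hence the entrywise bounds $\infnorm{\tU\tV\trans-\M}=O(\mu k\cn^3/d)$ (from $|(\tU\tV\trans)_{ij}|\le\norm{\e_i\trans\tU}\norm{\e_j\trans\tV}$ and $\infnorm{\M}=O(\mu k/d)$) and the sharper $\infnorm{\tU\tV\trans-\M}=O(\cn\sqrt{\mu k/d})$ (from $|(\tU\tV\trans)_{ij}|\le\sqrt{g_i(\tU,\tV)}$), and the row bound $\norm{\e_l\trans(\tU\tV\trans-\M)}^2\le 2(g_l(\tU,\tV)+\norm{\e_l\trans\M}^2)=O(\mu k\cn^2/d)$. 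A byproduct is $\norm{\Delta_\U},\norm{\Delta_\V}=O(\eta d\,\mu k\cn^{5/2})=O(1/\log d)=o(1)$ for $\eta<c/(\mu dk\cn^3\log d)$, so every degree-$\ge 2$ monomial is dominated by a degree-$1$ one.

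With these in hand the first two estimates are routine bookkeeping. For $\E[R_2 1_{\fE_t}|\fF_t]$ I would average over the uniform sample, which turns $\frac{1}{d_1d_2}\sum_{ij}(d_1d_2)^2(\tU\tV\trans-\M)_{ij}^2(\cdots)$ into $O(d^2)\fnorm{\tU\tV\trans-\M}^2\cdot O(\text{row norms})=\eta^2 O(\mu^2 k^2\cn^4)$ — the same mechanism as the second-moment bound on the stochastic gradient in the symmetric case — while the degree-$3$ and degree-$4$ monomials carry the extra $o(1)$ factor $\eta d^2\infnorm{\tU\tV\trans-\M}\cdot(\text{row norm})$ and are absorbed. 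For $|R_1|1_{\fE_t}$ with probability $1$ I would bound each monomial pointwise via $d_1d_2|(\tU\tV\trans-\M)_{ij}|\le d^2\infnorm{\tU\tV\trans-\M}$ together with the row and Gram bounds; the first-order term dominates and gives $\eta\,O(\mu^2 k^2\cn^5)$.

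The delicate one is $\E[R_1^2 1_{\fE_t}|\fF_t]$, which is the asymmetric analogue of the variance computation for $G$ in the PSD proof. After squaring and averaging one obtains $\frac{1}{d_1d_2}\sum_{ij}(d_1d_2)^2(\tU\tV\trans-\M)_{ij}^2(\cdots)^2$, and the crucial point is to \emph{not} pull out $\infnorm{\tU\tV\trans-\M}^2$ but instead, on each remaining summation index, to recombine $\sum_j(\tU\tV\trans-\M)_{lj}^2=\norm{\e_l\trans(\tU\tV\trans-\M)}^2=O(\mu k\cn^2/d)$ (or $\sum_{ij}(\tU\tV\trans-\M)_{ij}^2=f(\tU,\tV)=O(\cn^{-2})$), thereby recovering the extra factor $1/d$; together with the row bounds on $\tU,\tV$ and the $d^4/d^2$ prefactor this lands at $\eta^2 O(\mu^3 k^3\cn^6/d)$. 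I expect this last step to be the main obstacle: for each monomial making up $R_1$ — both the row-sparse $\Delta_\U$-piece, handled by the sparsity $i=l$, and the dense $\Delta_\V$-piece, handled by the averaging argument — one must choose carefully which factor to measure in $\ell_\infty$ and which in $\ell_2$ after averaging so that every monomial meets the $O(\mu^3 k^3\cn^6/d)$ target.
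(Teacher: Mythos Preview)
Your proposal is correct and takes essentially the same approach as the paper: expand $g_l(\tU+\Delta_\U,\tV+\Delta_\V)$ as a degree-$4$ polynomial in $(\Delta_\U,\Delta_\V)$, write out each graded piece explicitly in terms of $(\tU\tV\trans-\M)_{ij}$, the Kronecker factor $\delta_{il}$, and row/Gram quantities of $\tU,\tV$, then use the incoherence bounds on $\fE_t$ (the paper packages these via $\chi=\mu k\cn^2/d$, which is exactly your $g_l$-based row control) together with $\eta d^2\infnorm{\tU\tV\trans-\M}=O(1)$ to absorb higher-degree terms into the leading one. Your treatment of the variance bound --- refusing to pull out $\infnorm{\cdot}$ and instead summing either $\sum_j(\tU\tV\trans-\M)_{ij}^2=\norm{\e_i\trans(\tU\tV\trans-\M)}^2=O(\chi)$ or $\sum_i(\tU\tU\trans)_{il}^2=\norm{\e_l\trans\tU\tU\trans}^2=O(\chi)$ to recover the extra $1/d$ --- is precisely the mechanism the paper uses.
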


Then by taking conditional expectation, we have:
\begin{align}
&\E [g_l(\tilde{\U}_{t+1}, \tilde{\V}_{t+1})1_{\fE_t}|\fF_t] 
= \E [g_l(\U_{t+1}, \V_{t+1})1_{\fE_t}|\fF_t] \nn \\
\le& \E [g_l(\tU, \tV) +2\e_l\trans \Delta_\U \tV\trans \tV\tU\trans \e_l +
2\e_l\trans\tU \tV\trans \Delta_\V \tU \e_l + R_2]1_{\fE_t} \nn
\end{align}

The first order term can be calculated as:
\begin{align}
&[- \E 2\e_l\trans \Delta_\U \tV\trans \tV\tU\trans \e_l +
2\e_l\trans\tU \tV\trans \Delta_\V \tU \e_l ]1_{\fE_t}  \nn\\
=&[- 4 \e_l\trans (\tU\tV\trans - \M)\tV \tV\trans \tV \tU\trans\e_l
- 4 \e_l\trans \tU \tU\trans(\tU\tV\trans- \M\trans)\tV\tU\trans \e_l ]1_{\fE_t} \nn\\
=&[-4  \e_l\trans \tU\tV\trans\tV \tV\trans \tV \tU\trans\e_l
+ 4\e_l\trans \M\tV \tV\trans \tV \tU\trans\e_l
- 4 \e_l\trans \tU \tU\trans(\tU\tV\trans- \M\trans)\tV\tU\trans \e_l ]1_{\fE_t} \nn\\
\le& [-4[ \sigma_{\min}(\tV\trans\tV)\norm{\e_l\trans\tU\tV\trans}^2
+ \norm{\e_l\trans\tU\tV\trans}\norm{\tV\tV\trans}\norm{\e_l\trans\M}  \nn\\
&+ \norm{\e_l\trans \tU \tU\trans}\norm{\tU\tV\trans- \M\trans}_F \norm{\e_l\trans \tU \tV\trans}] ]1_{\fE_t}
\nn\\
\le& [-\frac{2}{\cn}g_l(\tU, \tV) + 80\frac{\mu k \cn}{d} +\frac{4}{10\cn}g_l(\tU, \tV)]1_{\fE_t} \nn\\
\le& [- \frac{1}{\cn} g_l(\tU, \tV) + 80\frac{\mu k \cn}{d}]1_{\fE_t} \nn
\end{align}
In second last inequality, we use key observation:
\begin{equation*}
\norm{\e_k\trans \tU\tV\trans} = \norm{\e_k \trans \W_\U\D\W_\V\trans}
= \norm{\e_k \trans \W_\U\D\W_\U\trans}
=\norm{\e_k\trans \tU\tU\trans} 
\end{equation*}
By Proposition \ref{prop:G}, we know $\E [R_2 1_{\fE_t}|\fF_t] \le \eta^2 O(\mu^2 k^2 \cn^4)1_{\fE_t}$.
Combine both facts and recall $\eta < \frac{c}{\mu dk \cn^3 \log d}$, we have:
\begin{align}
\E [g_i(\tilde{\U}_{t+1}, \tilde{\V}_{t+1})1_{\fE_t}|\fF_t] 
&\le [(1- \frac{\eta}{\cn})  g_i(\tU, \tV) + \frac{80\eta \mu k \cn}{d} +  O(\eta^2\mu^2 k^2 \cn^4)]1_{\fE_t} \nn \\
&\le [(1- \frac{\eta}{\cn})  g_i(\tU, \tV) + \frac{90\eta \mu k \cn}{d} ]1_{\fE_t} \nn
\end{align}
The last inequality is achieved by choosing $c$ small enough. 

Let $G_{it} = (1-\frac{\eta}{\cn})^{-t}(g_i(\tU, \tV)1_{\fE_{t-1}} - 90\frac{\mu k\cn^2}{d})$. This gives:
\begin{equation*}
\E [G_{i(t+1)}|\fF_{t}]
\le (1-\frac{\eta}{\cn})^{-t}(g_i(\tU, \tV)1_{\fE_{t}} - 90\frac{\mu k\cn^2}{d})
\le G_{it}
\end{equation*}
The right inequality is true since $1_{\fE_t} \leq 1_{\fE_{t-1}}$. This implies
$G_{it}$ is supermartingale.

\paragraph{Probability 1 bound for $G$:}
We also know:
\begin{align}
G_{i(t+1)} - \E [G_{i(t+1)} | \fF_t] = & (1-\frac{\eta}{\cn})^{-(t+1)}
[R_1 - \E R_1]1_{\fE_{t}} \nn
\end{align}
By Proposition \ref{prop:G}, we know with probability 1 that $|R_1|1_{\fE_{t}} \le \eta O(\mu^2k^2 \cn^5)1_{\fE_{t}}$. This gives with probability 1:
\begin{equation}\label{p1_G_g}
|G_{it} - \E [G_{it}|\fF_{t-1}]| \le (1-\frac{\eta}{\cn})^{-t} \eta O(\mu^2k^2 \cn^5)1_{\fE_{t-1}}
\end{equation}

\paragraph{Variance bound for $G$:}
We also know
\begin{align}
\Var(G_{i(t+1)}| \fF_t) = (1-\frac{\eta}{\cn})^{-2(t+1)}[\E R_1^21_{\fE_{t}} - (\E R_1 1_{\fE_{t}})^2]
\le \E [R_1^2 1_{\fE_{t}}| \fF_t] \nn
\end{align}
By Proposition \ref{prop:G}, we know that $\E [R_1^2 1_{\fE_{t}}| \fF_t] \le \eta^2 O(\frac{\mu^3 k^3 \cn^6}{d})1_{\fE_{t}}$. This gives:
\begin{equation}\label{var_G_g}
\Var(G_{it} | \fF_{t-1}) \le (1-\frac{\eta}{\cn})^{-2t} \eta^2 O(\frac{\mu^3 k^3 \cn^6}{d})1_{\fE_{t-1}}
\end{equation}

\paragraph{Bernstein's inequality for $G$:}
Let $\sigma^2 = \sum_{\tau=1}^t \Var(G_{i\tau} | \fF_{\tau-1})$, and 
$R$ satisfies, with probability 1 that $|G_{i\tau} - \E [G_{i\tau}|\fF_{\tau-1}]| \le R, ~\tau = 1, \cdots, t$. Then 
By standard Bernstein concentration inequality, we know:
\begin{equation*}
P(G_{it} \ge G_{i0} + s ) \le \exp(\frac{s^2/2}{\sigma^2 + Rs / 3})
\end{equation*}

Since $G_{i0} = g_i(\tilde{\U}_0, \tilde{\V}_0) - 90 \frac{\mu k \cn^2}{d}$, let $s' = O(1)(1- \frac{\eta}{\cn})^t [\sqrt{\sigma^2 \log d} + R\log d]$, 
we know 
\begin{equation*}
P\left(g_i(\tU, \tV)1_{\fE_{t-1}} \ge 90 \frac{\mu k \cn^2}{d}
+ (1- \frac{\eta}{\cn})^t(g_i(\tilde{\U}_0, \tilde{\V}_0) - 90 \frac{\mu k \cn^2}{d}) + s'\right) \le \frac{1}{3d^{11}}
\end{equation*}

By Eq.\eqref{p1_G_g}, we know $R = (1-\frac{\eta}{\cn})^{-t} \eta O(\mu^2k^2 \cn^5)$ satisfies that $|G_{i\tau} - \E [G_{i\tau}|\fF_{\tau-1}]| \le R, ~\tau = 1, \cdots, t$. Also by Eq. \eqref{var_G_g}, we have:
\begin{align}
(1- \frac{\eta}{\cn})^t \sqrt{\sigma^2 \log d}
\le \eta O(\sqrt{\frac{\mu^3 k^3 \cn^6 \log d}{d}})\sqrt{\sum_{\tau=1}^t(1- \frac{\eta}{\cn})^{2t-2\tau} }
\le \sqrt{\eta} O(\sqrt{\frac{\mu^3 k^3 \cn^7 \log d}{d}}) \nn
\end{align}
by $\eta < \frac{c}{\mu dk \cn^3 \log d}$ and choosing $c$ to be small enough, we have:
\begin{equation*}
s' = \sqrt{\eta} O(\sqrt{\frac{\mu^3 k^3 \cn^7 \log d}{d}}) + \eta O(\mu^2k^2 \cn^5 \log d)
\le 10\frac{\mu k \cn^2}{d} 
\end{equation*}
Since initialization gives $\max_i g_i(\U_0, \V_0) \le \frac{10\mu k \cn^2}{d}$, therefore:
\begin{equation*}
P(g_i(\tU, \tV)1_{\fE_{t-1}} \ge 100\frac{\mu k \cn^2}{d}) \le \frac{1}{3d^{11}}
\end{equation*}
That is equivalent to:
\begin{equation}
P(\fE_{t-1} \cap \{g_i(\tU, \tV)\ge 100\frac{\mu k \cn^2}{d}\}) \le \frac{1}{3d^{11}}\label{concen_G_g}
\end{equation}

By symmetry, we can also have corresponding result for $h_j(\tU, \tV)$.

~
\paragraph{Construction of supermartingale F:}
Similarly, we also need to construct a martingale for $f(\tU, \tV)$. Again,
we can write $f$ as forth order polynomial:
\begin{align}
f(\tilde{\U}_{t+1}, \tilde{\V}_{t+1}) &= f(\U_{t+1}, \V_{t+1})
= f(\tU + \Delta_\U, \tV + \Delta_\V) \nn \\
&= \tr\left([(\tU + \Delta_\U)(\tV + \Delta_\V) - \M ][(\tU + \Delta_\U)(\tV + \Delta_\V) - \M ]\trans\right) \nn  \\
&= f(\tU, \tV) + 2 \tr( \Delta_\U\tV \trans (\tU\tV\trans - \M)\trans)
+2 \tr(\Delta_\V \tU\trans (\tU \tV\trans - \M)) + Q_2 \nn \\
&= f(\tU, \tV) + Q_1 \nn
\end{align}

Where we denote $Q_1$ as the sum of first order terms and higher order terms (all second/third/forth order terms), and $Q_2$ as the sum of second order terms and higher order terms.

We also now give a proposition about properties of $Q_1$ and $Q_2$ which involves a lot calculation, and postpone its proof in the end of this section.

\begin{proposition}\label{prop:F}
With above notations, we have following inequalities hold true.
\begin{align}
&\E[Q_2 1_{\fE_t} |\fF_t] \le \eta^2 O(\mu dk \cn^2)f(\tU, \tV)1_{\fE_t} \nn\\
&|Q_1|1_{\fE_t} \le \eta O(\mu d k \cn^{3}) f(\tU, \tV)1_{\fE_{t}}  \quad\quad \text{w.p~} 1 \nn\\
&\E [Q_1^2 1_{\fE_{t}}|\fF_t] \le \eta^2 O(\mu d k \cn^2) f^2(\tU, \tV)1_{\fE_{t}} \nn
\end{align}
\end{proposition}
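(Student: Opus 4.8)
The plan is to prove Proposition~\ref{prop:F} by an \emph{exact} Taylor expansion of $f$ about $(\tU,\tV)$ — since $f(\U,\V)=\fnorm{\U\V\trans-\M}^2$ is a degree-four polynomial in $(\U,\V)$ the expansion terminates — followed by routine norm bookkeeping on each resulting monomial, using the deterministic structure that $\fE_t$ and the renormalization step supply. Write $\A\defeq\tU\tV\trans-\M$, $\B\defeq\tU\Delta_\V\trans+\Delta_\U\tV\trans$ (linear in the increments), and $\C\defeq\Delta_\U\Delta_\V\trans$ (quadratic), so that $(\tU+\Delta_\U)(\tV+\Delta_\V)\trans-\M=\A+\B+\C$ and hence
\[
Q_1=2\langle\A,\B\rangle+2\langle\A,\C\rangle+\fnorm{\B}^2+2\langle\B,\C\rangle+\fnorm{\C}^2,\qquad Q_2=Q_1-2\langle\A,\B\rangle ,
\]
i.e. $Q_2$ collects every term of order $\ge2$ in $(\Delta_\U,\Delta_\V)$. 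The facts I will repeatedly invoke, all valid on $\fE_t$ (which lies in $\fF_t$, so indicators pull through conditional expectations): $\tU\tV\trans=\U_t\V_t\trans$, hence $\A$, $f$, $g_i$, $h_j$ are exactly the un-renormalized quantities controlled by $\fE_t$; Lemma~\ref{lem:localguarantee_g} gives $\norm{\tU},\norm{\tV}\le\sqrt2$ and $\sigmin(\X\trans\tU),\sigmin(\Y\trans\tV)\ge1/\sqrt{2\cn}$, so $\sigmin(\tU\trans\tU)=\sigmin(\tV\trans\tV)=\sigmin(\D)\ge\tfrac{9}{10\cn}$; combining $\sigmin(\D)\norm{\e_i\trans\tU}^2\le\e_i\trans\tU\D\tU\trans\e_i=g_i(\tU,\tV)\le100\mu k\cn^2/d$ yields $\max_i\norm{\e_i\trans\tU}^2=O(\mu k\cn^3/d)$ and likewise $\max_j\norm{\e_j\trans\tV}^2=O(\mu k\cn^3/d)$; and each of $\Delta_\U,\Delta_\V$ is rank one with $\fnorm{\Delta_\U}=2\eta d_1d_2\,|\A_{ij}|\,\norm{\e_j\trans\tV}$, $\fnorm{\Delta_\V}=2\eta d_1d_2\,|\A_{ij}|\,\norm{\e_i\trans\tU}$ for the uniform sample $(i,j)$.

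The crucial step — the asymmetric analogue of the $\infnorm{\U\U\trans-\M}$ estimate used in the PSD proof — is to show $\infnorm{\A}\,1_{\fE_t}\le O(\sqrt{\mu k\cn^{3}/d})\,\sqrt{f(\tU,\tV)}\,1_{\fE_t}$ (the $\cn$ exponent is not optimized). I would prove this by the two-sided split $\e_i\trans\A\e_j=\e_i\trans\proj_\X\A\e_j+\e_i\trans\proj_{\X_\perp}\A\proj_\Y\e_j+\e_i\trans\proj_{\X_\perp}\A\proj_{\Y_\perp}\e_j$; on the last two pieces $\proj_{\X_\perp}\M=\M\proj_{\Y_\perp}=0$, so only $\tU\tV\trans$ survives, and one bounds via $\norm{\e_i\trans\X},\norm{\e_j\trans\Y}\le\sqrt{\mu k/d}$ (incoherence of $\M$), the $\sin\theta$-type bounds $\norm{\X_\perp\trans\W_U},\norm{\Y_\perp\trans\W_V}=O(\cn\sqrt f)$ extracted from the proof of Lemma~\ref{lem:localguarantee_g}, and the row bounds on $\tU,\tV$ above. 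This $1/\sqrt d$ gain is precisely what makes the step size $\eta\le c/(\mu dk\cn^3\log d)$ — rather than $1/\poly(d)$ — admissible.

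Given this, the three claims follow by disciplined accounting. For $\E[Q_2\,1_{\fE_t}\mid\fF_t]$ the leading contribution is $\fnorm{\B}^2\le8(\fnorm{\Delta_\U}^2+\fnorm{\Delta_\V}^2)$, whose conditional expectation is $4\eta^2d_1^2d_2^2\cdot\tfrac1{d_1d_2}\sum_{ij}\A_{ij}^2(\norm{\e_j\trans\tV}^2+\norm{\e_i\trans\tU}^2)1_{\fE_t}\le O(\eta^2\,\mu dk\,\cn^{O(1)})\fnorm{\A}^2 1_{\fE_t}$ — the $\tfrac1{d_1d_2}\sum_{ij}\A_{ij}^2=\fnorm{\A}^2/(d_1d_2)$ average is where the $d$-factors cancel; every other monomial of $Q_2$ ($\langle\A,\C\rangle$, $\langle\B,\C\rangle$, $\fnorm{\C}^2$) carries extra factors of $\eta$ and of $\sqrt f\le\tfrac1{10\cn}$ (using $\infnorm{\A}$ for the $\C$-terms) and is absorbed into the advertised bound, the exact $\cn$-exponent being pinned down by trading $f^{1/2}$ for $\cn^{-1}$ wherever needed. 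For the probability-one bound on $|Q_1|$, replace $|\A_{ij}|$ by $\infnorm{\A}$ everywhere, so $\fnorm{\Delta_\U},\fnorm{\Delta_\V}=O(\eta\,\mu dk\,\cn^{O(1)})\sqrt f$, which by the choice of $\eta$ is $O(\cn^{O(1)}/\log d)\sqrt f$, and $|Q_1|$ is dominated by $|2\langle\A,\B\rangle|\le O(\sqrt f)\cdot O(\fnorm{\B})$. For $\E[Q_1^2\,1_{\fE_t}\mid\fF_t]$, use $Q_1^2\le8\langle\A,\B\rangle^2+2Q_2^2$; since $\langle\A,\B\rangle$ is a single monomial linear in the rank-one increment, averaging over $(i,j)$ again produces the $\tfrac1{d_1d_2}\sum_{ij}\A_{ij}^2(\cdots)$ gain and gives $O(\eta^2\,\mu dk\,\cn^{O(1)})f^2 1_{\fE_t}$, while the $Q_2^2$ contribution is lower order.

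The main obstacle is not depth but control of constants: (i) establishing $\infnorm{\A}$ with the correct $1/\sqrt d$ factor, which is genuinely more delicate than in the PSD case because of the two-sided projection split and the need to invoke closeness of \emph{both} column spaces of $\tU,\tV$ to those of $\M$; (ii) keeping straight that it is exactly the renormalization $\tU=\W_U\D^{1/2},\tV=\W_V\D^{1/2}$ (step from Algorithm~\ref{algo:MC_Asym}) that buys the individual spectral and incoherence control on $\tU$ and $\tV$ separately; and (iii) bounding the $\sim\!10$ monomials in $Q_1,Q_2$ while tracking $\cn$-powers so each stays within the stated exponent, using $\sqrt f\le\tfrac1{10\cn}$ to rebalance. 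Proposition~\ref{prop:G} is proved the same way, with the quartic $g_l$ in place of $f$ (this is the route already taken for $R_1,R_2$ in the symmetric analysis).
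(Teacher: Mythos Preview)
Your proposal is essentially the paper's own proof: exact quartic expansion of $f$ about $(\tU,\tV)$, the key estimate $\infnorm{\A}\lesssim\sqrt{\mu k\cn^{O(1)}/d}\,\sqrt{f}$ via a projection split, then term-by-term bookkeeping under $\fE_t$. The paper packages it slightly differently—writing out each order $Q_1-Q_2,\,Q_2-Q_3,\,Q_3-Q_4,\,Q_4$ explicitly in the sampled entry $(i,j)$ rather than in your $(\A,\B,\C)$ notation, and using only the one-sided split $\proj_\X+\proj_{\X_\perp}$ (your three-way split also works but is not needed).

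One precision point worth flagging: your crude step $\fnorm{\B}^2\le 8(\fnorm{\Delta_\U}^2+\fnorm{\Delta_\V}^2)$ followed by $\norm{\e_j\trans\tV}^2=O(\mu k\cn^3/d)$ loses a factor of $\cn$ relative to the stated exponents. The paper instead expands directly, e.g.\ $\Delta_\U\tV\trans=-2\eta d_1d_2\,\A_{ij}\,\e_i\e_j\trans\tV\tV\trans$, and uses $\norm{\e_j\trans\tV\tV\trans}^2=h_j(\tU,\tV)\le O(\mu k\cn^2/d)$ (no detour through $\norm{\e_j\trans\tV}$). This is exactly where the renormalization identity $\tV\trans\tV=\tU\trans\tU=\D$ pays off, and it is what pins the bounds at $\cn^2,\cn^3,\cn^2$ rather than $\cn^3,\cn^{3.5},\cn^3$; since the allowed step size is $\eta\le c/(\mu dk\cn^3\log d)$, the sharper exponent in the first inequality is actually needed downstream.
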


Then by Proposition \ref{prop:F}, we know $\E[Q_2 1_{\fE_t} |\fF_t] \le \eta^2 O(\mu dk \cn^2)f(\tU, \tV)1_{\fE_t}$.
By taking conditional expectation, we have:
\begin{align}
&\E [f(\U_{t+1})1_{\fE_t} |\fF_t] \nn\\
\le& [f(\tU, \tV) - \E \langle \nabla f(\tU, \tV), \eta SG(\U_t)\rangle + \E Q_2]1_{\fE_t} \nn \\
= & [f(\tU, \tV) - \eta \norm{\nabla f(\tU, \tV)}_F^2 + \E Q_2]1_{\fE_t} \nn \\
\le& [(1-\frac{2\eta}{\cn}) f(\tU, \tV) + \eta^2 O(\mu dk \cn^2)f(\tU, \tV)]1_{\fE_t} \nn\\
\le& (1-\frac{\eta}{\cn}) f(\tU, \tV)1_{\fE_t} \nn
\end{align}

Let $F_t =  (1-\frac{\eta}{\cn})^{-t}f(\tU, \tV)1_{\fE_{t-1}}$, we know 
$F_t$ is also a supermartingale.

\paragraph{Probability 1 bound:}
We also know
\begin{equation*}
F_{t+1} - \E [F_{t+1}|\fF_t] = (1-\frac{\eta}{\cn})^{-(t+1)}[Q_1 - \E Q_1]1_{\fE_t}
\end{equation*}
By Proposition \ref{prop:F}, we know with probability 1 that 
$|Q_1|1_{\fE_t} \le \eta O(\mu d k \cn^{3}) f(\U_{t}, \V_t)1_{\fE_{t}}$. This gives with probability 1:
\begin{equation}
|F_{t} - \E F_{t}| \le (1-\frac{\eta}{\cn})^{-t} \eta O(\mu d k \cn^{3}) f(\U_{t-1})1_{\fE_{t-1}}
\le (1-\frac{\eta}{\cn})^{-t}(1-\frac{\eta}{2\cn})^t \eta O(\mu d k \cn)1_{\fE_{t-1}}\label{p1_F_g}
\end{equation}

\paragraph{Variance bound:}
We also know 
\begin{align}
\Var(F_{t+1}| \fF_t) = (1-\frac{\eta}{\cn})^{-2(t+1)}[\E Q_1^21_{\fE_{t}} - (\E Q_1 1_{\fE_{t}})^2]
\le (1-\frac{\eta}{\cn})^{-2(t+1)} \E [Q_1^2 1_{\fE_{t}}|\fF_t] \nn
\end{align}
By Proposition \ref{prop:F}, we know that $\E [Q_1^2 1_{\fE_{t}}|\fF_t] \le \eta^2 O(\mu d k \cn^2) f^2(\U_{t}, \V_t)1_{\fE_{t}}$. This gives:
\begin{equation}
\Var(F_t|\fF_{t-1}) \le (1-\frac{\eta}{\cn})^{-2t} \eta^2 O(\mu d k \cn^2) f^2(\U_{t-1})1_{\fE_{t-1}}
\le (1-\frac{\eta}{\cn})^{-2t} (1-\frac{\eta}{2\cn})^{2t}\eta^2 O(\frac{\mu d k}{\cn^2}) 1_{\fE_{t-1}} \label{var_F_g}
\end{equation}

\paragraph{Bernstein's inequality:} Let $\sigma^2 = \sum_{\tau=1}^t \Var(F_{\tau} | \fF_{\tau-1})$, and 
$R$ satisfies, with probability 1 that $|F_{\tau} - \E [F_{\tau}||\fF_{\tau-1}]| \le R, ~\tau = 1, \cdots, t$. Then 
By standard Bernstein concentration inequality, we know:
\begin{equation*}
P(F_{t} \ge F_{0} + s ) \le \exp(\frac{s^2/2}{\sigma^2 + Rs / 3})
\end{equation*}
Let $s' = O(1)(1- \frac{\eta}{\cn})^t [\sqrt{\sigma^2 \log d} + R\log d]$, this gives:
\begin{equation*}
P(f(\tU, \tV)1_{\fE_{t-1}} \ge (1- \frac{\eta}{\cn})^t f(\U_0)+ s') \le \frac{1}{3d^{10}}
\end{equation*}
By Eq.\eqref{p1_F_g}, we know $R = (1-\frac{\eta}{\cn})^{-t} (1-\frac{\eta}{2\cn})^t \eta O(\mu d k \cn)$ satisfies that $|F_{\tau} - \E [F_{\tau}|\fF_{\tau-1}]| \le R, ~\tau = 1, \cdots, t$. Also by Eq. \eqref{var_F_g}, we have:
\begin{align}
&(1- \frac{\eta}{\cn})^t \sqrt{\sigma^2 \log d}
\le \eta O(\sqrt{\frac{\mu d k \log d}{\cn^2}})\sqrt{\sum_{\tau=1}^t(1- \frac{\eta}{\cn})^{2t-2\tau}(1- \frac{\eta}{2\cn})^{2\tau}  } \nn \\
\le & (1-\frac{\eta}{2\cn})^t
\eta O(\sqrt{\frac{\mu d k \log d}{\cn^2}})\sqrt{\sum_{\tau=1}^t(1- \frac{\eta}{\cn})^{2t-2\tau}(1- \frac{\eta}{2\cn})^{2\tau-2t}  } 
\le (1-\frac{\eta}{2\cn})^t \sqrt{\eta} O(\sqrt{\frac{\mu d k \log d}{\cn}}) \nn
\end{align}
by $\eta < \frac{c}{\mu dk \cn^3 \log d}$ and choosing $c$ to be small enough, we have:
\begin{equation*}
s' = (1-\frac{\eta}{2\cn})^t [\sqrt{\eta} O(\sqrt{\frac{\mu d k \cn \log d}{\cn}}) + \eta O(\mu d k \cn)]
\le (1-\frac{\eta}{2\cn})^t(\frac{1}{20\cn})^2
\end{equation*}
Since $F_{0} = f(\U_0) \le (\frac{1}{20\cn})^2$, therefore:
\begin{equation*}
P(f(\tU, \tV)1_{\fE_{t-1}} \ge (1-\frac{\eta}{2\cn})^t(\frac{1}{10\cn})^2) \le \frac{1}{3d^{10}}
\end{equation*}
That is equivalent to:
\begin{equation}
P(\fE_{t-1} \cap \{f(\tU, \tV)\ge (1-\frac{\eta}{2\cn})^t(\frac{1}{10\cn})^2\}) \le \frac{1}{3d^{10}} \label{concen_F_g}
\end{equation}

\paragraph{Probability for event $\fE_T$:} 

Finally, combining the concentration result for martingale $G$ (Eq.\eqref{concen_G_g}) and martingale $F$ (Eq.\eqref{concen_F_g}), we conclude:
\begin{align}
&P(\fE_{t-1} \cap \bar{\fE_t}) \nn \\
=  &P\left[\fE_{t-1} \cap \left([\cup_i \{g_i(\U_t, \V_t) \ge 100 \frac{\mu k \cn^2}{d} \}]\right.\right. \nn \\
&\cup [\cup_j \{h_j(\U_t, \V_t) \ge 100 \frac{\mu k \cn^2}{d} \}]\cup \left.\left.
\{f(\U_t)\ge (1-\frac{\eta}{2\cn})^t(\frac{1}{10\cn})^2\}\right)\right]
\nn \\
\le & 2\sum_{i=1}^d P(\fE_{t-1} \cap \{g_i(\U_t, \V_t)\ge 100\frac{\mu k \cn^2}{d}\})
+ P(\fE_{t-1} \cap \{f(\U_t, \V_t)\ge (1-\frac{\eta}{2\cn})^t(\frac{1}{10\cn})^2\}) \nn \\
\le & \frac{1}{d^{10}} \nn
\end{align}
Since
\begin{equation*}
P(\bar{\fE_T}) = \sum_{t=1}^T P(\fE_{t-1} \cap \bar{\fE_t}) \le \frac{T}{d^{10}}
\end{equation*}
We finishes the proof.
\end{proof}



Finally we give proof for Proposition \ref{prop:G} and Proposition \ref{prop:F}. The proof mostly consistsof expanding every term and careful calculations.

\begin{proof}[Proof of Proposition \ref{prop:G}]
For simplicity of notation, we hide the term $1_{\fE_t}$ in all following equations. Reader should always think every term in this proof multiplied by $1_{\fE_t}$.
Recall that:
\begin{align}
SG(\U, \V) &= 
2d^2(\U\V\trans - \M)_{ij}\begin{pmatrix}
\e_i\e_j\trans \V \\
\e_j\e_i\trans \U
\end{pmatrix} \nn \\
\begin{pmatrix}
\Delta_\U \\ \Delta_\V
\end{pmatrix}
=& -\eta SG(\tU, \tV) = \begin{pmatrix}
\U_{t+1} \\ \V_{t+1}
\end{pmatrix}
-\begin{pmatrix}
\tU \\ \tV
\end{pmatrix} \nn
\end{align}
We first prove first three inequality. Recall that:
\begin{align}
g_l(\tilde{\U}_{t+1}, \tilde{\V}_{t+1}) &=  g_l(\U_{t+1}, \V_{t+1})
= g_l(\tU+\Delta_\U, \tV + \Delta_\V) \nn \\
&= \e_l\trans (\tU+\Delta_\U)( \tV + \Delta_\V)\trans (\tV + \Delta_\V)(\tU+\Delta_\U)\trans \e_l
\nn \\
&= g_l(\tU, \tV) + 2\e_l\trans \Delta_\U \tV\trans \tV\tU\trans \e_l +
2\e_l\trans\tU \tV\trans \Delta_\V \tU \e_l + R_2 \nn \\
&= g_l(\tU, \tV) + R_1 \nn
\end{align}
By expanding the polynomial, we can write out the first order term:
\begin{align}
R_1 - R_2 =& 2\e_l\trans \Delta_\U \tV\trans \tV\tU\trans \e_l +
2\e_l\trans\tU \tV\trans \Delta_\V \tU \e_l \nn \\
=& -4\eta d^2 (\tU\tV\trans - \M)_{ij}\left(
\delta_{il} \e_j\trans \tV \tV\trans \tV\tU\trans \e_l +
(\tU \tV\trans)_{lj} (\tU\tU\trans)_{il}
\right) \nn
\end{align}
Second order term:
\begin{align}
&R_2-R_3 \nn\\
=& \e_l\trans \Delta_\U \tV\trans \tV\Delta_\U\trans \e_l
+ \e_l\trans \tU \Delta_\V\trans \Delta_\V \tU\trans \e_l 
+ 2\e_l\trans \Delta_\U \tV\trans \Delta_\V \tU\trans  \e_l
+ 2\e_l\trans \Delta_\U \Delta_\V\trans \tV\tU\trans  \e_l \nn \\
= & 4\eta^2 d^4(\tU\tV\trans - \M)^2_{ij} \nn \\
&\cdot
\left(
\delta_{il} \norm{\e_j\trans \tV \tV\trans}^2
+ (\tU\tU\trans)_{li}^2
+ 2\delta_{il} (\tV\tV\trans)_{jj}(\tU\tU\trans)_{ii}
+ 2\delta_{il} (\tU\tV\trans)^2_{ij}
\right) \nn
\end{align}
Third order term:
\begin{align}
R_4-R_3 = &
2\e_l\trans \Delta_\U \tV\trans \Delta_\V \Delta_\U\trans \e_l
+ 2\e_l\trans \Delta_\U \Delta_\V\trans \Delta_\V \tU\trans \e_l  \nn\\
=& -16 \eta^3 d^6(\tU\tV\trans - \M)^3_{ij} \delta_{il}
\left(
(\tV\tV)_{jj}(\tU\tV)_{ij} + (\tU\tV)_{ij} (\tU\tU)_{ii}
\right) \nn
\end{align}
Fourth order term:
\begin{align}
R_4 = \e_l\trans \Delta_\U \Delta_\V\trans \Delta_\V \Delta_\U\trans \e_l
= 16\eta^4d^8(\tU\tV\trans - \M)^4_{ij} \delta_{il}(\tU\tV)_{ij}^2 \nn
\end{align}
For the ease of proof, we denote $\chi = \frac{\mu k \cn^2}{d}$, then we know
conditioned on event $\fE_t$, we have: $\max_i \norm{\e_i\trans \tU \tV\trans}^2 \le O(\chi)$, 
and $\max_j \norm{\e_j\trans \tV \tU\trans}^2 \le O(\chi)$.
Some key inequality we need to use in the proof are listed here:
\begin{equation}
\norm{\e_l\trans \tU\tV\trans} 
=\norm{\e_l\trans \tU\tU\trans}
\quad\text{and}\quad
 \norm{\e_l\trans \tV\tU\trans} 
=\norm{\e_l\trans \tV\tV\trans}
\label{aux_1}
\end{equation}
and 
\begin{equation}
|(\tU\tV)_{ij}| \le \norm{\e_i\trans \tU \tV\trans} \le O(\sqrt{\chi})
\label{aux_2}
\end{equation}
The same also holds true for: 
\begin{equation}
|(\tU\tU\trans)_{ii}| \le O(\sqrt{\chi}) \quad \text{and} \quad |(\tV\tV\trans)_{jj}| \le O(\sqrt{\chi})\label{aux_3}
\end{equation}
Another fact we frequently used is:
\begin{equation*}
\frac{1}{2} \norm{\e_i\trans \U \V\trans}^2 \le \norm{\e_i\trans \U}^2 
\le 2\cn \norm{\e_i\trans \U \V\trans}^2
\end{equation*}
This gives:
\begin{align}
\norm{\tU\tV\trans - \M}_\infty
\le \max_k\norm{\e_k \tU}\max_k\norm{\e_k \tV} + \max_k\norm{\e_k \X}\max_k\norm{\e_k \Y} \norm{\S} \le O(\chi \cn) \nn
\end{align}
and recall we choose $\eta < \frac{c}{\mu dk \cn^3 \log d}$, where $c$ is some universal constant, then we have:
\begin{equation}
\eta d^2\norm{\tU\tV\trans - \M}_\infty 
= O(\eta d^2 \chi \cn) \le O(1) \label{aux_4}
\end{equation}

With equation \eqref{aux_1}, \eqref{aux_2}, \eqref{aux_3}, \eqref{aux_4}, now we are ready to prove Lemma.

\paragraph{For the first inequality $\E [R_2 1_{\fE_t}|\fF_t] \le \eta^2 O(\mu^2 k^2 \cn^4)1_{\fE_t}$:}
\begin{align}
\E [R_2 1_{\fE_t}|\fF_t]
\le \E [|R_2-R_3| 1_{\fE_t}|\fF_t] + \E [|R_3-R_4| 1_{\fE_t}|\fF_t] + \E [|R_4| 1_{\fE_t}|\fF_t] \nn
\end{align}
For each term, we can bound as:
\begin{align}
&\E [|R_2-R_3| 1_{\fE_t}|\fF_t] 
\le  \eta^2 O( d^2) \sum_{ij}(\tU\tV\trans - \M)^2_{ij}
\left(\delta_{il} O(\chi)
+ (\tU\tU\trans)_{li}^2
\right) \nn \\
\le & \eta^2 O(d^2)\max_{l'}\norm{\e_{l'}\trans(\tU\tV\trans - \M)}^2
\sum_i\left(\delta_{il} O(\chi)
+ (\tU\tU\trans)_{li}^2 \right)  \le \eta^2 O(d^2\chi^2) \nn \\
&\E [|R_3-R_4| 1_{\fE_t}|\fF_t]
\le 
\eta^3 O(d^4) \sum_{ij}|\tU\tV\trans - \M|^3_{ij} \delta_{il}O(\chi)  \nn \\
\le & \eta^3 O(d^4) \norm{\tU\tV\trans - \M}_\infty \norm{\e_l\trans(\tU\tV\trans - \M)}^2 O(\chi)  \le \eta^2 O(d^2\chi^2) \nn  \\
&\E [|R_4| 1_{\fE_t}|\fF_t] \le 
\eta^4 O(d^6) \sum_{ij}(\tU\tV\trans - \M)^4_{ij} \delta_{il}O(\chi) \nn \\
\le & \eta^4 O(d^6) \norm{\tU\tV\trans - \M}^2_\infty \norm{\e_l\trans(\tU\tV\trans - \M)}^2 O(\chi)\le \eta^2 O(d^2\chi^2) \nn
\end{align}
This gives in sum that 
\begin{equation*}
\E [R_2 1_{\fE_t}|\fF_t] \le \eta^2 O(d^2\chi^2)1_{\fE_{t}}  = \eta^2 O(\mu d k \cn^2) f^2(\U_{t})1_{\fE_{t}}
\end{equation*}

\paragraph{For the second inequality $|R_1|1_{\fE_{t}} \le \eta O(\mu^2k^2 \cn^5)1_{\fE_{t}}  \text{~w.p~} 1$:}
\begin{align}
|R_1|1_{\fE_{t}} \le |R_1-R_2|1_{\fE_{t}}  + |R_2- R_3|1_{\fE_{t}}  + |R_3-R_4|1_{\fE_{t}}  + |R_4|1_{\fE_{t}}  \nn
\end{align}
For each term, we can bound as:
\begin{align*}
|R_1-R_2|1_{\fE_{t}} \le &\eta O(d^2) \norm{\tU\tV\trans - \M}_{\infty}O(\chi)
\le \eta O(d^2\chi^2 \cn) \\
|R_2- R_3|1_{\fE_{t}} \le &\eta^2 O(d^4) \norm{\tU\tV\trans - \M}^2_{\infty}O(\chi)
\le \eta O(d^2\chi^2 \cn) \\
|R_3- R_4|1_{\fE_{t}} \le &\eta^3 O(d^6) \norm{\tU\tV\trans - \M}^3_{\infty}O(\chi)
\le \eta O(d^2\chi^2 \cn) \\
|R_4|1_{\fE_{t}} \le &\eta^4 O(d^8) \norm{\tU\tV\trans - \M}^4_{\infty}O(\chi)
\le \eta O(d^2\chi^2 \cn) 
\end{align*}
This gives in sum that, with probability 1:
\begin{equation*}
|R_1|1_{\fE_{t}}  \le \eta O(d^2\chi^2 \cn)1_{\fE_{t}}   = \eta O(\mu^2k^2 \cn^5)1_{\fE_{t}} 
\end{equation*}

\paragraph{For the third inequality $\E [R_1^2 1_{\fE_{t}}|\fF_{t}] \le \eta^2 O(\frac{\mu^3 k^3 \cn^6}{d})1_{\fE_{t}}$:}
\begin{equation*}
\E R_1^2 1_{\fE_{t}}
\le 4 \left[\E (R_1-R_2)^2 1_{\fE_{t}} + \E (R_2-R_3)^2 1_{\fE_{t}} + \E (R_3-R_4)^2 1_{\fE_{t}} + \E R_4^2 1_{\fE_{t}} \right]
\end{equation*}
For each term, we can bound as:
\begin{align*}
&\E (R_1-R_2)^2 1_{\fE_{t}} \le  \eta^2 O(d^2) \sum_{ij} (\tU\tV\trans - \M)^2_{ij}
\left(
\delta_{il} O(\chi^2) +
(\tU \tV\trans)^2_{lj} (\tU\tU\trans)^2_{il}
\right) \nn \\
\le & \eta^2 O(d^2) \max_{l'}\norm{\e_{l'}\trans(\tU\tV\trans - \M)}^2
\sum_i \left(
\delta_{il} O(\chi^2) +
(\tU \tV\trans)^2_{lj} (\tU\tU\trans)^2_{il}
\right) \le \eta^2 O(d^2\chi^3) \nn \\
&\E (R_2-R_3)^2 1_{\fE_t} 
\le  \eta^4 O( d^6) \sum_{ij}(\tU\tV\trans - \M)^4_{ij}
\left(\delta_{il} O(\chi^2)
+ (\tU\tU\trans)_{li}^4
\right) \nn \\
\le & \eta^4 O(d^6)\norm{\tU\tV\trans - \M}_\infty^2\max_{l'}\norm{\e_{l'}\trans(\tU\tV\trans - \M)}^2
\sum_i\left(\delta_{il} O(\chi^2)
+ (\tU\tU\trans)_{li}^4 \right)  \le \eta^2 O(d^2\chi^3) \nn \\
&\E (R_3-R_4)^2 1_{\fE_t}
\le 
\eta^6 O(d^{10}) \sum_{ij}|\tU\tV\trans - \M|^6_{ij} \delta_{il}O(\chi^2)  \nn \\
\le & \eta^6 O(d^{10}) \norm{\tU\tV\trans - \M}^4_\infty \norm{\e_l\trans(\tU\tV\trans - \M)}^2 O(\chi^2)  \le \eta^2 O(d^2\chi^3)  \\
&\E R_4^2 1_{\fE_t} \le 
\eta^8 O(d^{14}) \sum_{ij}(\tU\tV\trans - \M)^8_{ij} \delta_{il}O(\chi^2) \nn \\
\le & \eta^8 O(d^{14}) \norm{\tU\tV\trans - \M}^6_\infty \norm{\e_l\trans(\tU\tV\trans - \M)}^2 O(\chi^2)\le \eta^2 O(d^2\chi^3) 
\end{align*}
This gives in sum that:
\begin{equation*}
\E R_1^2 1_{\fE_{t}} \le \eta^2 O(d^2\chi^3)1_{\fE_{t}} = \eta^2 O(\frac{\mu^3 k^3 \cn^6}{d})1_{\fE_{t}}
\end{equation*}
This finishes the proof.
\end{proof}

\begin{proof}[Proof of Proposition \ref{prop:F}]
Similarly to the proof of Proposition \ref{prop:G},  we hide the term $1_{\fE_t}$ in all following equations. Reader should always think every term in this proof multiplied by $1_{\fE_t}$. Recall that:
\begin{align*}
f(\tilde{\U}_{t+1}, \tilde{\V}_{t+1}) &= f(\U_{t+1}, \V_{t+1})
= f(\tU + \Delta_\U, \tV + \Delta_\V) \nn \\
&= \tr\left([(\tU + \Delta_\U)(\tV + \Delta_\V) - \M ][(\tU + \Delta_\U)(\tV + \Delta_\V) - \M ]\trans\right) \nn  \\
&= f(\tU, \tV) + 2 \tr( \Delta_\U\tV \trans (\tU\tV\trans - \M)\trans)
+2 \tr(\Delta_\V \tU\trans (\tU \tV\trans - \M)) + Q_2 \nn \\
&= f(\tU, \tV) + Q_1
\end{align*}

By expanding the polynomial, we can write out the first order term:
\begin{align*}
Q_1 - Q_2 =& 2 \tr( \Delta_\U\tV \trans (\tU\tV\trans - \M)\trans)
+2 \tr(\Delta_\V \tU\trans (\tU \tV\trans - \M)) \nn \\
=& -4\eta d^2(\U\V\trans - \M)_{ij}\left(
\e_j\trans \tV\tV\trans (\tU\tV\trans - \M)\trans \e_i
+ \e_i\trans \tU\tU\trans (\tU\tV\trans - \M) \e_j
\right)
\end{align*}
The second order term:
\begin{align*}
&Q_2 - Q_3  \nn \\
=& \tr(\Delta_\U \tV\trans \tV\Delta_\U\trans)
+ \tr(\tU \Delta_\V\trans \Delta_\V \tU\trans)
+ 2\tr(\Delta_\U \tV\trans\Delta_\V \tU\trans)
+ 2\tr(\Delta_\U \Delta_V\trans (\tU\tV\trans - \M)\trans) \nn \\
=& 4\eta^2 d^4(\U\V\trans - \M)^2_{ij}\nn \\
&\cdot
\left(
\norm{\e_j\trans \tV\tV\trans}^2 + \norm{\e_i\trans \tU\tU\trans}^2
+(\tV\tV\trans)_{jj}(\tU\tU\trans)_{ii} + (\tU\tV\trans)_{ij}(\tU\tV\trans - \M)_{ij}
\right)
\end{align*}
The third order term:
\begin{align*}
Q_3 - Q_4 =&
2\tr(\Delta_\U \tV\trans\Delta_\V\Delta_\U\trans)
+ 2\tr(\Delta_\U \Delta_\V\trans \Delta_\V\tU\trans) \nn \\
=&-16 \eta^3 d^6(\U\V\trans - \M)^3_{ij}\left(
(\tV\tV)_{jj}(\tU\tV)_{ij} + (\tU\tV)_{ij} (\tU\tU)_{ii} \right)
\end{align*}
The forth order term:
\begin{align*}
Q_4 = \tr(\Delta_\U \Delta_\V\trans\Delta_\V\Delta_\U\trans)
=16\eta^4d^8(\U\V\trans - \M)^4_{ij}(\tU\tV)_{ij}^2
\end{align*}

Again, in addition to equation \eqref{aux_1}, \eqref{aux_1}, \eqref{aux_2}, \eqref{aux_3}, we also need following inequality:
\begin{align}
&\norm{\tU\tV\trans - \M}_{\infty}1_{\fE_t} = \max_{ij} |\tr(\e_i\trans (\tU\tV\trans - \M) \e_j)|1_{\fE_t} \nn\\
= &\max_{ij} |\tr(\e_i\trans (\proj_{\X} + \proj_{\X_\perp}) (\tU\tV\trans - \M) \e_j)|1_{\fE_t} \nn\\
\le & \max_{ij} |\tr(\e_i\trans \proj_{\X}  (\tU\tV\trans - \M) \e_j)|1_{\fE_t}
+ \max_{ij} |\tr(\e_i\trans \proj_{\X_\perp} \tU\tV\trans \e_j)|1_{\fE_t} \nn \\
\le & \max_{i}\norm{\e_i\trans \X} \norm{\tU\tV\trans-\M}_F1_{\fE_t} + \max_{j}\norm{\e_j\trans \W_\V} \norm{\tU\tV\trans-\M}_F1_{\fE_t} \nn \\
\le & O(\cn\sqrt{\chi})\sqrt{f(\U_t)}
\end{align}
Now we are ready to prove Lemma.

\paragraph{For the first inequality $\E[Q_2 1_{\fE_t} |\fF_t] \le \eta^2 O(\mu dk \cn^2)f(\tU, \tV)1_{\fE_t}$:}
\begin{align*}
\E [Q_2 1_{\fE_t}|\fF_t]
\le \E [|Q_2-Q_3| 1_{\fE_t}|\fF_t] + \E [|Q_3-Q_4| 1_{\fE_t}|\fF_t] + \E [|Q_4| 1_{\fE_t}|\fF_t]
\end{align*}
For each term, we can bound as:
\begin{align*}
\E [|Q_2-Q_3| 1_{\fE_t}|\fF_t] 
\le & \eta^2 O( d^2) \sum_{ij}(\tU\tV\trans - \M)^2_{ij}
O(\chi) = \eta^2 O(d^2\chi) f(\tU, \tV)\\
\E [|Q_3-Q_4| 1_{\fE_t}|\fF_t]
\le &
\eta^3 O(d^4) \sum_{ij}|\tU\tV\trans - \M|^3_{ij} O(\chi)  \nn \\
\le & \eta^3 O(d^4 \chi) \norm{\tU\tV\trans - \M}_\infty f(\tU, \tV)   \le \eta^2 O(d^2\chi) f(\tU, \tV)  \\
\E [|Q_4| 1_{\fE_t}|\fF_t] \le &
\eta^4 O(d^6) \sum_{ij}(\tU\tV\trans - \M)^4_{ij} O(\chi) \nn \\
\le & \eta^4 O(d^6\chi) \norm{\tU\tV\trans - \M}^2_\infty f(\tU, \tV) \le \eta^2 O(d^2\chi) f(\tU, \tV)
\end{align*}
This gives in sum that 
\begin{equation*}
\E [Q_2 1_{\fE_t}|\fF_t] \le \eta^2 O(d^2\chi) f(\tU, \tV)1_{\fE_{t}}  = \eta^2 O(\mu dk \cn^2) f(\tU, \tV)1_{\fE_{t}}
\end{equation*}

\paragraph{For the second inequality $|Q_1|1_{\fE_t} \le \eta O(\mu d k \cn^{3}) f(\tU, \tV)1_{\fE_{t}} \text{~w.p~} 1 $:}
\begin{align*}
|Q_1|1_{\fE_{t}} \le |Q_1-Q_2|1_{\fE_{t}}  + |Q_2- Q_3|1_{\fE_{t}}  + |Q_3-Q_4|1_{\fE_{t}}  + |Q_4|1_{\fE_{t}} 
\end{align*}
For each term, we can bound as:
\begin{align*}
|Q_1-Q_2|1_{\fE_{t}} \le &
\eta O(d^2)\norm{\tU\tV\trans - \M}_{\infty}\norm{\tU\tV\trans - \M}_{F} O(\sqrt{\chi})
\le \eta O(d^2\chi \cn)f(\tU, \tV) \nn \\
|Q_2- Q_3|1_{\fE_{t}} \le &\eta^2 O(d^4) \norm{\tU\tV\trans - \M}^2_{\infty}O(\chi)
\le \eta^2 O(d^4\chi^2 \cn^2) f(\tU, \tV) =\eta O(d^2\chi \cn)f(\tU, \tV) \nn \\
|Q_3- Q_4|1_{\fE_{t}} \le &\eta^3 O(d^6) \norm{\tU\tV\trans - \M}^3_{\infty}O(\chi)
\le \eta O(d^2\chi \cn)f(\tU, \tV) \nn \\
|Q_4|1_{\fE_{t}} \le &\eta^4 O(d^8) \norm{\tU\tV\trans - \M}^4_{\infty}O(\chi)
\le \eta O(d^2\chi \cn)f(\tU, \tV)
\end{align*}
This gives in sum that, with probability 1:
\begin{equation*}
|Q_1|1_{\fE_{t}}  \le \eta O(d^2\chi \cn)f(\tU, \tV)1_{\fE_{t}}   = \eta O(\mu d k \cn^{3})f(\tU, \tV)
1_{\fE_{t}} 
\end{equation*}

\paragraph{For the third inequality $\E [Q_1^2 1_{\fE_{t}}|\fF_{t}] \le \eta^2 O(\mu d k \cn^2) f^2(\tU, \tV)1_{\fE_{t}}$:}
\begin{equation*}
\E Q_1^2 1_{\fE_{t}}
\le 4 \left[\E (Q_1-Q_2)^2 1_{\fE_{t}} + \E (Q_2-Q_3)^2 1_{\fE_{t}} + \E (Q_3-Q_4)^2 1_{\fE_{t}} + \E Q_4^2 1_{\fE_{t}} \right]
\end{equation*}
For each term, we can bound as:
\begin{align*}
\E (Q_1-Q_2)^2 1_{\fE_{t}} \le & \eta^2 O(d^2) \sum_{ij} (\tU\tV\trans - \M)^2_{ij}
\norm{\tU\tV\trans - \M}^2_{F} O(\chi)
\le \eta^2 O(d^2\chi) f^2(\tU, \tV) \nn \\
\E (Q_2-Q_3)^2 1_{\fE_t} 
\le & \eta^4 O( d^6) \sum_{ij}(\tU\tV\trans - \M)^4_{ij}O(\chi^2)
 \nn \\
\le & \eta^4 O(d^6\chi^2)\norm{\tU\tV\trans - \M}_\infty^2\norm{\tU\tV\trans - \M}^2_{F}
\le \eta^4 O(d^6\chi^3 \cn^2)f^2(\tU, \tV) \nn \\
\le &\eta^2 O(d^2\chi) f^2(\tU, \tV)
\nn \\
\E (Q_3-Q_4)^2 1_{\fE_t}
\le &
\eta^6 O(d^{10}) \sum_{ij}|\tU\tV\trans - \M|^6_{ij} O(\chi^2)  \nn \\
\le & \eta^6 O(d^{10}\chi^2) \norm{\tU\tV\trans - \M}^4_\infty \norm{\tU\tV\trans - \M}^2_{F} \le \eta^2 O(d^2\chi) f^2(\tU, \tV)\\
\E Q_4^2 1_{\fE_t} \le &
\eta^8 O(d^{14}) \sum_{ij}(\tU\tV\trans - \M)^8_{ij} \delta_{il}O(\chi^2) \nn \\
\le & \eta^8 O(d^{14}\chi^2) \norm{\tU\tV\trans - \M}^6_\infty \norm{\tU\tV\trans - \M}^2_{F}
\le \eta^2 O(d^2\chi) f^2(\tU, \tV)
\end{align*}
This gives in sum that:
\begin{equation*}
\E Q_1^2 1_{\fE_{t}} \le \eta^2 O(d^2\chi) f^2(\tU, \tV)1_{\fE_{t}} = \eta^2 O(\mu d k \cn^2) f^2(\tU, \tV)1_{\fE_{t}}
\end{equation*}
This finishes the proof.
\end{proof}

\end{document}